\newtheorem{theorem}{Theorem}
\newtheorem{proposition}{Proposition}
\newtheorem{lemma}{Lemma}
\def\R{\mathbb{R}}
\def\1{\mathbbm{1}}
\def\a{\mathbf{a}}
\def\m{\mathbf{m}}
\def\b{\mathbf{b}}
\def\P{\mathbf{P}}
\def\p{\mathbf{p}}
\def\C{\mathbf{C}}
\def\D{\mathbf{D}}
\def\I{\mathbf{I}}
\def\mX{\mathcal{X}}
\newcommand{\Ex}{{\rm I\kern-.3em E}}
\newcommand{\argmin}{\mathop{\mathrm{arg\,min}}}
\newcommand{\eg}{{\em e.g.,~}}
\newcommand{\rev}[1]{\textcolor{black}{#1}}
\newif\iflongversion
\newcommand{\longversion}[2]{\iflongversion #1 \else #2  \fi}
\begin{document}

\title{Optimal Transport for Conditional Domain Matching \\ and Label Shift }

\author{A. Rakotomamonjy\footnote{Criteo AI Lab, Paris, France, alain.rakoto@insa-rouen.fr}      \and
	R. Flamary\footnote{CMAP, Ecole Polytechnique} \and
	G.~Gasso\footnote{LITIS, INSA de Rouen Normandie} \and
	M. El Alaya\footnote{LMAC, Université Technologique de Compiègne} \and
	M. Berar\footnote{LITIS, Université de Rouen Normandie} \and
	N.~Courty\footnote{IRISA, Université de Bretagne Sud}
}

\maketitle

\begin{abstract}
We address the problem of unsupervised domain adaptation
under the setting of generalized target shift (joint class-conditional  and label shifts). For this framework, we theoretically show that, for good generalization, it is necessary to learn a latent representation in which both marginals and class-conditional distributions are aligned across domains. For this sake,
we propose a learning problem
that minimizes importance weighted loss in the source domain and a Wasserstein distance between weighted marginals. For a proper weighting, we provide an estimator of target label proportion by blending mixture  estimation and optimal matching by optimal transport. This estimation comes with theoretical guarantees of correctness under mild assumptions. 
Our experimental results show that our method performs better on average than competitors across a range domain adaptation problems including \emph{digits},\emph{VisDA} and \emph{Office}. Code for this paper is available at \url{https://github.com/arakotom/mars_domain_adaptation}.
\end{abstract}

\section{Introduction}

\label{sec:intro}

{

Unsupervised Domain Adaptation (UDA) is a machine learning subfield that aims at
addressing issues due to the discrepancy of train/test, also denoted as source/test, data distributions.  
There exists a large amount of literature addressing the UDA problem under different assumptions. One of the most studied setting is based on the 
covariate shift assumption (\rev{marginal distributions on source and target} $p_S(x) \neq p_T(x)$ and \rev{conditional distributions} $p_S(y|x) = p_T(y|x))$
for which methods perform importance weighting \cite{sugiyama2007covariate} or aim at aligning the marginal distributions in some learned feature space \cite{pan2010domain,long2015learning,pmlr-v37-ganin15}. Target shift, also denoted as label shift \citep{scholkopf2012} assumes that \rev{ for the class prior probability}, $p_S(y) \neq p_T(y)$ while, for \rev{the class-conditional distributions}, we have $p_S(x|y) = p_T(x|y)$.
For this problem, most works seek at estimating either the ratio $p_T(y)/p_S(y)$
or the label proportions  \citep{lipton2018detecting,azizzadenesheli2018regularized,shrikumar2020adapting,pmlr-v89-li19b,pmlr-v89-redko19a}.

However as most models now learn the latent representation space, in practical
situations we have both a label shift ($p_S(y) \neq p_T(y)$) and class-conditional probability shift ($p_S(z|y) \neq p_T(z|y)$, $z$ being
a vector in the latent space).
For this {more general} DA assumption, denoted as generalized target shift, fewer works have been proposed. \citet{zhang2013domain} 
have been \rev{among} the first authors that proposed a methodology for handling both shifts. They used a kernel embedding of distributions for estimating importance
weights and for transforming samples so as to match class-conditional  distributions. \citet{gong2016domain} follow similar idea by assuming that there exists a linear mapping that maps source class-conditionals to the target ones.
For addressing the same problem \citet{pmlr-v97-wu19f} introduced a so-called asymmetrically-relaxed distance on distributions that allows to
mitigate the effect of label shift when aligning marginal distributions.
Interestingly, they also show that, when marginals in the latent space are aligned, error in the target domain is lower-bounded
by the mismatch of label distributions between the two domains. Recently, \citet{combes2020domain} have presented a theoretical analysis of this problem
showing that target generalization can be achieved by matching label proportions 
and class-conditionals in both domains. The key component of their algorithm relies on a importance weight estimation of the label distributions. Unfortunately, although
relevant in practice, their label distribution estimator got theoretical guarantee only when class conditionals match across domains
 and empirically  breaks as soon as class conditionals mismatch becomes large enough. 

Our work addresses  UDA with generalized target shift and
we make the following  contributions. 
From a theoretical side, we introduce a bound which clarifies the role of the label shift and class-conditional shift in the target generalization error bound. 
Our theoretical analysis emphasizes
the importance of learning with same label distributions in source and target domains while seeking at minimizing class-conditional shifts in a latent space. Based on this theory, we derive a learning problem and an algorithm which aims
at minimizing Wasserstein distance between weighted marginals while ensuring
low empirical error in a weighted source domain. Since  a weighting scheme
requires the knowledge of the label distribution in the target domain, we solve this estimation problem by blending a consistent mixture proportion estimator and an optimal matching assignment problem. While conceptually simple, our strategy is  supported by  theoretical guarantees of correctness.
Then, given the estimated label proportion in the target domain, we theoretically
show that finding a latent space in which the Wasserstein distance \rev{between the weighted source marginal distribution and the target one is zero,} guarantees that class-conditionals are also matched. 
We illustrate in our
experimental analyses how  our
algorithm (named MARS from Match And Reweight Strategy)  copes with label and class-conditional shifts
and show that it performs better than other generalized target shift competitors on several UDA
 problems.  
}

\section{Notation and Background}
\label{sec:related}

Let $\mathcal{X}$ and $\mathcal{Y}$ be the input and output space.
We denote by $\mathcal{Z}$ the latent space and
 $\mathcal{G}$ the class of representation mappings from $\mathcal{X}$ to $\mathcal{Z}$. 
 Similarly, $\mathcal{H}$ represents the hypothesis space, which
 is a set of functions from $\mathcal{Z}$ to $\mathcal{Y}$. A labeling function $f$ is a function from $\mathcal{X}$ to $\mathcal{Y}$. Elements of $\mathcal{X}$, $\mathcal{Y}$ and $\mathcal{Z}$ are respectively noted
 as $x$, $y$ and $z$. For our UDA problem,
we assume
a learning problem with  source and target domains and respectively note as
$p_S(x,y)$ and $p_T(x,y)$  their joint distributions of features and labels.
We have at our disposal a labeled source dataset $\{(x_i^s,y_i^s)\}_{i=1}^{n_s}$ with $y_i^s \in \{1\dots C\}$ \rev{(or $\{0,1\}$ for binary classification)}
and only unlabeled examples from the target domain $\{x_i^t\}_{i=1}^{n_t}$ with all
$x_i \in \mathcal{X}$, sampled \emph{i.i.d} from their respective distributions.
We refer to the marginal distributions of the source and target domains in the latent space as
$p_S^g(z)$ and $p_T^g(z)$. Class-conditional
probabilities in the latent space and label proportion for class $j$ will be respectively noted as {$p_U^{j} \triangleq p_U(z|y=j)$ and $p_U^{y=j} \triangleq p_U(y=j)$} with $U \in \{S,T\}$. 
{Finally, we defer proofs of the theoretical results  to the appendix.}

\subsection{Domain Adaptation Framework}
Since the seminal work of \citet{pan2010domain,long2015learning,pmlr-v37-ganin15}, a common formulation of the covariate shift domain
adaptation problem is to learn a mapping of the source and target samples into
a latent representation space where the distance between their marginal
distributions is minimized and to learn a hypothesis that correctly predicts labels
of samples in the source domain. This typically translates into
the following optimization problem:
\begin{equation}
\label{eq:optimisation}
\min_{h,g} { \frac{1}{n} \sum_{i=1}^{n_s} L(y_i^s, h(g(x_i^s))}) 
+ {\lambda D(p_S^g, p_T^g)} + \Omega(h,g)
\end{equation}
where $h(\cdot)$ is the hypothesis, $g(\cdot)$ a representation mapping and  $L(\cdot,\cdot) : \mathcal{Y} \times \mathcal{Y} \mapsto \R^+$ is a continuous loss function differentiable
on its second parameter and $\Omega$
a regularization term. Here, $D(\cdot,\cdot)$ is a distance metric 
between distributions that measures discrepancy between source and 
target marginal distributions as mapped in a latent space  induced by $g$. {Most used distance
measures are MMD \cite{tzeng2014deep}, Wasserstein distance \cite{shen2018wasserstein} or Jensen-Shannon distance \cite{ganin2016domain}.

\subsection{Optimal Transport (OT)}

We provide here some background on optimal transport  
as it will be a key concept for assigning label proportion. More details can be found in \citet{peyre2019computational}.
Optimal transport measures the distance between two distributions over
a space $\mathcal{X}$ given a transportation cost $c: \mX \times \mX
\rightarrow \R^+$. It seeks for an optimal coupling between the two measures
that minimizes a transportation cost. In a discrete case, we denote the
two measures as $\mu = \sum_{i=1}^n a_i \delta_{x_i}$ and 
$\nu = \sum_{i=1}^m b_i \delta_{x_i^\prime}$. The 
Kantorovitch relaxation
of the OT problem seeks for a transportation coupling $\P$ that minimizes the problem 
\begin{equation}\label{eq:wd}
\min_{\P \in \Pi(\a,\b)} \langle \C, \P \rangle
\end{equation}
where $\C \in \R^{n \times m}$ is the matrix of all pairwise
costs, $\C_{i,j} = c(x_i,x_j^\prime)$ and 
$
\Pi(\a,\b) = \{\P \in \R_+^{n\times m}| \P \mathbf{1} = \a, \P^\top \mathbf{1} = \b \}
$ is the transport polytope between the two distributions. The above problem is known as the discrete optimal
transport problem and in the specific case where $n=m$
and the weights $\a$ and $\b$ are positive and uniform then the solution of
the above problem is a {scaled} permutation matrix \citep{peyre2019computational}.
One of the key features of OT that we are going to exploit for
solving the domain adaptation problem is its ability to find
correspondences between {samples} in an unsupervised way by exploiting the {underlying space geometry}. These features
have been for instance exploited for unsupervised word translation \cite{pmlr-v89-alvarez-melis19a,alaux19}.

\section{Theoretical Insights}
\label{sec:mars}

In this work, we are interested in a situation where both 
class-conditional  and label shifts occur between source and target distributions \emph{i.e} there exists some $j$ so that  $p_S(z|y=j) \neq  p_T(z|y=j)$ and $p_S^{y=j} \neq p_T^{y=j}$. Because we have these two sources of mismatch, the resulting  domain adaptation problem is difficult and aligning marginals is not sufficient \citet{pmlr-v97-wu19f}.

For better understanding  the key aspects of the problem,  we provide an upper bound on the
target generalization error which exhibits the role of class-conditional and label distribution mismatches.
For a sake of simplicity, we will consider binary classification problem.
Let $\mathcal{X}$ be the input space and assume that \rev{the function $f: \mathcal{X} \mapsto \{0,1\}$ is the domain-invariant labeling function}, which is
a classical assumption in DA \citep{pmlr-v97-wu19f,shen2018wasserstein}.
For a domain $U$, with $U=\{S,T\}$, the induced marginal probability of samples
in $\mathcal{Z}$ is formally defined  as $p_U^g(A)=p_U(g^{-1}(A))$ for any subset $A \subset \mathcal{Z}$ and $g^{-1}(A)$ being potentially a set ($p_U^g(A)$ is thus the push-forward of $p_U(x)$ by $g(\cdot)$). Similarly, we define the conditional distribution
$g_U(\cdot|z)$ such that $p_U(x) = \int g_U(x|z) p_u^g(z) dz$ holds for all $x \in \mathcal{X}$. For  a representation mapping $g$, an hypothesis $h$ and the labeling function $f$, the expected risk is defined as 
$\varepsilon_{U}(h \circ g,f) \triangleq \mathbb{E}_{x \sim p_U}[|h(g(x)) - f(x)|] 
=  \mathbb{E}_{z \sim p_U^z}[|h(z) - f_U^g(z)|] \triangleq \varepsilon_{U}^z(h,f_U^g) $
with $f_U^g$ being a domain-dependent labeling function defined as
$f_U^g(z)=\int f(x)g_U(x|z)dx$.

Now, we are in position to derive a bound on the target error but first, we introduce a key intermediate result.

\begin{lemma} \label{lemme:bound} Assume two marginal distributions $p_S^g$ and $p_T^g$, with $p_U^g = \sum_{k=1}^C p_U^{y=k} p_{U}^k$, $U=\{S,T\}$.
	For all $p_T^y$, $p_S^y$ and for any continuous class-conditional density distribution $p_S^k$ and $p_T^k$ such that for all $z$ and $k$, we have $p_S(z|y=k)>0$ and $p_S(y=k)>0$,
	the  inequality 
	$
	\sup_{k,z} [w(z) S_k(z)] \geq 1
	$ holds
	with $S_k(z) = \frac{p_T(z|y=k)}{p_S(z|y=k)}$ and $w(z) = \frac{p_T^{y=k}}{p_S^{y=k}}$, if $z$ is of class $k$.
\end{lemma}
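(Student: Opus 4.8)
The plan is to argue by contradiction, converting the statement about the supremum into a pointwise inequality between the two latent marginal densities and then contradicting the fact that both marginals integrate to one.

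First I would rewrite the quantity inside the supremum. For a point $z$ assigned to class $k$, the product is $w(z)\,S_k(z) = \frac{p_T^{y=k}}{p_S^{y=k}}\cdot\frac{p_T(z|y=k)}{p_S(z|y=k)} = \frac{p_T^{y=k}\,p_T(z|y=k)}{p_S^{y=k}\,p_S(z|y=k)}$, i.e.\ the ratio of the joint densities $p_T(z,y=k)/p_S(z,y=k)$. The assumptions $p_S(z|y=k)>0$ and $p_S(y=k)>0$ are exactly what guarantee that this ratio is well defined for every $k$ and every $z$, so the supremum is taken over a meaningful, everywhere-finite expression.

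Then I would suppose, for contradiction, that $\sup_{k,z}[w(z)\,S_k(z)]<1$. This forces $p_T^{y=k}\,p_T(z|y=k) < p_S^{y=k}\,p_S(z|y=k)$ for every class $k$ and every $z$. Summing these $C$ strict inequalities over $k$ and invoking the mixture decomposition $p_U^g(z)=\sum_{k=1}^C p_U^{y=k}\,p_U(z|y=k)$ (for $U\in\{S,T\}$) yields the pointwise strict inequality $p_T^g(z) < p_S^g(z)$ for all $z$. Finally I would integrate over $z$: because $p_S^g - p_T^g$ is continuous and strictly positive everywhere, its integral is strictly positive, so $\int p_T^g(z)\,dz < \int p_S^g(z)\,dz$. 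Since both are probability densities integrating to one, this gives $1<1$, a contradiction; hence the supremum must be at least one.

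The argument is short, so the care lies in two places rather than in any heavy computation. The first is correctly interpreting the class-dependent weight $w(z)$ and recognizing that the product collapses to a single joint-density ratio whose numerator and denominator sum (over $k$) back to the two marginals. The second, and the genuinely delicate point, is preserving \emph{strictness} when passing from the finite sum of pointwise inequalities to the integral: here the stated continuity of the class-conditional densities (and hence of the marginals) ensures that a strict pointwise inequality $p_T^g(z)<p_S^g(z)$ integrates to a strict inequality, which is what closes the contradiction.
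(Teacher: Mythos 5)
Your proof is correct, but it takes a different route from the paper's. The paper argues factor by factor: it first shows $\sup_z S_k(z)\geq 1$ for every $k$ (because each pair of class-conditional densities integrates to one), separately shows $\max_k \frac{p_T^{y=k}}{p_S^{y=k}}\geq 1$ (because both sets of priors sum to one), and then combines the two via $\sup_{k,z}[w(z)S_k(z)] = \sup_k \left[ w_k \sup_z S_k(z)\right] \geq \sup_k w_k \geq 1$. You instead collapse the product into the single joint-density ratio $p_T(z,y=k)/p_S(z,y=k)$, negate the conclusion to get strict pointwise domination of every joint density, sum over $k$ through the mixture decomposition to obtain $p_T^g(z) < p_S^g(z)$ everywhere, and contradict the normalization of the latent marginals in one stroke. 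Your version is more economical: one contradiction instead of two, the only normalization it needs is $\int p_U^g = 1$, and it sidesteps the notational awkwardness of the paper's final display, where $w(z)$ inside $\sup_k$ silently means the class-wise constant $w_k$. What the paper's factored argument buys in exchange is the two standalone facts ($\sup_z S_k(z)\geq 1$ for each class, and $\max_k w_k \geq 1$), which directly feed the discussion after the lemma about when the lower bound $1$ is attained, namely when both the label distributions and the class-conditionals match. One minor remark: the continuity you invoke for preserving strictness under integration is sufficient but not necessary — an everywhere-strict inequality between densities integrates strictly on any space of positive measure, since $\{z : p_S^g(z) - p_T^g(z) > 1/n\}$ must have positive measure for some $n$; this matches the (equally informal) use of continuity in the paper's own proof.
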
 
\longversion{
\begin{proof}
	Let first show that for any $k$ the ratio $\sup_z \frac{p_T^k}{p_S^k} \geq 1$. 
	Suppose that there does  not exist a $z$  such that $\frac{p_T^k}{p_S^k} \geq 1$. This means that : $\forall z\,\, p_T^k<p_S^k$. By integrating those positive and continuous functions on their domains lead to the contradiction that the integral of one of them is not equal to 1. Hence, there must exists a $z$ such that $\frac{p_T^k}{p_S^k} \geq 1$. Hence, we indeed have ratio $\sup_z \frac{p_T^k}{p_S^k} \geq 1$.
	
	Using a similar reasoning, we can show that $\max_k \frac{p_T^{y=k}}{p_S^{y=k}} \geq 1 $. For a sake of completeness,
	we provide it here. Assume that $\forall k,\,\,p_T^{y=k}<p_S^{y=k}$.
	We thus have $\sum_k p_T^{y=k}< \sum_k p_S^{y=k}$. Since noth sums should be equal to $1$ leads to a contradiction.
	
	By exploiting these two inequalities, we have :
	$$\sup_{k,z} [w(z) S_k(z)] = \sup_{k} \left [w(z) \sup_z S_k(z)\right]\geq \sup_{k} w(z)  \geq 1$$
	
\end{proof}
}{}
{Intuitively, this lemma says that the maximum ratio between class-conditionals weighted by label proportion ratio is lower-bounded by 1, and that potentially, this bound can be achieved when both
$ p_S^{y=k}=p_T^{y=k}$ and $p_{S}^k = p_{T}^k$.}
 Interestingly,    \citet{pmlr-v97-wu19f}'s results involve a similar term  $\sup_z \frac{p_T^g(z)}{p_S^g(z)}$ for defining their assymetrically-relaxed distribution distance. But we use a finer modeling that allows us to 
explicitly disentangle the role of the class-conditionals and label distribution ratio. In our case, owing to this inequality, we can bound  one of the key term that upper bounds the generalization error in the target domain.
\begin{theorem}\label{th:bound}
	 Under the assumption of Lemma \ref{lemme:bound}, and assuming that any function $h \in \mathcal{H}$ is $K$-Lipschitz and $g$ is a continuous function then
	 for every function  $h$ and $g$, we have
	\begin{align}
	\varepsilon_{T}(h \circ g,f) &\leq \varepsilon_{S}(h \circ g,f)  + 2K \cdot WD_1(p_S^g,p_T^g)
	\nonumber\\ & + \left[1 + \sup_{k,z} w(z)S_k(z))\right] \varepsilon_{S}(h^\star \circ g,f) 
		\nonumber\\ & + \varepsilon_{T}^z(f_S^g,f_T^g) 	\nonumber
	\end{align}
	where $S_k(z)$ and $w(z)$ are as defined in Lemma \ref{lemme:bound}, $h^\star = \argmin_{h \in \mathcal{H}} \varepsilon_{S}(h \circ g;f)$ 
	and $\varepsilon_{T}^z(f_S^g,f_T^g) = \mathbb{E}_{z \sim p_T^z}[|f_T^g(z) - f_S^g(z)|] $ and $WD_1$ defined through its dual form as 			\begin{equation} \nonumber
	WD_1(p_S^g,p_T^g) = \sup_{\|v\|_L  \leq 1} \mathbf{E}_{z \sim p_S^g}w(z)v(z) - 	\mathbf{E}_{z \sim p_T^g} v(z)  
	\end{equation}
	with $w(\cdot)=1$. 
\end{theorem}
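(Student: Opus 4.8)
The plan is to transfer the whole quantity into the latent space $\mathcal{Z}$ and then chain three triangle inequalities together with one change-of-measure step and one Kantorovich--Rubinstein duality step. First I would invoke the identity $\varepsilon_{T}(h\circ g,f)=\varepsilon_{T}^z(h,f_T^g)=\mathbb{E}_{z\sim p_T^g}[|h(z)-f_T^g(z)|]$ established in the excerpt, so that both the risk and the distributional objects live on $\mathcal{Z}$. Writing $\varepsilon_U^z(a,b)=\mathbb{E}_{z\sim p_U^g}[|a(z)-b(z)|]$, this is an $L_1$-type pseudometric on functions for each fixed domain $U$ (it inherits the triangle inequality from the pointwise bound $|a-c|\le|a-b|+|b-c|$). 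Inserting first the source labeling function $f_S^g$ and then the source-optimal hypothesis $h^\star$ yields
\begin{equation}\nonumber
\varepsilon_{T}^z(h,f_T^g)\;\le\;\underbrace{\varepsilon_{T}^z(h,h^\star)}_{(\mathrm{I})}+\underbrace{\varepsilon_{T}^z(h^\star,f_S^g)}_{(\mathrm{II})}+\underbrace{\varepsilon_{T}^z(f_S^g,f_T^g)}_{(\mathrm{III})}.
\end{equation}
Term $(\mathrm{III})$ is already the last summand of the claimed bound, so it remains to control $(\mathrm{I})$ and $(\mathrm{II})$.

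To bound $(\mathrm{II})$ I would carry out a change of measure from $p_T^g$ to $p_S^g$ that exploits the class decomposition $p_U^g=\sum_k p_U^{y=k}p_U^k$. For any nonnegative integrand $\phi$, substituting $p_T^k=S_k\,p_S^k$ and $p_T^{y=k}=\tfrac{p_T^{y=k}}{p_S^{y=k}}p_S^{y=k}$ and using positivity of the source class-conditionals and priors (the standing assumption of Lemma~\ref{lemme:bound}), one obtains
\begin{equation}\nonumber
\mathbb{E}_{z\sim p_T^g}[\phi]=\sum_k p_S^{y=k}\!\int \phi(z)\,[w(z)S_k(z)]\,p_S^k(z)\,dz\;\le\;\Big(\sup_{k,z} w(z)S_k(z)\Big)\,\mathbb{E}_{z\sim p_S^g}[\phi].
\end{equation}
Applying this with $\phi(z)=|h^\star(z)-f_S^g(z)|\ge 0$ converts $(\mathrm{II})$ into $\big(\sup_{k,z} w(z)S_k(z)\big)\,\varepsilon_{S}(h^\star\circ g,f)$, which is exactly the class-conditional and label-shift part of the bound. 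I expect this to be the main obstacle, as it is the only step that genuinely uses the mixture structure and the boundedness of the weighted ratio; the care here is to keep $\phi$ nonnegative so that the supremum can legitimately be pulled out of the integral, and to ensure the ratios are well defined via the positivity hypotheses.

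For $(\mathrm{I})$ I would transport from the target to the source measure through the Wasserstein term. Since $h,h^\star\in\mathcal{H}$ are both $K$-Lipschitz, their difference is $2K$-Lipschitz and hence so is $z\mapsto|h(z)-h^\star(z)|$ (taking the absolute value is $1$-Lipschitz); therefore $\frac{1}{2K}|h-h^\star|$ is admissible in the dual formulation of $WD_1$ with $w\equiv 1$. Kantorovich--Rubinstein duality then gives
\begin{equation}\nonumber
\varepsilon_{T}^z(h,h^\star)-\varepsilon_{S}^z(h,h^\star)\;\le\;2K\cdot WD_1(p_S^g,p_T^g),
\end{equation}
and one final triangle inequality $\varepsilon_{S}^z(h,h^\star)\le \varepsilon_{S}(h\circ g,f)+\varepsilon_{S}(h^\star\circ g,f)$ produces the source risk $\varepsilon_{S}(h\circ g,f)$ together with a second copy of $\varepsilon_{S}(h^\star\circ g,f)$. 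Summing the bounds on $(\mathrm{I})$, $(\mathrm{II})$, $(\mathrm{III})$ and collecting the two $\varepsilon_{S}(h^\star\circ g,f)$ contributions into the factor $1+\sup_{k,z} w(z)S_k(z)$ recovers precisely the stated inequality.
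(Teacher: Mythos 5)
Your proof is correct and follows essentially the same route as the paper's: after the same three-term triangle decomposition into $\varepsilon_T^z(h,h^\star)$, $\varepsilon_T^z(h^\star,f_S^g)$ and $\varepsilon_T^z(f_S^g,f_T^g)$, you handle the Wasserstein term by Kantorovich--Rubinstein duality (the paper cites Shen et al.\ for exactly this inequality, which you re-derive via the $2K$-Lipschitzness of $|h-h^\star|$) and perform the same mixture change of measure as the paper's central step. The only cosmetic difference is that the paper bounds $\varepsilon_S^z(h^\star,f_S^g)+\varepsilon_T^z(h^\star,f_S^g)$ jointly in a single integral to produce the factor $1+\sup_{k,z}w(z)S_k(z)$, whereas you bound the target term alone by $\bigl(\sup_{k,z}w(z)S_k(z)\bigr)\varepsilon_S(h^\star\circ g,f)$ and add the untouched source copy afterwards --- arithmetically the identical bound.
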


\longversion{
\begin{proof}
At first, let us remind the following result due to \citet{shen2018wasserstein}.
Given two probability distributions $p_S^g$ and $p_T^g$, we have
$$
\varepsilon_S^z(h,h^\prime) - \varepsilon_T^z(h,h^\prime) \leq 2K \cdot W_1(p_S^g,p_T^g)
$$
for every hypothesis $h$,\,$h^\prime$ in $\mathcal{H}$. Then, we have the following bound for the target error
	\begin{align}
	\varepsilon_{T}(h \circ g,f) &\leq \varepsilon_{T}(h \circ g,h^\star) + 
	\varepsilon_{T}(h^\star \circ g,f) \label{eq:1}\\
	 &\leq \varepsilon_{T}(h \circ g,h^\star) + \varepsilon_{S}(h \circ g,h^\star) - \varepsilon_{S}(h \circ g,h^\star) +
	\varepsilon_{T}(h^\star \circ g,f) \label{eq:2}\\
	& \leq \varepsilon_{S}(h \circ g,h^\star) + 	\varepsilon_{T}(h^\star \circ g,f)  + 2K W_1(p_S^g,p_T^g) \label{eq:3}\\
	& = \varepsilon_{S}^z(h,h^\star) + 	\varepsilon_{T}^z(h^\star,f_T^g)  + 2K W_1(p_S^g,p_T^g) \label{eq:4} \\
	&  \leq \varepsilon_{S}^z(h,f_S^g) + \varepsilon_{S}^z(h^\star,f_S^g) +	\varepsilon_{T}^z(h^\star,f_T^g)  + 2K W_1(p_S^g,p_T^g) \label{eq:5} \\
	&  \leq \varepsilon_{S}^z(h,f_S^g) + \varepsilon_{S}^z(h^\star,f_S^g) +	\varepsilon_{T}^z(h^\star,f_S^g)+ \varepsilon_{T}^z(f_S^g,f_T^g) + 2K W_1(p_S^g,p_T^g)\label{eq:6}
	\end{align}
where the lines \eqref{eq:1}, \eqref{eq:5}, \eqref{eq:6} have been obtained using triangle inequality, Line \eqref{eq:3} by applying Shen's et al.
above inequality, Line \eqref{eq:4} by using $\varepsilon_{U}(h \circ g,f) =  \varepsilon_{U}^z(h,f_U^g)$.
Now, let us analyze the term $\varepsilon_{S}^z(h^\star,f_S^g) +	\varepsilon_{T}^z(h^\star,f_S^g)$. Denote as $r_S(z) = |h^\star(z)-f_S^g(z)|$. Hence, we have
\begin{align}
	\varepsilon_{S}^z(h^\star,f_S^g) +	\varepsilon_{T}^z(h^\star,f_S^g) &= \int r_S(z) [p_S^g(z) + p_T^g(z)] dz \\
	&= \sum_k p_S(y=k) \int r_S(z)p_S^g(z|y=k)\big[1 + \frac{p_T(y=k)}{p_S(y=k)}S_k(z)\big] dz \label{eq:10}\\
	& \leq \left(1 + \max_{k,z} [w(z) S_k(z)] \right) 	\varepsilon_{S}(h^\star,f_S^g)
\end{align}
where Line \eqref{eq:10} has been obtained by expanding marginal distributions. Merging the last inequality into Equation \eqref{eq:6} concludes the proof.
\end{proof}
}{}

Let us analyze the terms that bound the target generalization error. The first term $\varepsilon_{S}(h \circ g,f) \triangleq \varepsilon_{S}^z(h,f_S^g)$ can be understood as the error induced by the hypothesis $h$ and the mapping $g$. This term is controllable through an empirical risk minimization approach as 
we have some supervised training data available from the source domain. The second term is the Wasserstein distance between the marginals of the source and target distribution in the latent space. Again, this can be minimized based on empirical examples and the Lipschitz constant $K$ can be controlled either by regularizing the model $g(\cdot)$ or by properly setting the architecture of the neural network model used for $g(\cdot)$. The last term $\varepsilon_{T}(f_S^g,f_T^g)$ is not directly controllable \citep{pmlr-v97-wu19f} but it becomes
zero if the latent space labelling function is domain-invariant which is a reasonable assumption especially when latent joint distributions of the source and target domains are equal. The remaining term that we have to analyze is 
$\sup_{k,z} [w(z) S_k(z)]$ which according to Lemma \ref{lemme:bound} is lower bounded by 1. This lower bound is attained when the label distributions are equal and class-conditional distributions are all equal and
in this case, the joint distributions in the source and target domains are equal and thus  $\varepsilon_{T}^z(f_S^g,f_T^g)=0$.

\section{Match and Reweight Strategy}

\subsection{The  Learning Problem}
The bound in Theorem \ref{th:bound} suggests that a good model
should: i) look for a latent representation mapping $g$ and a hypothesis $h$ that generalizes
well on the source domain, ii) have minimal Wasserstein distance between marginal distributions
of the latent representations while having class-conditional probabilities that match, and iii)  learn from source data with equal label proportions as the target so as to have $w(z)=1$ for all $z$.
For yielding our learning problem, we will translate these properties into an optimization problem.

{At first, let us note that one simple and efficient way to handle mismatch in label distribution is to consider importance weigthing in the source domain. Hence, instead of learning from the  marginal source distribution $p_S = \sum_{k=1}^C p_S^{y=k} p_{S}^k$, we learn from a reweighted version denoted as $p_{\tilde S} =  \sum_{k=1}^C p_T^{y=k} p_S^k$, as proposed by \citet{sugiyama2007covariate,combes2020domain}, so that no label shift occurs between $p_{\tilde S}$ and $p_T$ .
 This approach  needs an estimation of $p_T^{y=k}$ that we will detail in the next subsection, but interestingly, in this case, for Theorem \ref{th:bound}, we will  have $w(z)=
\frac{p_T^{y=k}}{p_{\tilde S}^{y=k}} = \frac{p_T^{y=k}}{ p_T^{y=k}}=1
$.
Then, based on the bound in Theorem \ref{th:bound} applied to $p_{\tilde S}$ and $p_T$,  we propose to learn
 the functions 	$h$ and $g$ by solving the problem
	\begin{equation}
	\label{eq:optimisation2} \small
	\min_{g,h} \frac{1}{n} \sum_{i=1}^{n_s}  w^\dagger(x_i^s) L(y_i^s, h(g(x_i^s))) 
	+  \lambda WD_1(p_{\tilde S}^g, p_T^g) + \Omega(h,g)
	\end{equation}
	where  the importance weight $w^\dagger(x_i^s)=\frac{p_T^{y=y_i}}{ p_S^{y=y_i}}$ allows to simulate sampling from ${p}_{\tilde S}^g$ given
	 ${p}_S^g$, and the discrepancy  between marginals is the  Wasserstein distance
		\begin{equation} 	\label{eq:wddual}
	WD_1(\tilde p_s^g,p_t^g) = \sup_{\|v\|_L  \leq 1} \mathbf{E}_{z \sim p_S^g}w^\dagger(z)v(z) - 	\mathbf{E}_{z \sim p_T^g} v(z).  
	\end{equation}
		 The first term of equation \eqref{eq:optimisation2} corresponds to the empirical loss related to the error $\varepsilon_{\tilde S}$ in Theorem
\ref{th:bound} while the distribution divergence aims at minimizing distance between marginal probabilities, the second term in that theorem. In the next subsections, we will make clear why the Wasserstein distance is used as the divergence and provide conditions and guarantees for having $ WD_1(\tilde p_S^g, p_T^g)=0 \implies WD(p_{S}^k,p_{T}^k)=0$, i.e. perfect class-conditionals matching, and thus $S_k(z) = 1$ for all $k,z$. Recall that in this case, the lower bound on $\max_{k,z} [w(z) S_k(z)]$ will be attained.

Algorithmically, for solving the problem in Equation \eqref{eq:optimisation2}, we employ a classical adversarial learning strategy. It is based on a standard back-propagation strategy using stochastic gradient descent  (detailed in Algorithm \ref{alg:mars}). We estimate the label proportion using Algorithm \ref{alg:pwl} and then use this proportion for computing the importance weights $w(\cdot)$. The first part of the algorithm consists then in computing the weighted Wassertein distance using gradient penalty \citep{gulrajani2017improved}. Once this distance is computed,
we back-propagate the error through the parameters of the feature extractor 
$g$ and the classifier $f$. In practice, we use weight decay as regularizer $\Omega$ over
the representation mapping and classifier functions $g$ and $h$.

\begin{algorithm}[t]
	\caption{Training the full MARS model }
	\label{alg:mars}
	\begin{algorithmic}[1]
		\REQUIRE{ $\{(x_i^s,y_i^s)\}, \{x_i^t\}$, number of classes $C$,  batch size $B$, number of critic iterations $n$}
				\STATE Initialize representation mapping $g$, the classifier $h$ and the domain critic $v(\cdot)$, with parameters $\theta_h$, $\theta_g$, $\theta_v$
		\REPEAT	
		\STATE estimate $\p_T$ from $\{x_i^t\}$ using Algorithm \ref{alg:pwl}   \hfill\COMMENT{done every $10$ iterations}
		\STATE sample minibatches $\{(x_B^s,y_B^s)\}, \{x_B^t\}$ from $\{(x_i^s,y_i^s)\}$ and $\{x_i^t\}$
		
		\STATE  compute $\{w_i^\dagger\}_{i=1}^C$ given $\{(x_B^s,y_B^s)\}$ and $\p_T$
		\FOR {$t=1,\cdots, n$} 
		\STATE $z^s \leftarrow g(x_B^s)$,\,\,$z^t \leftarrow g(x_B^t)$
				\STATE compute gradient penalty $\mathcal{L}_{\text{grad}}$ 				\STATE compute empirical Wasserstein dual loss $\mathcal{L}_{wd} = \sum_i w^\dagger(z_i^s)v(z_i^s)  -  \frac{1}{B}\sum_i v(z_i^t) $ 
		\STATE $\theta_v \leftarrow \theta_v + \alpha_v \nabla_{\theta_v} [\mathcal{L}_{wd} - \mathcal{L}_{\text{grad}}]$
		\ENDFOR
		\STATE compute the weighted classification loss $\mathcal{L}_w = \sum_i w^\dagger(z_i^s) L(y_i^s,h(g(x_i^s)))$
		\STATE $\theta_h \leftarrow \theta_h - \alpha_h \nabla_{\theta_h} \mathcal{L}_w  $
		\STATE $\theta_g \leftarrow \theta_g - \alpha_g \nabla_{\theta_g} [\mathcal{L}_{w} + {\lambda}  \mathcal{L}_{wd}]$
		 		\UNTIL{a convergence condition is met} 
	\end{algorithmic}
\end{algorithm}

\subsection{Estimating Target Label Proportion Using Optimal Assignment}

The above learning problem needs an estimation of $P_T(y)$ for weighting the classification loss and for computing the Wasserstein distance between $ p_{\tilde S}^g$ and $p_T^g$. Several approaches exist for estimating $p_T^y$ when class-conditional distributions in source and target matches \citet{pmlr-v89-redko19a,combes2020domain}. However, this is not the case in our general setting.  Hence, in order to make the problem tractable, we
will introduce some assumptions on the structure and geometry of the  class-conditional distributions in the target domain that allow us to provide guarantee on the correct estimation of $p_T^y$.

For achieving this goal, we first consider the target marginal distribution
as a mixture of models and estimate the  proportions
of the mixture. Next we aim at finding a permutation $\sigma(\cdot)$ that guarantees, under  mild assumptions, correspondence between the class-conditional probabilities of same class in the source and target domain. Then, this permutation allows us to correctly assign a class to each mixture proportion leading to a proper estimation of each class label proportion in the target domain.

In practice, for the first step, we assume that the target distribution is a mixture model with $C$  components $\{p_T^j\}$ and we want to estimate
the mixture proportion of each component.   
For this purpose, we have considered two alternative strategies coming from the literature :
	i) applying agglomerative clustering on the target samples tells us about the membership class of each sample and thus, the resulting clustering provides the proportion of each component in the mixture.
	ii) learning a Gaussian mixture model over the data in the target domain. This gives us both the estimate components $\{p_T^j\}$ and the proportion of  the mixture $\p_u$. Under some conditions on its initialization and assuming the model is well-calibrated, \citet{zhao2020statistical} have shown that the sample estimator asymptotically converges towards the true mixture model.

	\begin{algorithm}[t]
		\caption{Label proportion estimation in the target domain }
		\label{alg:pwl}
		\begin{algorithmic}[1]
			\REQUIRE{ $\{(x_i^s,y_i^s)\}, \{x_i^t\}$, number of classes $C$}
			\ENSURE{$\p_T$ : Estimated label proportion}
			\STATE $\{p_T^j\},\p_u\leftarrow$ Estimate a mixture with $C$ modes and related proportions from $\{x_i^t\}$.

			\STATE $\D\leftarrow$ Given $\mathcal{D}$, compute the matrix pairwise distance $\{p_S^i\}$ and  $\{p_T^j\}$ modes. 						\STATE $\P^\star\leftarrow $ Solve OT problem \eqref{eq:wd} with $\D$ and uniform marginals as in Proposition \ref{prop:dist}.
			\STATE $\p_T \leftarrow C \cdot \P^\star \p_u\quad$ Permute the mixture proportion on source ( $C\cdot\P^\star$ is a permutation matrix)
		\end{algorithmic}
	\end{algorithm}

\paragraph{Matching Class-conditionals With OT}
Since, we do not know to which class each component of the mixture in target domain
is related to, we assume that the conditional distribution in the source
and target domain of the same class can be matched owing to optimal assignment. The resulting permutation  would then help us
assign each label proportion estimated as above to the correct class-conditional. Figure \ref{fig:proposition1} in the appendix illustrates this matching problem.

{Let us  suppose that we have an estimation of all $C$ class-conditional
probabilities on source and target domain (based on  empirical distributions).
We want to solve an optimal assignment problem with respect to the class-conditional probabilities $\{p_S^i\}_{i=1}^C$ and $\{p_T^{j}\}_{j=1}^C$ 
and we clarify under which conditions on distance between class-conditional probabilities, the assignment problem solution achieves a correct matching of classes (\emph{i.e} $
p_S^i$ is correctly assigned to $p_T^{i}$ for all $i$).
Formally,
denote as $\mathbb{P}$ the set of probability distributions over $\R^d$ and assume
a metric over $\mathbb{P}$.
We want to optimally assign a finite number $C$ of probability distributions {of  $\mathbb{P}$} to another set of finite number $C$  of probability distributions {belonging to  $\mathbb{P}$}, in a minimizing distance sense. Based on a  distance  $\mathcal{D}$
between couple of class-conditional probability distributions, the assignment problems  looks for the permutation that solves 
$
\min_\sigma \frac{1}{C}\sum_j \mathcal{D}(p_S^j,p_T^{\sigma(j)}).
$ 
Note that the best permutation $\sigma^\star$ solution to this problem can be
retrieved by solving  a Kantorovitch relaxed version of the optimal transport \citep{peyre2019computational} 
with marginals $\a=\b= \frac{1}{C}\1$. Hence,
this OT-based formulation of the matching problem can be interpreted as
an optimal transport one between discrete measures of probability distributions of the form $\frac{1}{C}\sum_{j=1}^{C} \delta_{p_U^j}$. 
In order to be able to correctly match class-conditional probabilities in source and target domain by optimal assignement,
we ask ourselves:

\textit{Under which conditions the retrieved permutation matrix would correctly match the class-conditionals?
}

In other word, we are looking for conditions of identifiability of classes in the target domain based on their geometry with respect to the classes in source
domain. Our proposition below presents an abstract sufficient condition for
identifiability based on the notion of cyclical monotonicity and then we exhibit
some practical situations in which this property holds.

\begin{proposition}\label{prop:dist}
	Denote as $ \nu = \frac{1}{C}\sum_{j=1}^{C}\delta_{p_S^j}$ and
	$ \mu = \frac{1}{C}\sum_{j=1}^{C}\delta_{p_T^j}$, representing
	respectively the balanced weighted sum of class-conditionals probabilities in source and target domains. 
	Given $\mathcal{D}$ a distance over probability distributions, 
	assume that for any permutation $\sigma$ of $C$ elements, the following assumption,
	known as the $\mathcal{D}$-cyclical monotonicity relation,
	 holds	$$\sum_j \mathcal{D}(p_S^j,p_T^j)\leq \sum_j \mathcal{D}(p_S^j,p_T^{\sigma(j)})$$
		then solving the optimal transport problem between
	 $\nu$ and $\mu$ as defined in equation \eqref{eq:wd}  using  $\mathcal{D}$ as the ground cost matches correctly class-conditional probabilities.
\end{proposition}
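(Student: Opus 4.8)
The plan is to reduce the optimal transport problem between $\nu$ and $\mu$ to a finite assignment problem and then read off the conclusion directly from the hypothesis. First I would form the ground cost matrix $\C \in \R^{C\times C}$ with entries $\C_{ij} = \mathcal{D}(p_S^i, p_T^j)$, so that problem \eqref{eq:wd} with marginals $\a = \b = \frac{1}{C}\1$ becomes $\min_{\P \in \Pi(\frac{1}{C}\1, \frac{1}{C}\1)} \langle \C, \P\rangle$. Note that the atoms of $\nu$ and $\mu$ live in the abstract metric space of probability distributions rather than in $\R^d$, but this is immaterial: the argument only uses the scalar cost values $\mathcal{D}(p_S^i, p_T^j)$. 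The key structural fact, already recalled in Section~2 for uniform marginals and $n=m$, is that the feasible set is, up to the factor $\frac{1}{C}$, the Birkhoff polytope of doubly stochastic matrices, whose vertices are exactly the scaled permutation matrices $\frac{1}{C}P_\sigma$. Since the objective is linear in $\P$, its minimum over this polytope is attained at a vertex, hence at some scaled permutation matrix.

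Next I would evaluate the objective on these vertices. For a permutation $\sigma$ we have $\langle \C, \frac{1}{C} P_\sigma\rangle = \frac{1}{C}\sum_j \mathcal{D}(p_S^j, p_T^{\sigma(j)})$, so the optimal transport value equals $\frac{1}{C}\min_\sigma \sum_j \mathcal{D}(p_S^j, p_T^{\sigma(j)})$, i.e. the value of the assignment problem stated just before the proposition. The $\mathcal{D}$-cyclical monotonicity hypothesis asserts precisely that the identity permutation attains this minimum, since $\sum_j \mathcal{D}(p_S^j, p_T^j) \leq \sum_j \mathcal{D}(p_S^j, p_T^{\sigma(j)})$ for every $\sigma$. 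Therefore $\P^\star = \frac{1}{C}\I$ is an optimal transport plan, which is exactly the statement that $p_S^j$ is transported to $p_T^j$ for every $j$; the matching is correct.

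The only real subtlety — and the step I would be most careful about — concerns uniqueness rather than existence of the correct matching. Because the hypothesis is stated with a non-strict inequality, several permutations may achieve the same minimal total cost, so the problem can admit optimal plans other than $\frac{1}{C}\I$, including non-extremal fractional ones. Thus the precise claim I can establish is that the diagonal assignment is always \emph{among} the optimizers; to guarantee it is the \emph{unique} optimizer one needs the strict version of cyclical monotonicity for all non-identity $\sigma$, and I would state this caveat explicitly. Conceptually, this is the finite-dimensional incarnation of the classical equivalence between $\mathcal{D}$-cyclical monotonicity of the support of a coupling and its optimality; for the discrete, equal-cardinality, uniform-weight setting considered here, the Birkhoff argument yields a self-contained proof and avoids invoking the general Rockafellar / Gangbo--McCann machinery.
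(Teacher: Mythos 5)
Your proof is correct and rests on the same pivot as the paper's: Birkhoff's theorem reduces the OT problem \eqref{eq:wd} with uniform marginals $\a=\b=\frac{1}{C}\1$ to an assignment problem over permutation matrices. The difference lies in how each argument gets from ``some permutation is optimal'' to ``the identity is optimal''. The paper invokes the general result that the support of an optimal plan is $\mathcal{D}$-cyclically monotone (citing Ambrosio--Gigli and Santambrogio, Theorem 1.38) and then identifies the optimal $\sigma^*$ with the identity ``by hypothesis''; you instead evaluate the linear objective directly on the vertices, $\langle \C, \frac{1}{C}\P_\sigma\rangle = \frac{1}{C}\sum_j \mathcal{D}(p_S^j,p_T^{\sigma(j)})$, and read off from the hypothesis that the identity attains the minimum over permutations, hence $\frac{1}{C}\I$ is an optimal plan. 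In this finite, equal-cardinality, uniform-weight setting the two routes are equivalent, but yours is self-contained: the linear-programming vertex computation replaces the external cyclical-monotonicity machinery at no cost. Your uniqueness caveat is also well taken, and it identifies a point the paper's proof glosses over: with the non-strict inequality in the hypothesis, both arguments establish only that the diagonal plan is \emph{among} the optimizers --- the paper's phrase ``$\sigma^*$ can be identified to the identity permutation'' silently resolves ties in favour of the identity --- whereas the operational claim that the OT step of Algorithm \ref{alg:pwl} ``matches correctly'' requires either strict inequalities for all non-identity $\sigma$ or a tie-breaking convention. So your proposal matches the paper's proof in substance, is more elementary at one step, and is more precise about the strictness needed for uniqueness.
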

\longversion{
\begin{proof}
	The solution $\P^*$ of the OT problem lies on an extremal point of $\Pi_C$. Birkhoff's theorem
	\cite{birkhoff:1946} states that the set of extremal points of $\Pi_C$ is the set of permutation 
	matrices so that there exists an optimal solution of the form $\sigma^* :  [1,\cdots,C] \rightarrow [1,\cdots,C]$. 
	The support of $\P^*$ is $\mathcal{D}$-cyclically monotone~\citep{ambrosio2013user,santambrogio2015optimal} (Theorem 1.38), meaning that 
	$ \sum_j^C \mathcal{D} (p_S^j,p_T^{\sigma^*(j)}) \leq  \sum_j^C \mathcal{D} (p_S^j,p_T^{\sigma(j)}), \forall \sigma \neq \sigma^*.$
							Then, by hypothesis, $\sigma^*$ can be identified to the identity permutation, and solving the optimal assignment problem 
	matches correctly class-conditional probabilities.
		\end{proof}
}{}
	While the cyclical monotonicity assumption above can be hard to grasp, there exists several situations where it applies. One condition that is simple and intuitive is when class-conditionals of same source and target  classes are "near" each other in the latent space. More formally, if we assume that
	$
    \forall j\,\, \mathcal{D}(p_S^j,p_T^j) \leq  \mathcal{D}(p_S^j,p_T^k)  \quad \forall\,k
	$, then summing over all possible $j$, 
	and choosing $k$ so that all the couples of $(j,k)$ form a permutation, we recover the cyclical monotonicity condition
	$\sum_j^C \mathcal{D} (p_S^j,p_T^{j}) \leq  \sum_j^ \mathcal{D} (p_S^j,p_T^{\sigma(j)}), \forall \sigma$. 
	Another more general condition on the identifiability of the target class-conditional can be retrieved by exploiting the fact that, for discrete optimal transport with uniform marginals, the support of optimal transport plan satisfies the cyclical monotonicity condition \citep{santambrogio2015optimal}. 
			 This is for instance the case, when $p_S^j$ and
	$p_T^j$ are Gaussian distributions of same covariance matrices and 
	the mean $m_T^j$ of each $p_T^j$ is obtained as a linear symmetric positive definite mapping of the mean $m_S^j$ of  $p_S^j$ and the distance $\mathcal{D}(p_S^j,p_T^{j})$ is
	$\|m_S^j - m_T^j\|_2^2$ 	\citep{courty2016optimal}. This situation would correspond to a linear shift of the class-conditionals of the source domain to get the target ones.
 	Figure \ref{fig:proposition1} illustrates  how our class-conditional matching algorithm performs
 	on a simple toy problem.  	 While our assumptions can be considered as strong, we illustrate in Figure \ref{fig:embeddings}, that the above hypotheses hold
 	for the VisDA problem, and lead afterwards to a correct matching of the class-conditionals.

	\begin{figure*}[t]
		\begin{center}
			~\hfill
			\includegraphics[width=5.5cm]{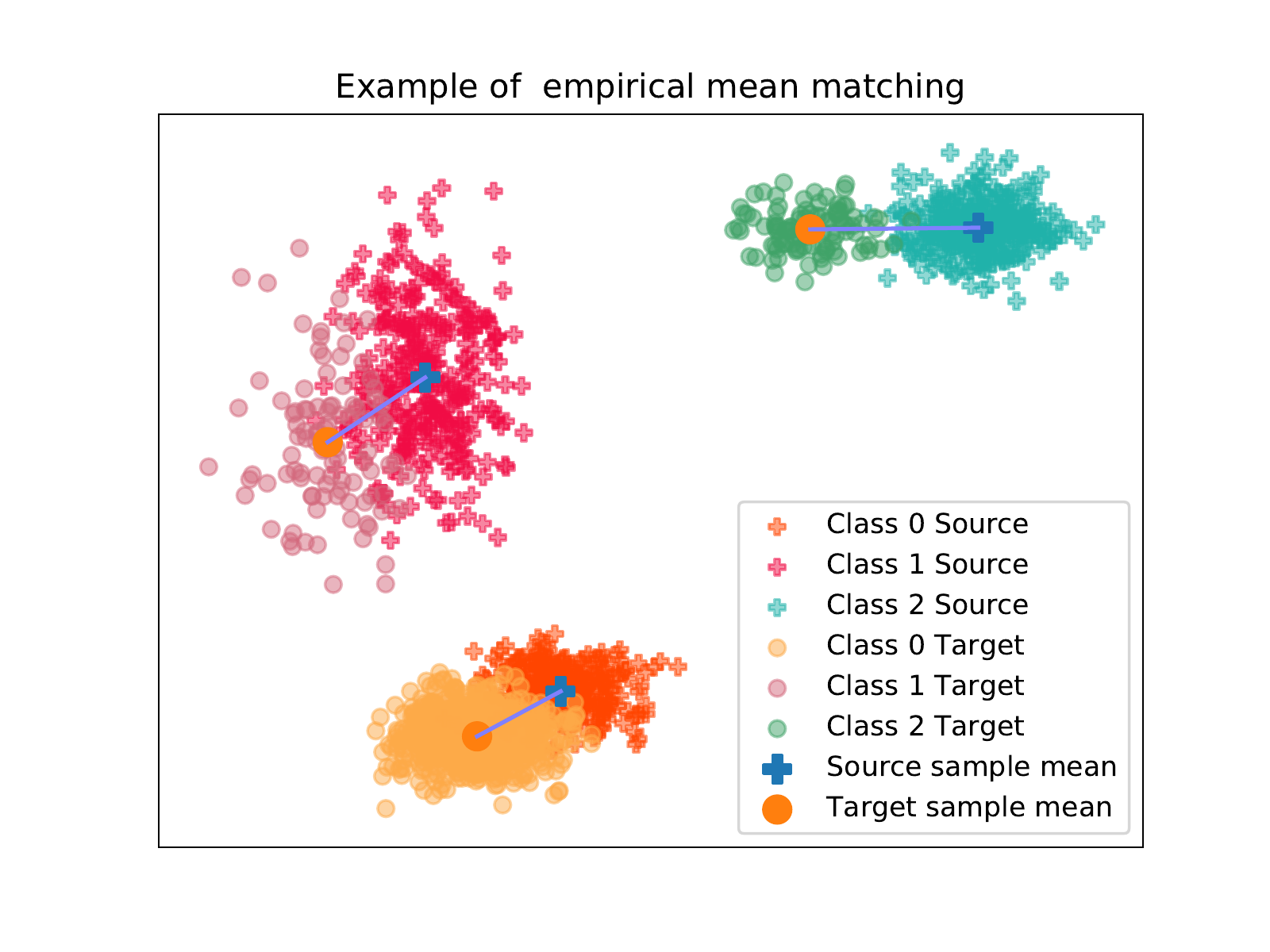}
			~\hfill~\includegraphics[width=5.5cm]{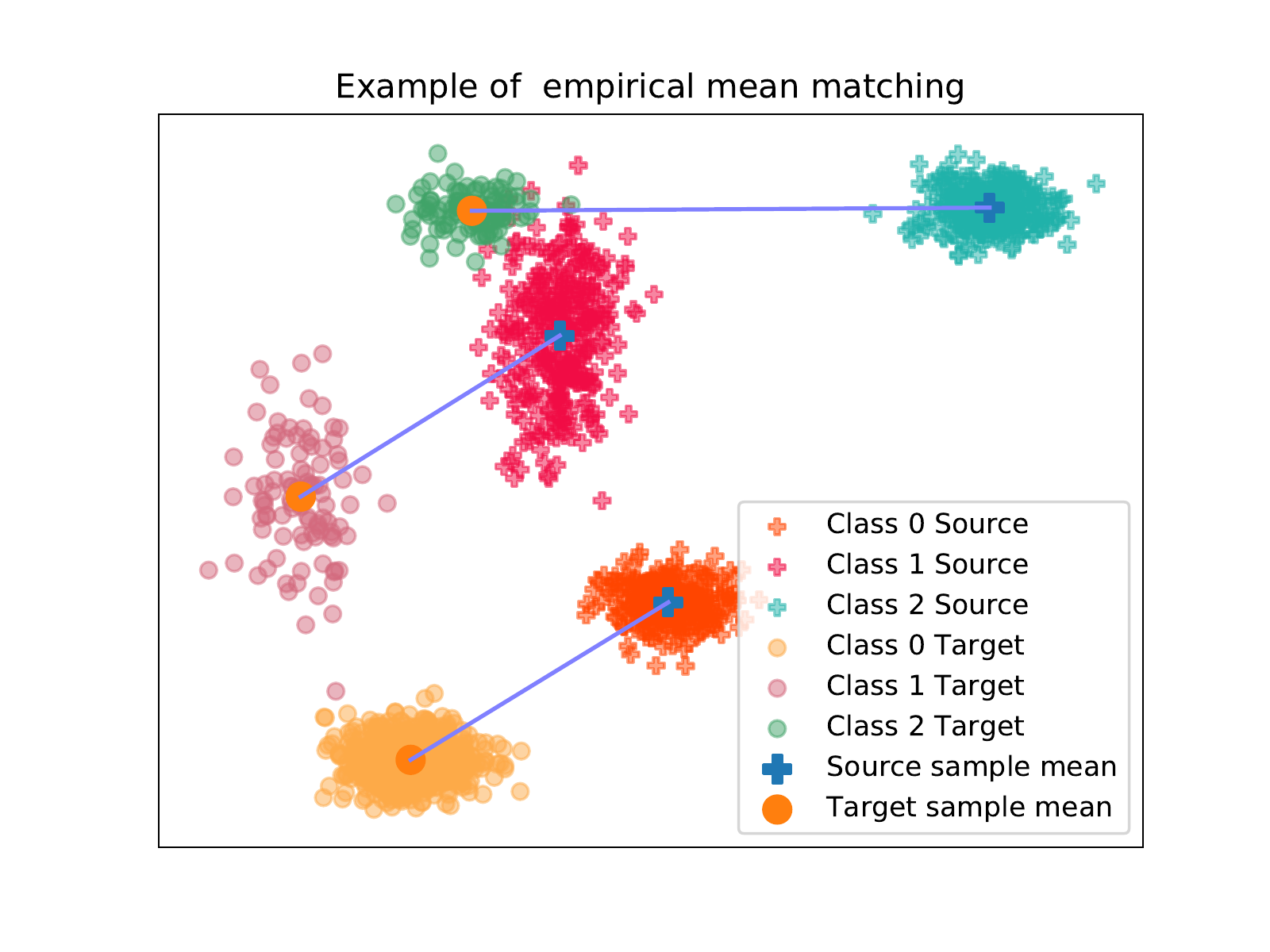}
			~\hfill~\\
			~\hfill\includegraphics[width=5.5cm]{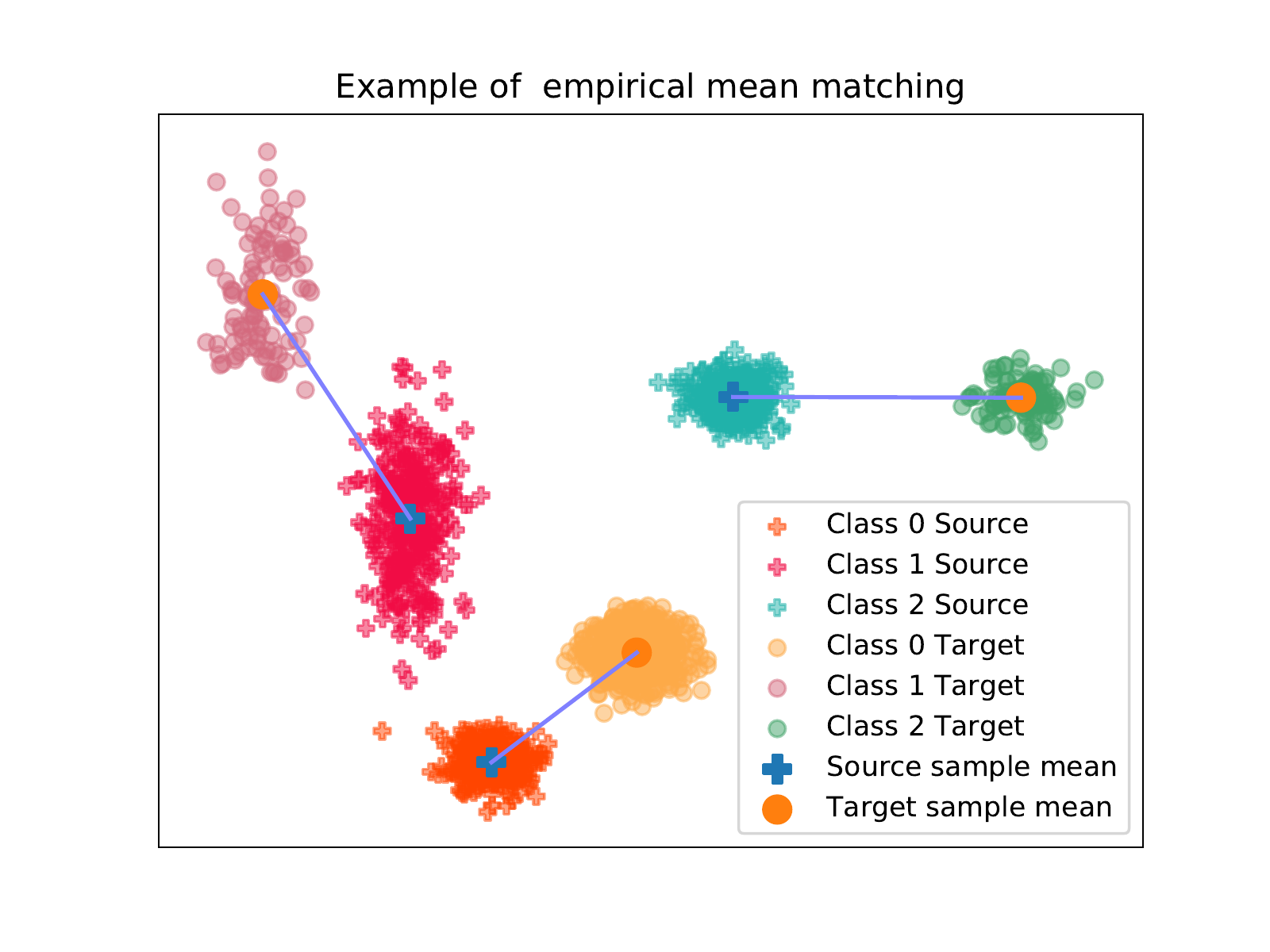}
			~\hfill~\includegraphics[width=5.5cm]{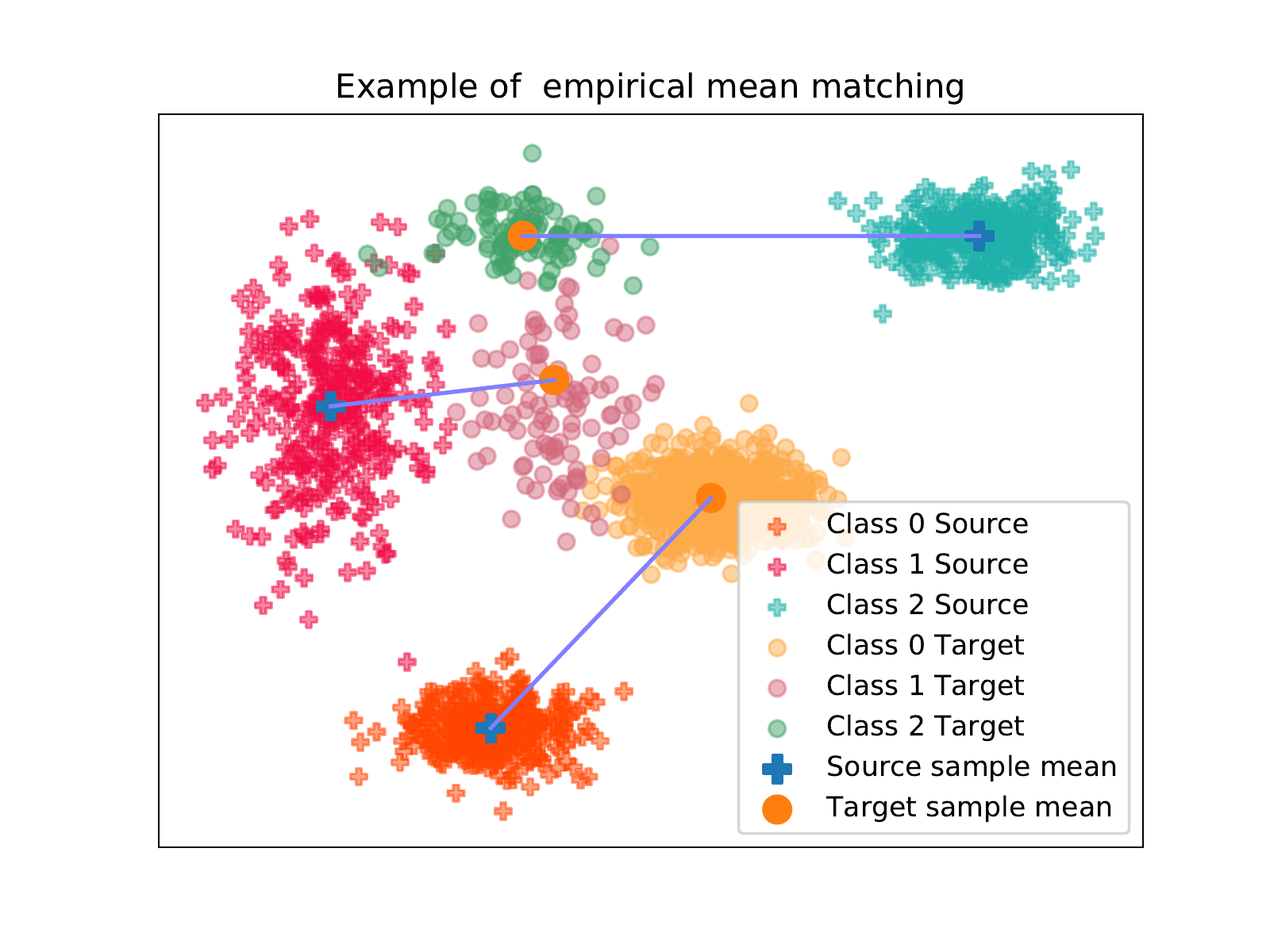}~\hfill~\\
			~\hfill\includegraphics[width=5.5cm]{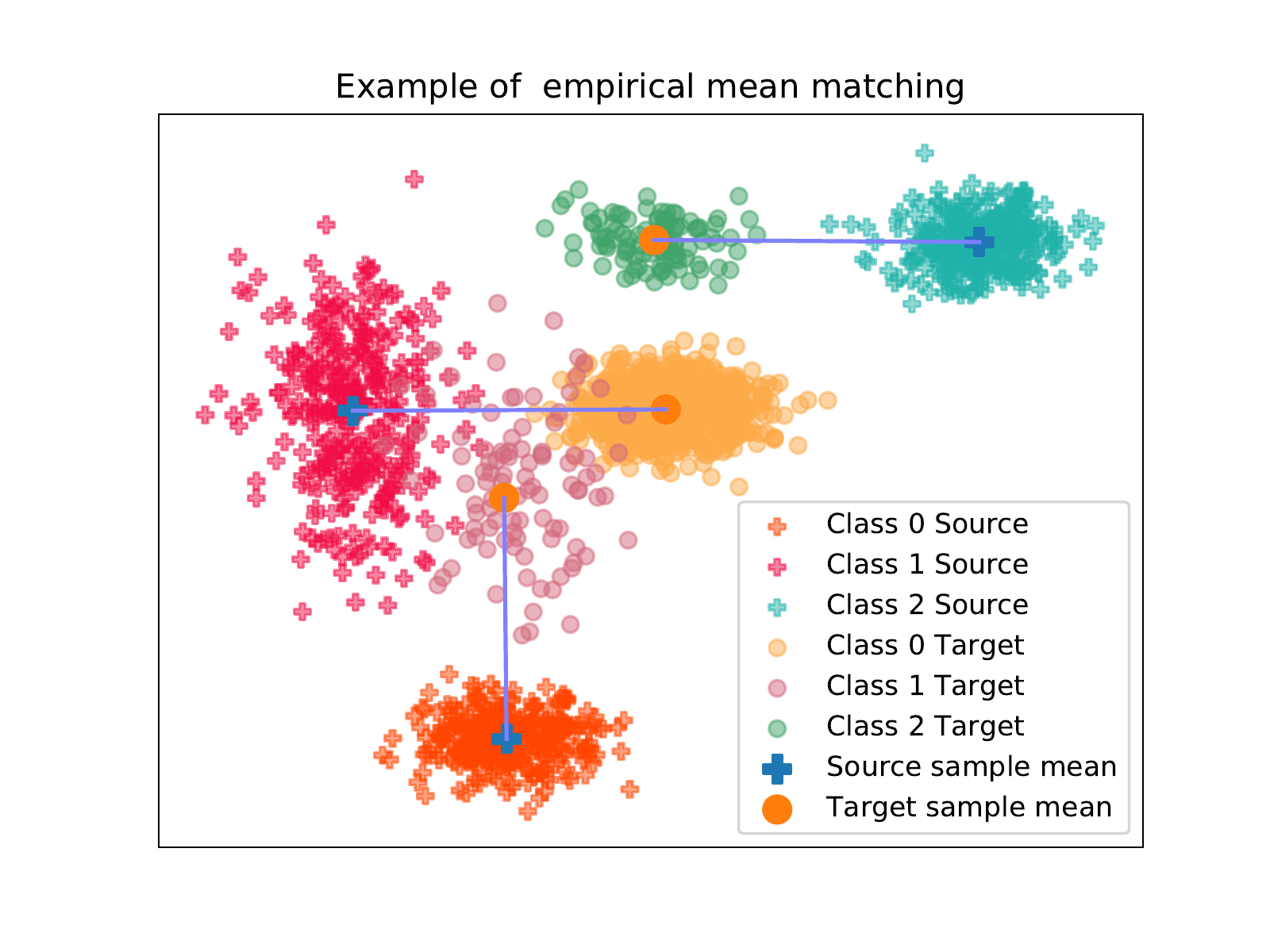}
			~\hfill~\includegraphics[width=5.5cm]{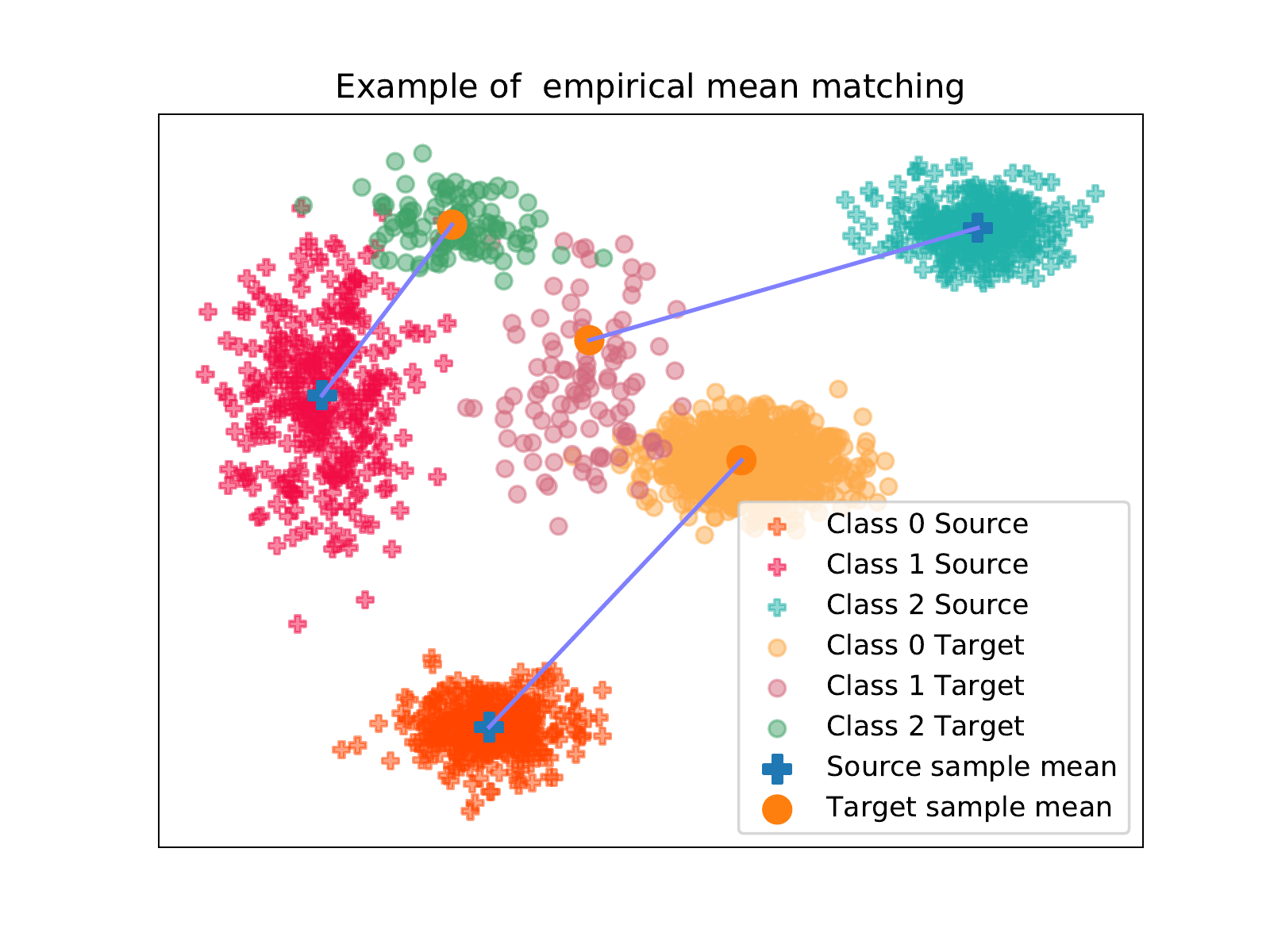}~\hfill~\\
		\end{center}
		\caption{Example of geometrical arrangments of the source and target class-conditional distributions that allows correct and incorrect matching of classes by optimal transport
			of empirical means (assuming correct estimation of these means). Blue lines denote the matching.
			(top-left) In this setting, the displacements of each class-conditionals is so that 
			for each class $i$ $\|\m_S^i - \m_T^i\|_2 \leq \|\m_S^i - \m_T^j\|_2$, for all $j$. We are thus in the first example that we gave as satisfying Proposition $1$. (top-right) Class-conditionals
			have been displaced such that the ``nearness'' hypothesis is not respected anymore. However, target class-conditional distributions are obtained by a linear Monge map of their source counterparts. This ensures that optimal transport allows their matchings
			(based on their means). (middle) We have illustrated two other examples of distribution arrangments  that allow class matching. (bottom) Two examples that break our assumption. In both cases, one target class-conditional is ``near''  another source class, without the global displacements of all target class-conditionals being uniform in direction. \label{fig:proposition1}}
	\end{figure*}

It is interesting to compare our assumptions on identifiability to other hypotheses proposed in the literature for solving (generalized) target shift problems. When handling only target shift, one
common hypothesis \cite{pmlr-v89-redko19a} is that class-conditional
probabilities are equal. This in our case boils down to have a $0$ distance
between $\mathcal{D}(P_S^j,P_T^j)$ guaranteeing matching under our more general assumptions. When both shifts
occur on 
labels and class-conditionals, \cite{pmlr-v97-wu19f} assume that there exists
continuity of support between the $p({z|y})$ in source and target domains. Again, 
this assumption may be related to the above minimum distance hypothesis if class-conditionals in source domain are far enough. 
Interestingly, one of the hypothesis of \citet{zhang2013domain} for handling
generalized target shift is that there exists a linear transformation
between the class-conditional probabilities in source and target domains.
This is a particular case of our Proposition \ref{prop:dist} and subsequent discussion, where the  mapping between class-conditionals  is supposed to be linear. Our conditions for correct matching and thus for identifying classes in the target domain are more general than those proposed
in the current literature.

\subsection{When Matching Marginals Lead To Matched Class-conditionals?}

In our learning problem, since one term we aim at minimizing is $WD_1({p}_{\tilde S}^g, p_T^g)$, with ${p}_{\tilde S}^g = \sum_j p_T^{y=j} p_S^j$ and $ p_T^g  = \sum_j p_T^{y=j} p_T^j$,
 we want to understand under which assumptions $WD_1({p}_{\tilde S}^g, p_T^g)=0$ implies that 
$p_{S}(z|y=j)=p_T(z|y=j)$ for all $j$, which is key for a good generalization as stated in Theorem \ref{th:bound}.
Interestingly, the assumptions needed for guaranteeing this implication
are the same as those in Proposition \ref{prop:dist}.

\begin{proposition}\label{prop:prop2}
	Denote as $\gamma$ 	the optimal coupling plan for distributions $\nu$ and $\mu$  defined as balanced weighted sum of class-conditionals 
		that is  $ \nu = \frac{1}{C}\sum_{j=1}^{C}\delta_{p_S^j}$ and
	$ \mu = \frac{1}{C}\sum_{j=1}^{C}\delta_{p_T^j}$
	 under assumptions given in  Proposition \ref{prop:dist}. 	Assume that the classes are ordered so that we have $\gamma= \frac{1}{C}
	\text{diag}(\1)$. Then  $\gamma'=\text{diag}(\a)$ is also optimal for the transportation problem
	with  marginals $ \nu^\prime = \sum_{j=1}^{C} a_j \delta_{p_S^j}$ and
	$ \mu^\prime = \sum_{j=1}^{C} a_j \delta_{p_T^j}$, with $a_j > 0, \forall j$. In addition,
	if the Wasserstein distance between $\nu^\prime$ and $\mu^\prime$ is $0$, it implies
	that the distance between class-conditionals are all $0$.
	\end{proposition}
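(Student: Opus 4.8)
The statement contains two claims that I would establish in sequence: first that the diagonal plan $\gamma'=\text{diag}(\a)$ remains optimal after the marginal masses are changed from uniform to $\a$, and second that a vanishing transport cost forces each pairwise distance $\mathcal{D}(p_S^j,p_T^j)$ to vanish.

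For the first claim, the plan is to exploit that optimality of a coupling between finitely supported measures is governed solely by the geometry of its support, not by the masses it carries, through the notion of $\mathcal{D}$-cyclical monotonicity. By hypothesis we are in the setting of Proposition \ref{prop:dist} with the classes ordered so that the balanced optimal plan is $\gamma=\frac1C\text{diag}(\1)$; its support is the diagonal $\Gamma=\{(p_S^j,p_T^j)\}_{j=1}^C$. The cyclical-monotonicity hypothesis of Proposition \ref{prop:dist}, namely $\sum_j \mathcal{D}(p_S^j,p_T^j)\le \sum_j \mathcal{D}(p_S^j,p_T^{\sigma(j)})$ for every permutation $\sigma$, is exactly the statement that $\Gamma$ is $\mathcal{D}$-cyclically monotone: restricting a permutation to any index subset and extending it by the identity on the remaining indices recovers the monotonicity inequality on every subfamily. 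I would then observe that $\gamma'=\text{diag}(\a)$, with $a_j>0$, is a feasible coupling of $\nu'$ and $\mu'$, since both its row- and column-marginals equal $\a$, matching the weights of $\nu'$ and $\mu'$. Because $\gamma'$ has the very same support $\Gamma$, and a transport plan supported on a $\mathcal{D}$-cyclically monotone set is optimal (the companion, sufficiency direction of the result from \citet{santambrogio2015optimal} invoked in the proof of Proposition \ref{prop:dist}), $\gamma'$ is optimal for the transportation problem between $\nu'$ and $\mu'$.

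For the second claim, optimality of $\gamma'$ lets me evaluate the transport cost explicitly. With $\mathbf{C}_{ij}=\mathcal{D}(p_S^i,p_T^j)$ the ground-cost matrix, the Wasserstein distance between $\nu'$ and $\mu'$ equals $\langle \mathbf{C},\gamma'\rangle=\sum_{j=1}^C a_j\,\mathcal{D}(p_S^j,p_T^j)$, since $\gamma'$ is diagonal. If this quantity is $0$, then as every $a_j>0$ and $\mathcal{D}\ge 0$, each summand must vanish, so $\mathcal{D}(p_S^j,p_T^j)=0$ for all $j$; and since $\mathcal{D}$ is a genuine distance over probability measures, this forces $p_S^j=p_T^j$ for every class $j$, i.e. all class-conditionals coincide.

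The routine parts are the marginal bookkeeping for $\gamma'$ and the final positivity argument. The one step that requires care, and which I expect to be the crux, is transferring optimality from the uniform to the reweighted marginals in the first claim. The tempting route is to reduce to permutation matrices via Birkhoff as in Proposition \ref{prop:dist}, but the polytope $\Pi(\a,\a)$ no longer has only permutation matrices as vertices once $\a$ is non-uniform, so that argument breaks. The clean resolution is to bypass the vertex structure entirely and rely on the support-only characterization of optimality through cyclical monotonicity, which is insensitive to the particular masses. Finally, to connect this back to the learning objective one instantiates $a_j=p_T^{y=j}$ and $\mathcal{D}=WD_1$, so that $\sum_j a_j\,WD_1(p_S^j,p_T^j)=0$ delivers the desired implication $WD_1(p_{\tilde S}^g,p_T^g)=0 \Rightarrow p_S^j=p_T^j$ under the separation assumptions that make the class-level matching optimal.
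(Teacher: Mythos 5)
Your proof is correct and rests on the same key insight as the paper's: the cyclical monotonicity inequality inherited from Proposition \ref{prop:dist} is a property of the support alone, hence insensitive to replacing the uniform weights by $\a$. The one nontrivial step, however, is discharged differently, and your diagnosis of the paper's method is slightly off: the paper does not attempt the Birkhoff vertex reduction in this proof (you are right that it would fail, since the vertices of $\Pi(\a,\a)$ are no longer permutation matrices for non-uniform $\a$). Instead it gives a self-contained perturbation argument: starting from $\gamma'=\text{diag}(\a)$, every direction $\Delta_\sigma=-\I+\P_\sigma$ is feasible and cost-nondecreasing by the cyclical monotonicity inequality, and any feasible plan in $\Pi(\a,\a)$ is reached from $\gamma'$ by a nonnegative combination of such displacements (a flow-decomposition property of the Birkhoff polytope), whence $\gamma'$ is optimal. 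You instead invoke, as a black box, the sufficiency direction of the cyclical-monotonicity characterization --- a plan whose support is $\mathcal{D}$-cyclically monotone is optimal --- which is a standard and true fact in the finite discrete setting; your reduction of the permutation hypothesis to cyclical monotonicity on arbitrary subfamilies (extend any sub-permutation by the identity and cancel the diagonal terms) is exactly the bookkeeping needed to make that appeal legitimate. One caution: Theorem 1.38 of \citet{santambrogio2015optimal}, as cited in the paper's proof of Proposition \ref{prop:dist}, is the \emph{necessity} direction, so your argument should cite the sufficiency statement separately (for finite supports it also follows from an elementary exchange argument, essentially the paper's displacement computation). Your second part --- evaluating the cost of the diagonal plan as $\sum_{j} a_j\,\mathcal{D}(p_S^j,p_T^j)$ and concluding from $a_j>0$ and $\mathcal{D}\geq 0$ --- coincides with the paper's corollary, and your closing instantiation $a_j=p_T^{y=j}$ correctly connects the proposition to the learning objective.
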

\longversion{
	\begin{proof}
		By assumption and without loss of generality, the
	class-conditionals are arranged so that  $\gamma = \frac{1}{C}
	\text{diag}(\1)$.
	Because the weights in the marginals are
	not uniform anymore, $\gamma$  is not a feasible solution for the OT problem with 
	$\nu^\prime$ and  $\mu^\prime$
	but $\gamma^\prime = \text{diag}(\a)$ is. 
	Let us now show that any feasible non-diagonal plan $\Gamma$  has higher cost than
	$\gamma^\prime$ and thus is not optimal. At first, consider any permutation $\sigma$ of $C$ elements and its corresponding permutation matrix $\P_\sigma$, because $\gamma = \frac{1}{C}
	\text{diag}(\1)$ is optimal, the cyclical monotonicity relation $\sum_i \mathcal{D}_{i,i}\leq \sum_i \mathcal{D}_{i,\sigma(i)}$  holds true $\forall \sigma$. 
	Starting from $\gamma^\prime = \text{diag}(\a)$, any direction $\Delta_\sigma= -\I+\P_\sigma$ is a feasible direction (it does not violate the marginal constraints) and due to the  cyclical monotonicity, any move in this direction will increase the cost. Since any non-diagonal $\gamma_ z\in\Pi(\a,\a)$ can be reached with a sum of displacements $\Delta_\sigma$ (property of the Birkhoff polytope) it means that the transport cost induced by  $\gamma_z$ will always be greater or equal to the cost for the diagonal $\gamma^\prime$ implying that $\gamma^\prime$ is the solution of the OT problem with marginals $\a$.\\
	As a corollary, it is straightforward to show that 
	$
	W(\nu^\prime, \mu^\prime) = \sum_{i=1}^C \mathcal{D}_{i,i} a_i = 0 \implies \mathcal{D}_{i,i} = 0
	$
	as $a_i >0$ by hypothesis.
	\end{proof}
}{}

Applying this proposition with $a_j= p_T^{y=j}$ brings us the guarantee that under some geometrical assumptions on the class-conditionals in the latent space,
having $WD_1(\tilde{p}_S^g, p_T^g)=0$ implies matching
of the class-conditionals, resulting in a
minimization of $\max_{k,z} w(z)S_k(z)$ (remind that $w(z)=1$ as mixture components $p_S^j$ and $p_T^j$ of $p_{\tilde S}^g$ and $p_{T}^g$ are both weighted by $p_T^{y=j}$ for all $j$, since we learn using $p_{\tilde S}^g$).

\section{Discussions}
\label{sec:discussions}

From a theoretical point of view, several works have pointed out the limitations of learning domain invariant representations. \cite{JohanssonSR19}, \cite{pmlr-v97-zhao19a} and \cite{pmlr-v97-wu19f} have
introduced some generalization bounds on the target error
that show the key role of label distribution and conditional
distribution shifts when learning invariant representations.
Importantly, \cite{pmlr-v97-zhao19a} and \cite{pmlr-v97-wu19f} have shown that in a label shift situation, minimizing source error while achieving invariant representation will tend to increase the target
error. In our work, we introduce an upper bound that clarifies the importance of learning invariant representations
that also align class-conditional representations in source and target domains.

Algorithmically, most related works are the one by \citet{pmlr-v97-wu19f} and \citet{combes2020domain}
that also address generalized target shift. The first approach does not seek at estimating label proportion but instead  allows flexibility in the alignment by using an assymetrically-relaxed distance.
 In the case of Wasserstein distance, the approach of \cite{pmlr-v97-wu19f} consists in reweighting the marginal
of the {source} distribution and in its dual form, their distance boils to
$$
WD_w(p_S,p_T) = \sup_{\|v\|_L  \leq 1}\mathbf{E}_{x \sim p_S} w(x)v(x) - 	\mathbf{E}_{x \sim p_T} v(x)  
$$
where $w(\cdot$) is actually a constant $\frac{1}{1 + \beta }$.
We can note that the adversarial loss we propose is a  general case
of this one. Indeed, in the above, the same amount of weighting applies
to all the samples of the source distribution. At the contrary, our reweighting scheme depends on the class-conditional probability and
their estimate target label proportion. Hence, we believe that our
 approach would adapt better to imbalance without the need to tune
$\beta$ (by validation for instance, which is hard in unsupervised domain
adaptation). The work of \citet{combes2020domain} and our differs only in
the way the weights $w(x)$ are estimated. In our case, we consider a theoretically supported and consistent estimation of the target label proportion, while they directly estimate $w(\cdot)$ by applying a technique tailored and grounded for problems without
class-conditional shifts. We will show in the experimental section that their estimator in some cases lead to poor generalization.

Still in  the context of reweighting, \cite{yan2017mind} proposed a weighted
Maximum Mean discrepancy distance for handling target shift in UDA. However, their
weights are estimated based on pseudo-labels obtained from the learned classifier
and thus, it is difficult to understand whether they provide accurate estimation
of label proportion even in simple setting. 
While their distance is MMD-transposed version of our weighted Wasserstein,
our approach applies to representation learning and is more theoretically grounded as the label proportion estimation
is based on sound algorithm with proven convergence guarantees (see below) and
our optimal assignment assumption provides guarantees on situations under which
class-conditional probability matching is correct.

The idea of matching moment of distributions have already been proven to be an effective for handling distribution mismatch. About ten years ago, \cite{huang2007correcting,gretton2009covariate,YuS12} already leveraged
such an idea for handling covariate shift by matching means of distributions in some reproducing
kernel Hilbert space. \cite{pmlr-v89-li19b} recycled the same idea for label proportion estimation and extended the idea to distribution matching. Interestingly, our approach
differs on its usage. While most above works employ mean matching for density ratio estimation
or for label proportion estimation, we use it as a mean for identifying displacement of 
class-conditional distributions through optimal assignment/transport. Hence, it allows
us to assign  estimated label proportion to the appropriate class.

For estimating the label proportion, we have proposed to learn a Gaussian mixture model of the target distribution. By doing so we are actually trying to solve a harder problem than necessary. However, once the target distribution estimation has
been evaluated and class-conditional probabilities being assigned from the source class, one
can use that Gaussian mixture model for labelling the target samples. 
Note however that Gaussian mixture learned by expectation-minimization can be hard to estimate
especially in high-dimension \cite{zhao2020statistical} and that the speed of convergence of the EM algorithm depends on smallest mixture weights \cite{NaimG12}. Hence, in high-dimension and/or highly imbalanced situations, one may get a poor estimate of the target distribution.
Nonetheless, one can consider other non-EM approach \cite{kannan2005spectral,arora2005learning}.
Hence, in practice, we can expect the approach GMM estimation and OT-based matching to be a
strong baseline  in low-dimension and well-clustered mixtures setting but to break in high-dimension one.

{
}

\section{Numerical Experiments}
\label{sec:numerical_experiments}
We present in this section some experimental analyses of the proposed algorithm on
a toy dataset as well as on real-world visual domain adaptation problems. The code
for reproducing part of the experiments is available at \url{https://github.com/arakotom/mars_domain_adaptation}.

\subsection{Experimental Setup}
Our goal is to show that among algorithms tailored for handling generalized target shift,
our method is the best performing one (on average). Hence, we  compare with two very recent methods designed for generalized target shift and with two  domain adaptation algorithms tailored for covariate shift for sanity check.

 As
a baseline, we consider a model, denoted as {Source}
 trained for $f$ and $g$ on the source examples and
tested without adaptation on the target examples. Two other competitors use respectively an adversarial domain learning \cite{ganin2016domain}
and the Wasserstein distance \cite{shen2018wasserstein} computed  in the dual as distances for measuring
discrepancy between $p_S$ and $p_T$, denoted as DANN and $\text{WD}_{\beta=0}$. We consider the model proposed by \citet{pmlr-v97-wu19f} and \citet{combes2020domain} as competing algorithms able to cope with generalized target shift.
For this former approach, we use the asymmetrically-relaxed Wasserstein distance so as to
make it similar to our approach and also report results for different values of the relaxation $\beta$. 
This model is named $\text{WD}_{\beta}$ with $\beta \geq 1$. 
The \citet{combes2020domain}'s method, named IW-WD (for importance weighted Wasserstein distance) solves the same learning problem as ours and differs only on the way the ratio $w(x)$ is estimated.
Our approaches are denoted as MARSc or MARSg respectively when
estimating proportion by hierarchical clustering or by Gaussian mixtures. 
All methods differ only in the metric used for computing the distance
between marginal distributions and most of them except DANN use a Wassertein distance. The difference essentially relies on the reweighting strategy of
the source samples.  For all models, learning rate and  the hyperparameter $\lambda$ in Equation \ref{eq:optimisation2} have been chosen based on a reverse cross-validation strategy. The metric that we have used for comparison is the balanced accuracy (the average recall obtained on each class) which is better suited for imbalanced problems \citep{brodersen2010balanced}.
All presented results have been obtained as averages over $20$ runs.

\begin{figure*}[t]
	\begin{center}
		~\hfill
		\includegraphics[width=0.32\linewidth]{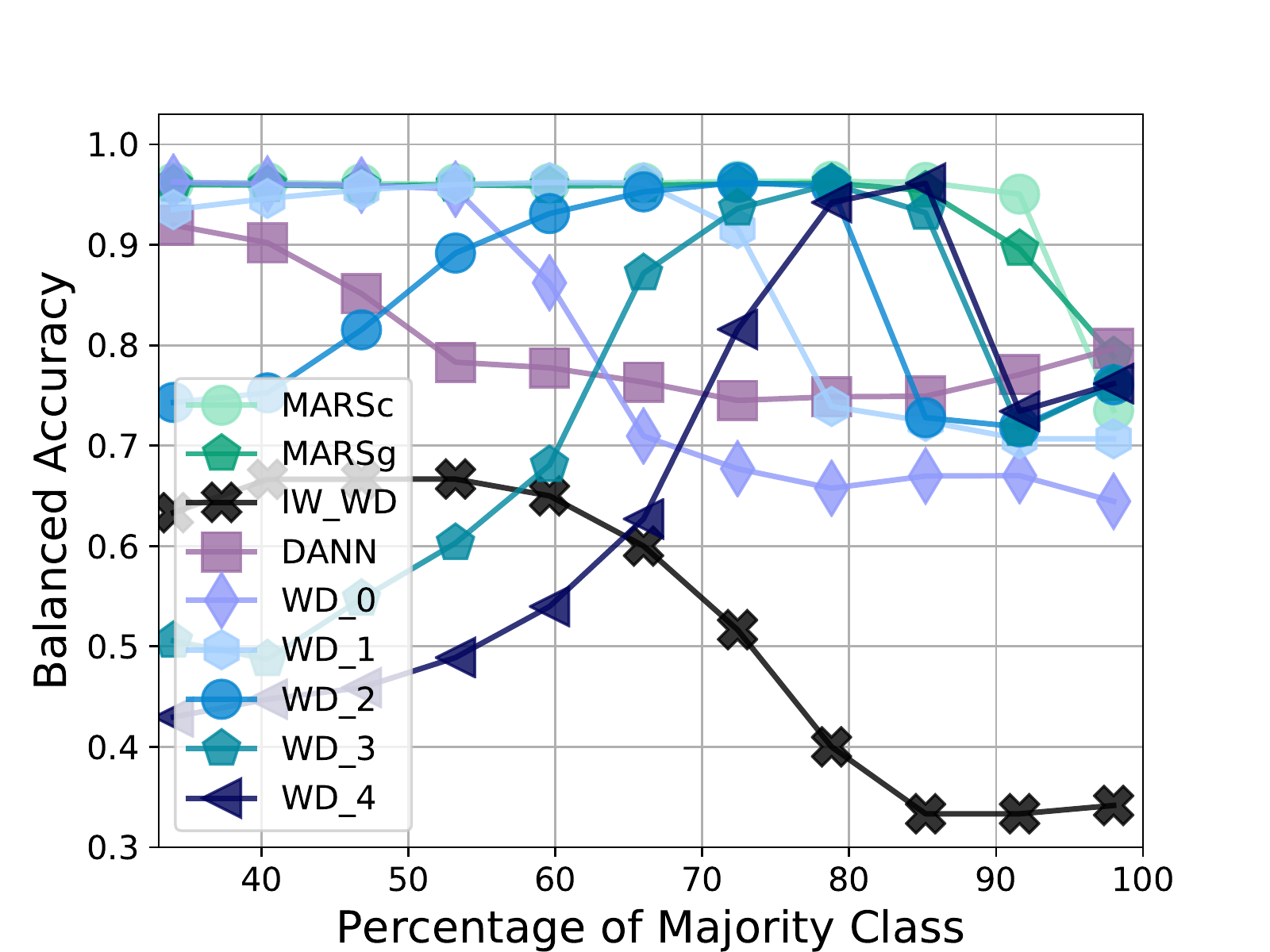}
		\includegraphics[width=0.32\linewidth]{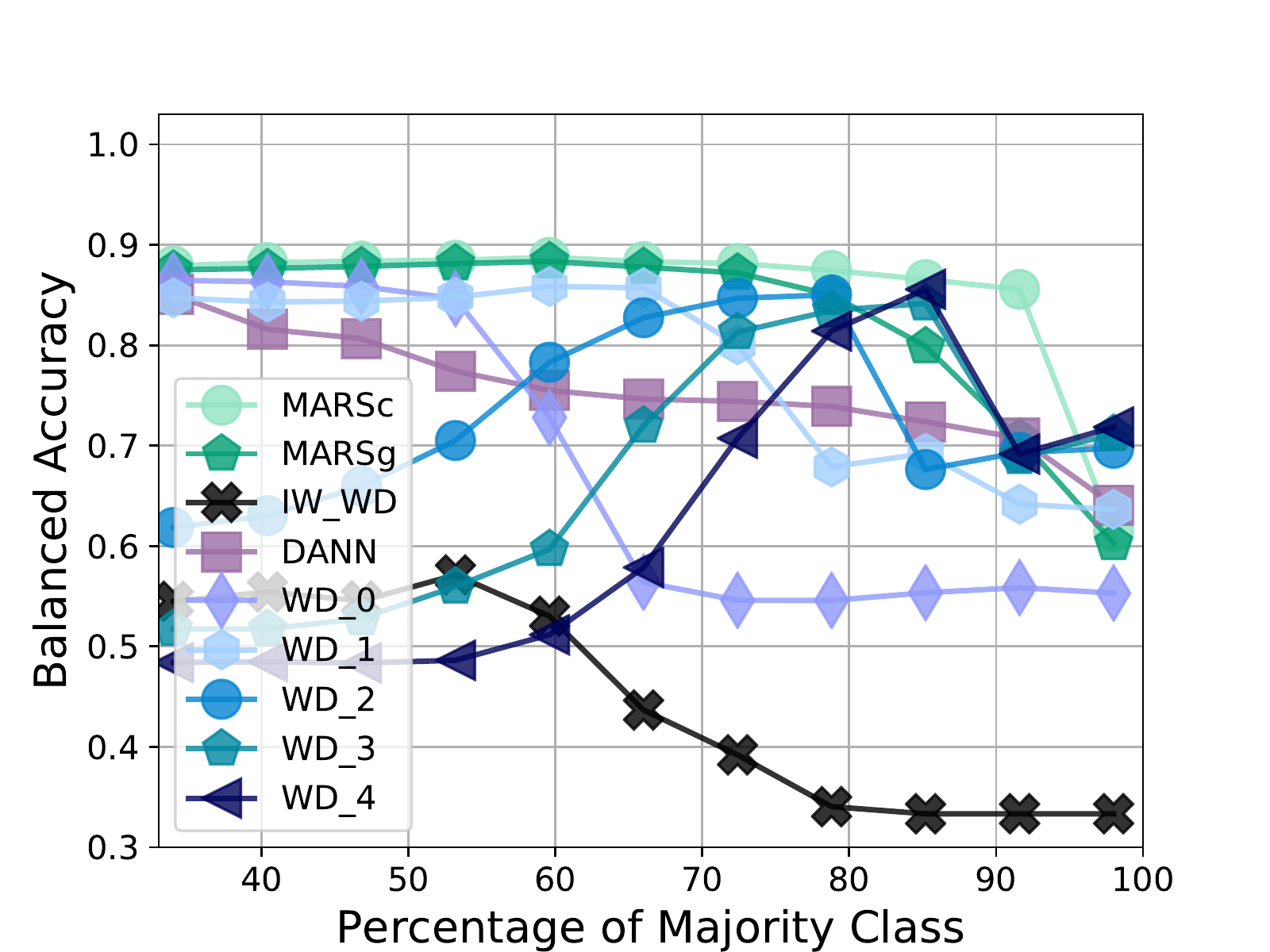}
		\includegraphics[width=0.32\linewidth]{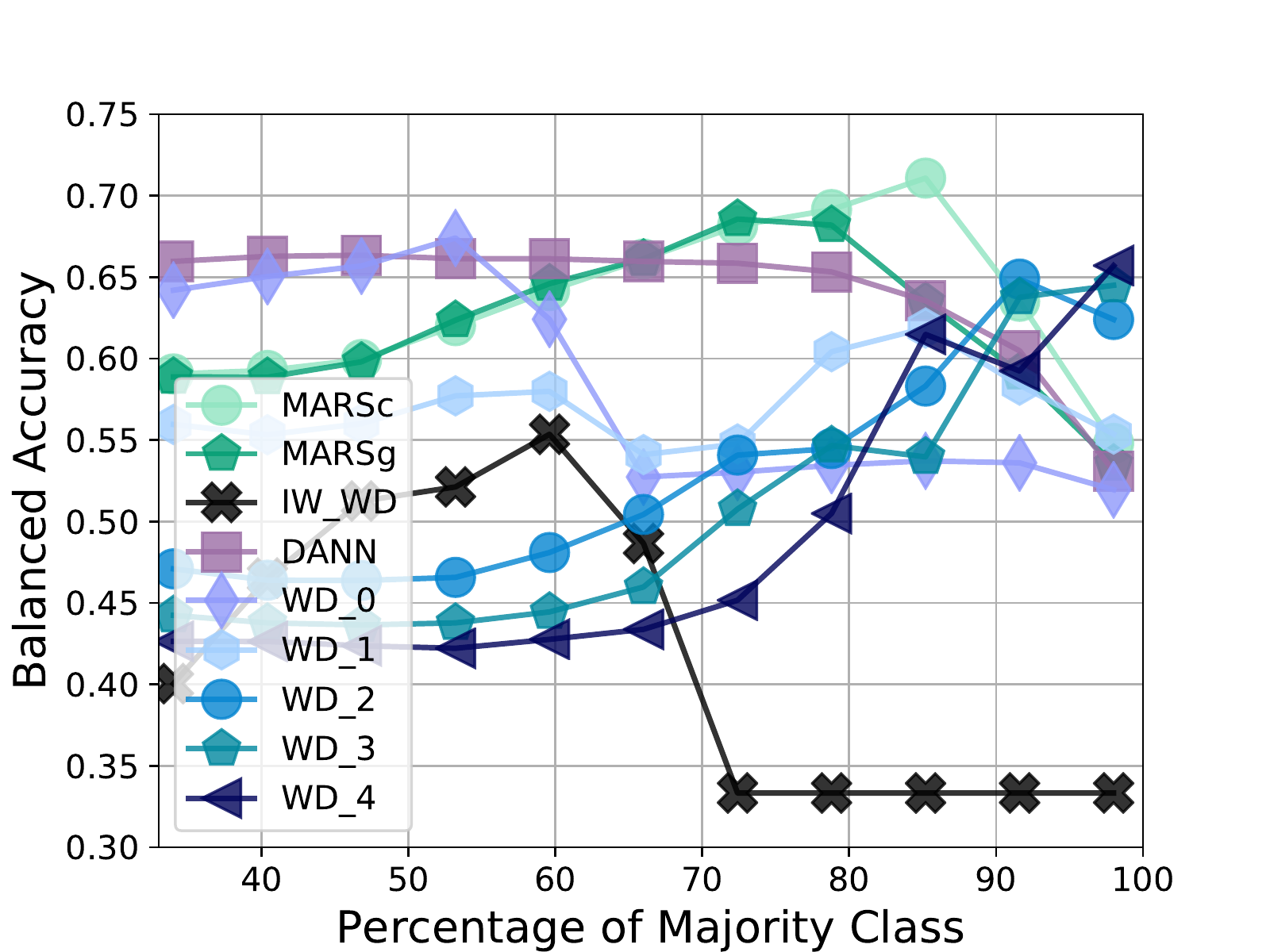}
		\hfill~
		\caption{Performance of the compared algorithms for three different
			covariance matrices of the Gaussians composing the toy dataset with respect to the 		imbalance. The x-axis is given with respect to the percentage of majority class which is
			the class $1$.	(left) Low-error setting. (middle) mid-error setting.
			(right) high-error setting. Example of the source and target samples for the different cases are provided in the supplementary material.\label{fig:toyimbalance}}
	\end{center}
\end{figure*}

\begin{figure*}[t]
	\begin{center}
		~\hfill
		\includegraphics[width=0.3\linewidth]{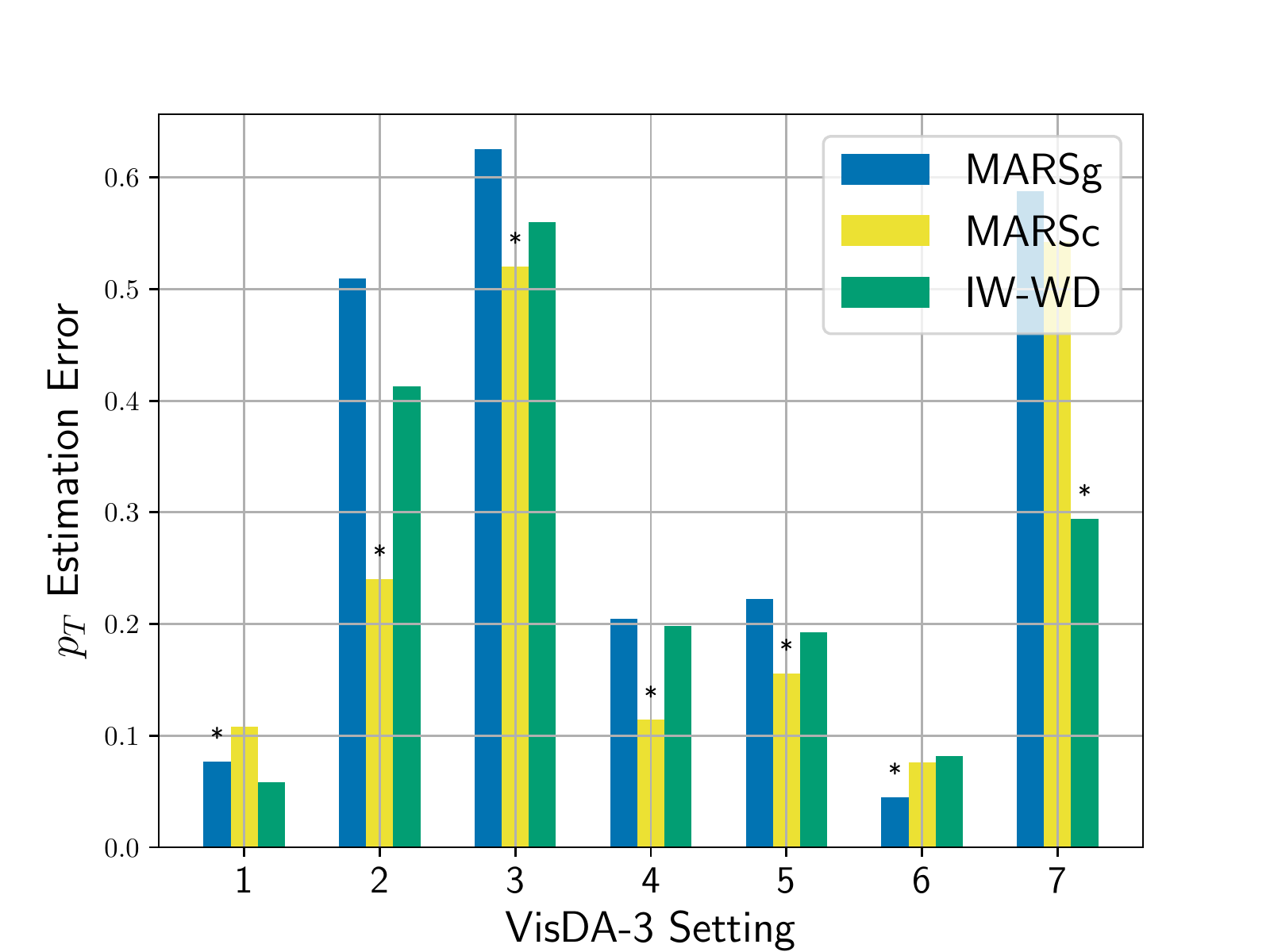}~\hfill~
		\includegraphics[width=0.3\linewidth]{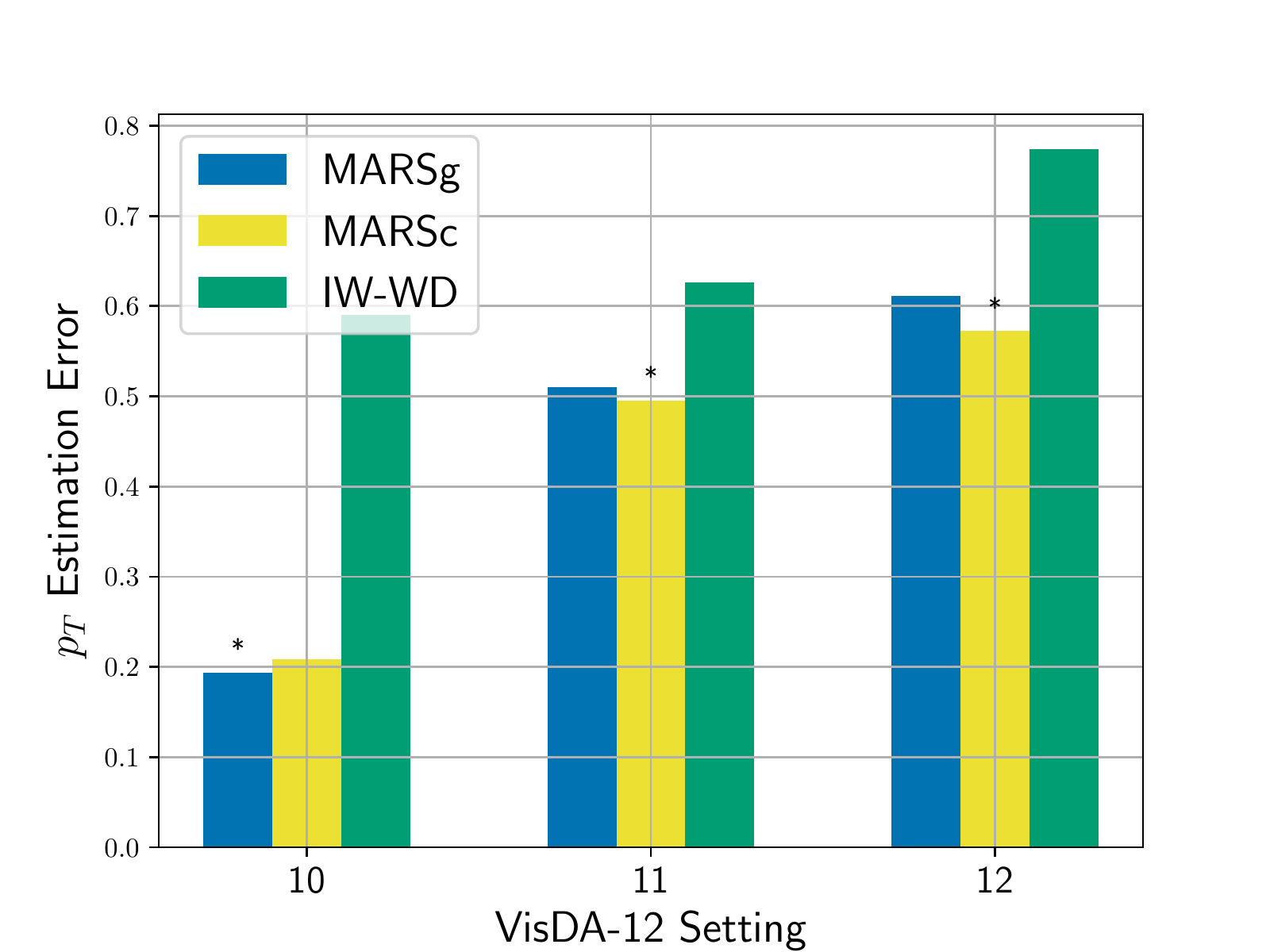}~\hfill~ 
		~\hfill~\includegraphics[width=0.3\linewidth]{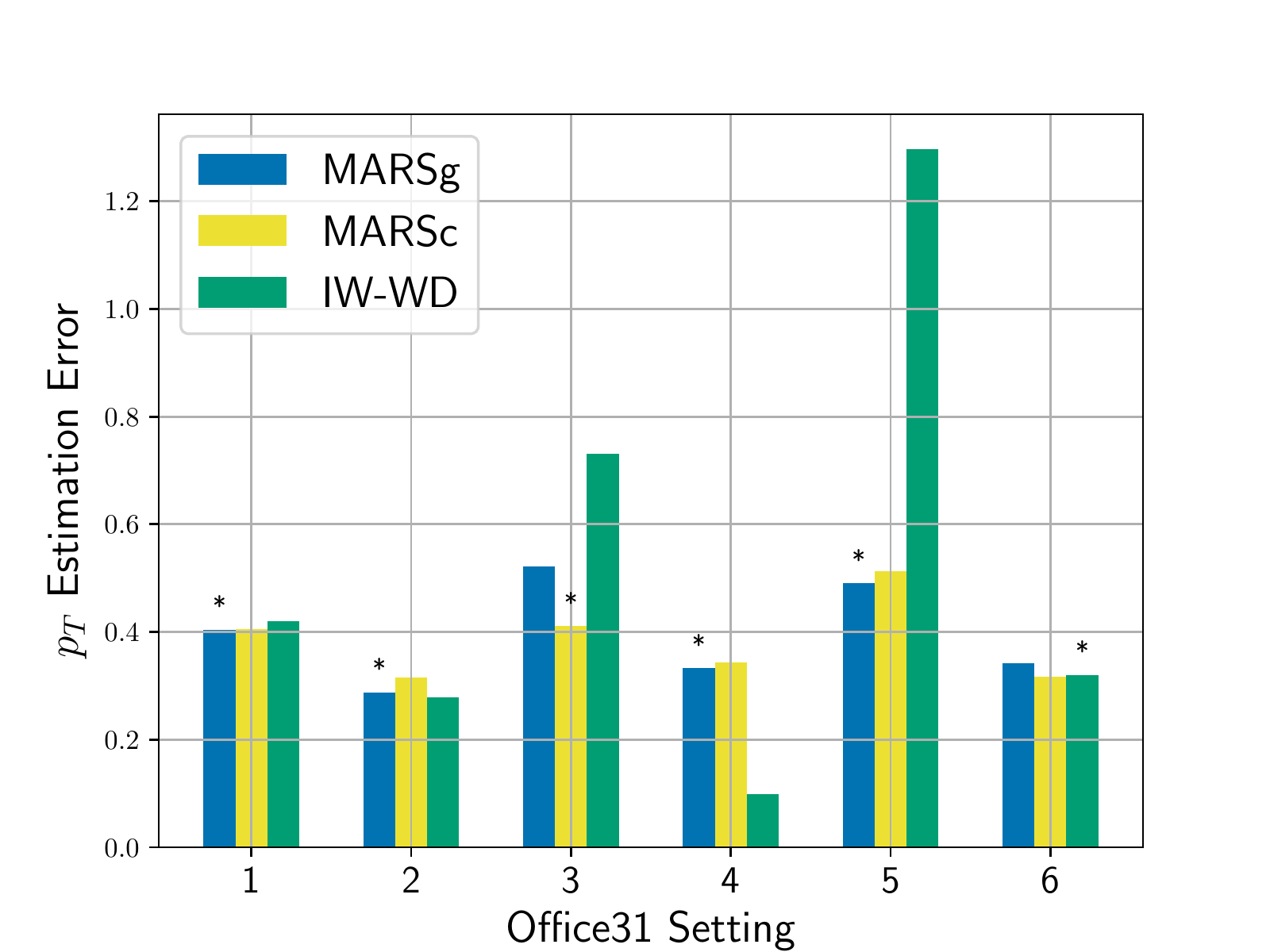}
		\hfill~
		\caption{Examples of $\ell_1$ norm error of  estimated label proportion. We have reported the performance of our two methods (MARSg and MARSc) as well as the performance of IW-WD. The three panels are related to the (left) VisDA-3, (middle) VisDA-12, (right) Office 31 and the different experimental imbalance settings (see Table \ref{tab:dataconfig}). We have also reported, with a `*' on top, among the three approaches, the best performing one in term of balanced accuracy.
			We note that MARSc provides better estimation than IW-WD on $12$ out of $16$ experiments. Note also the  correlation between better $\p_T$ estimation and  accuracy.
			\label{fig:proportion}}
	\end{center}
\end{figure*}

\begin{figure}
		\includegraphics[width=5.cm]{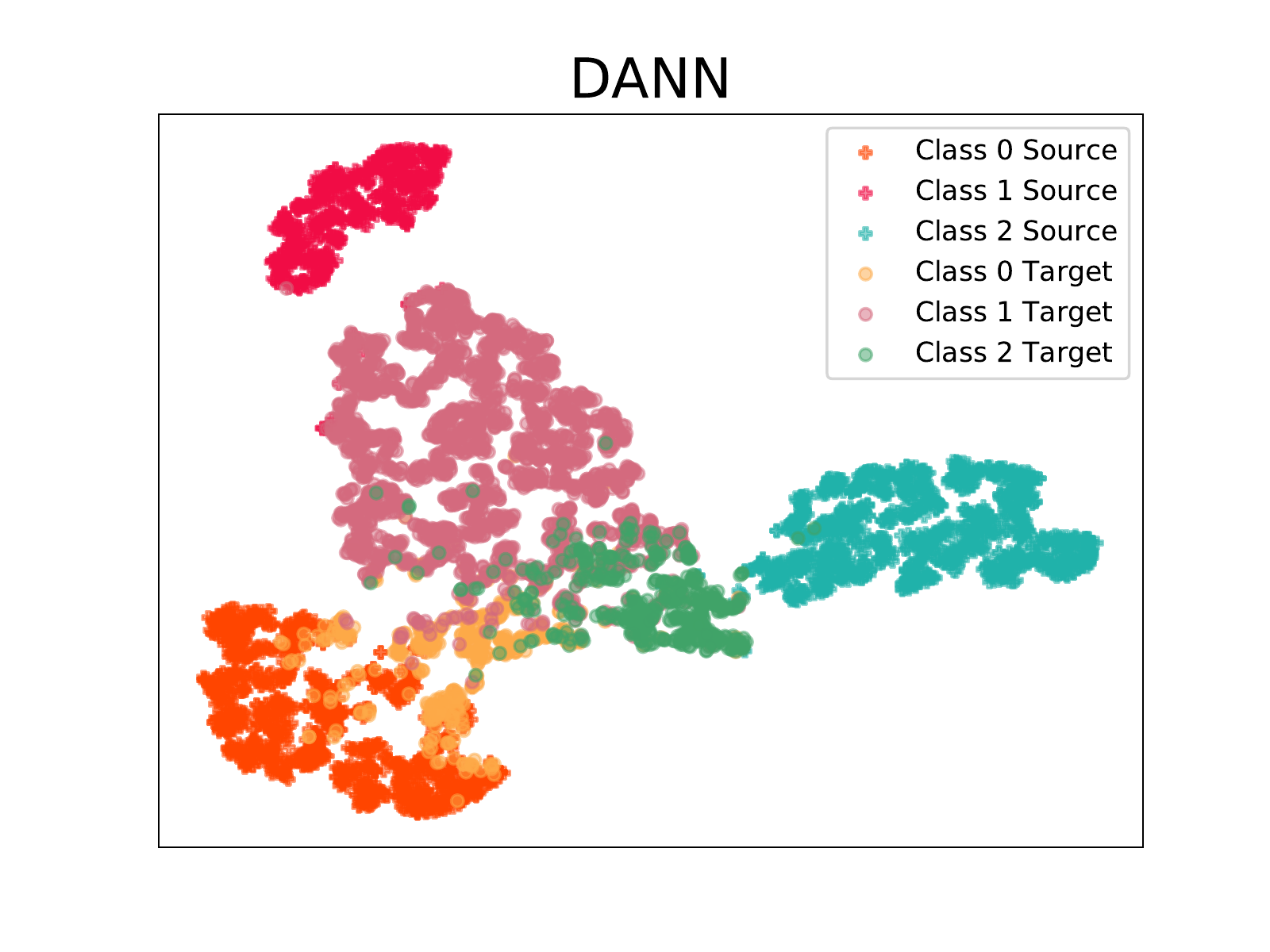}
	~\hfill~\includegraphics[width=5.cm]{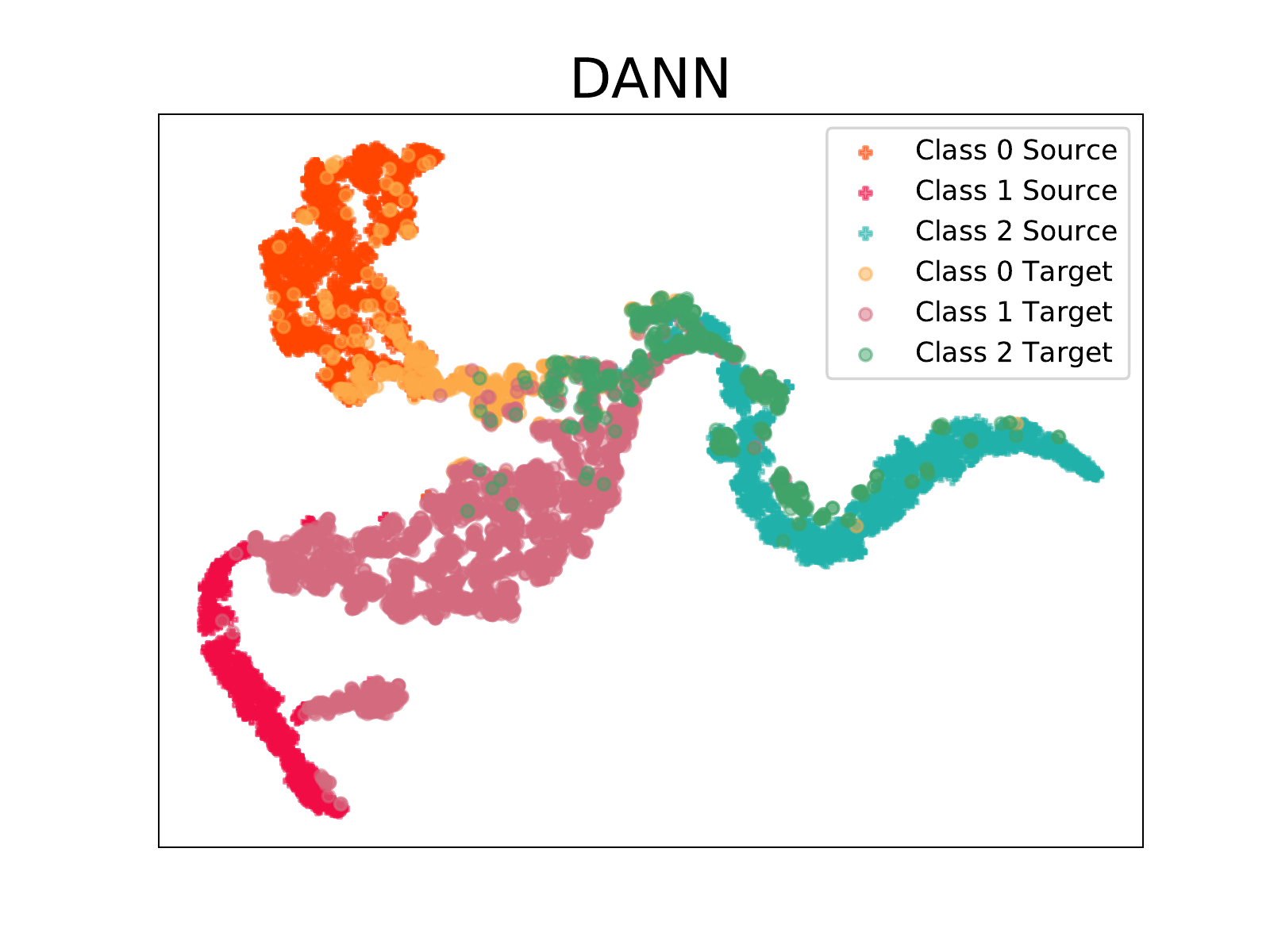}	\hfill~\\
		~\hfill
	\includegraphics[width=5.cm]{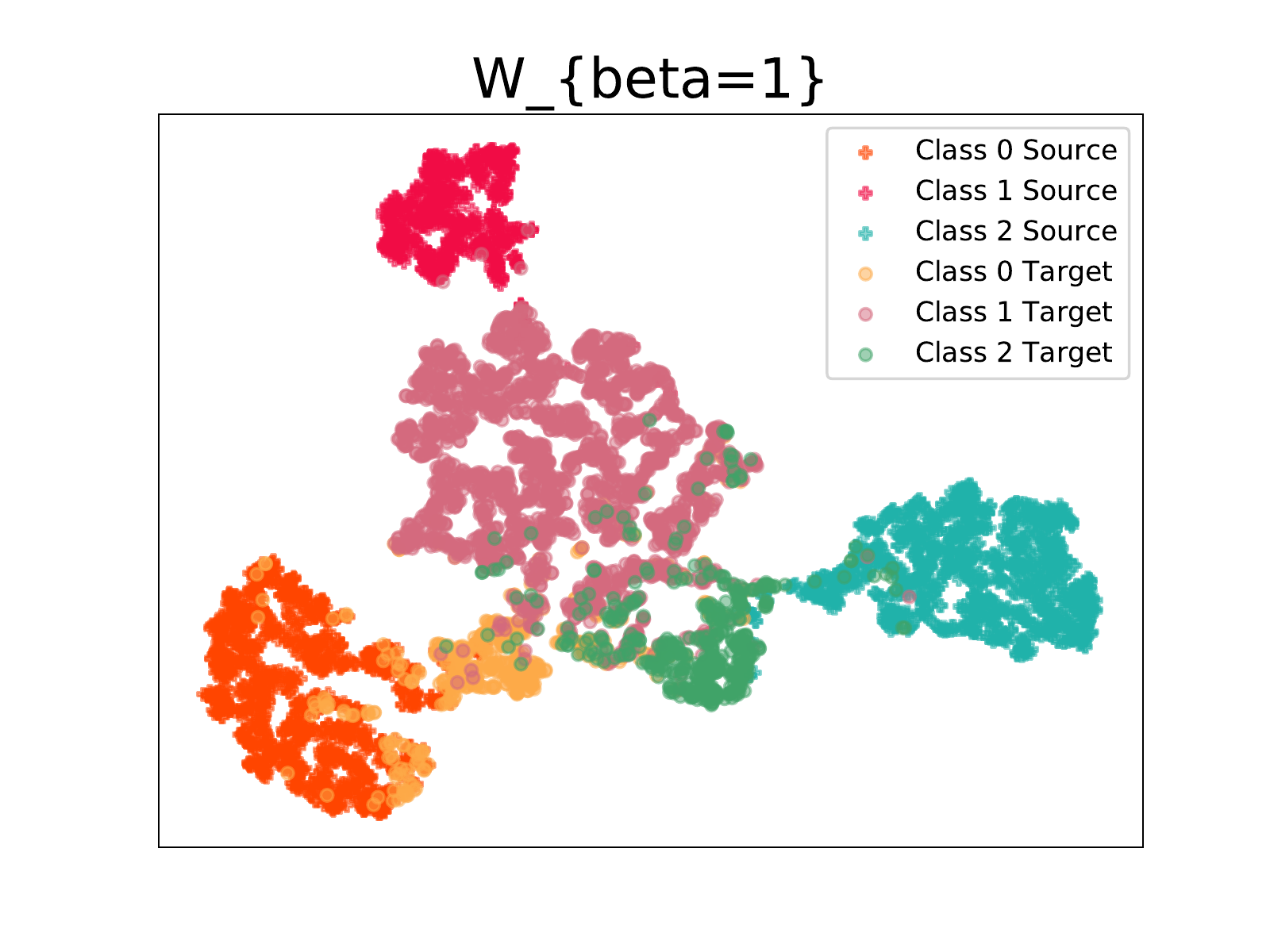}
	~\hfill~\includegraphics[width=5.cm]{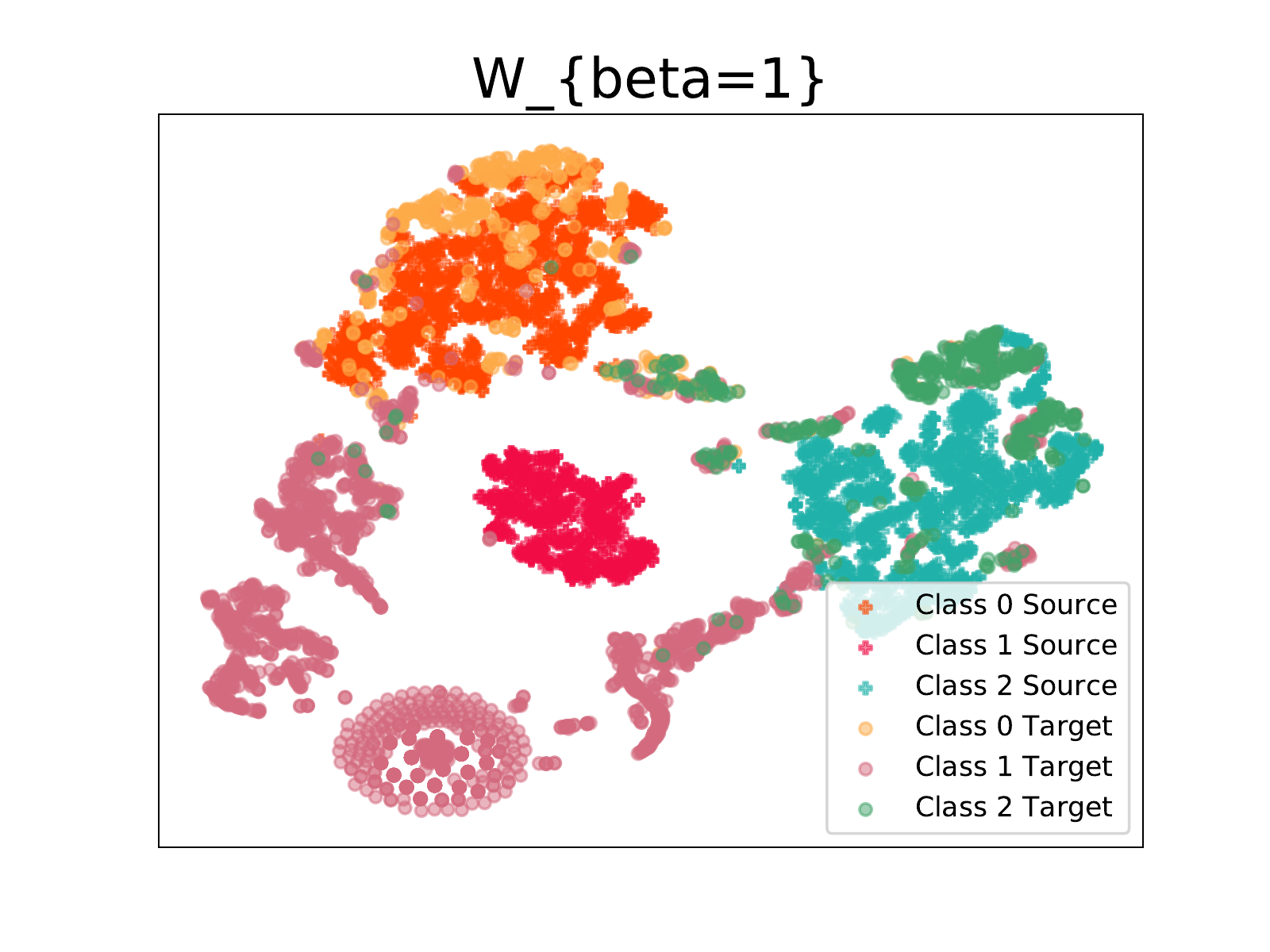}	\hfill~\\
		~\hfill
	\includegraphics[width=5.cm]{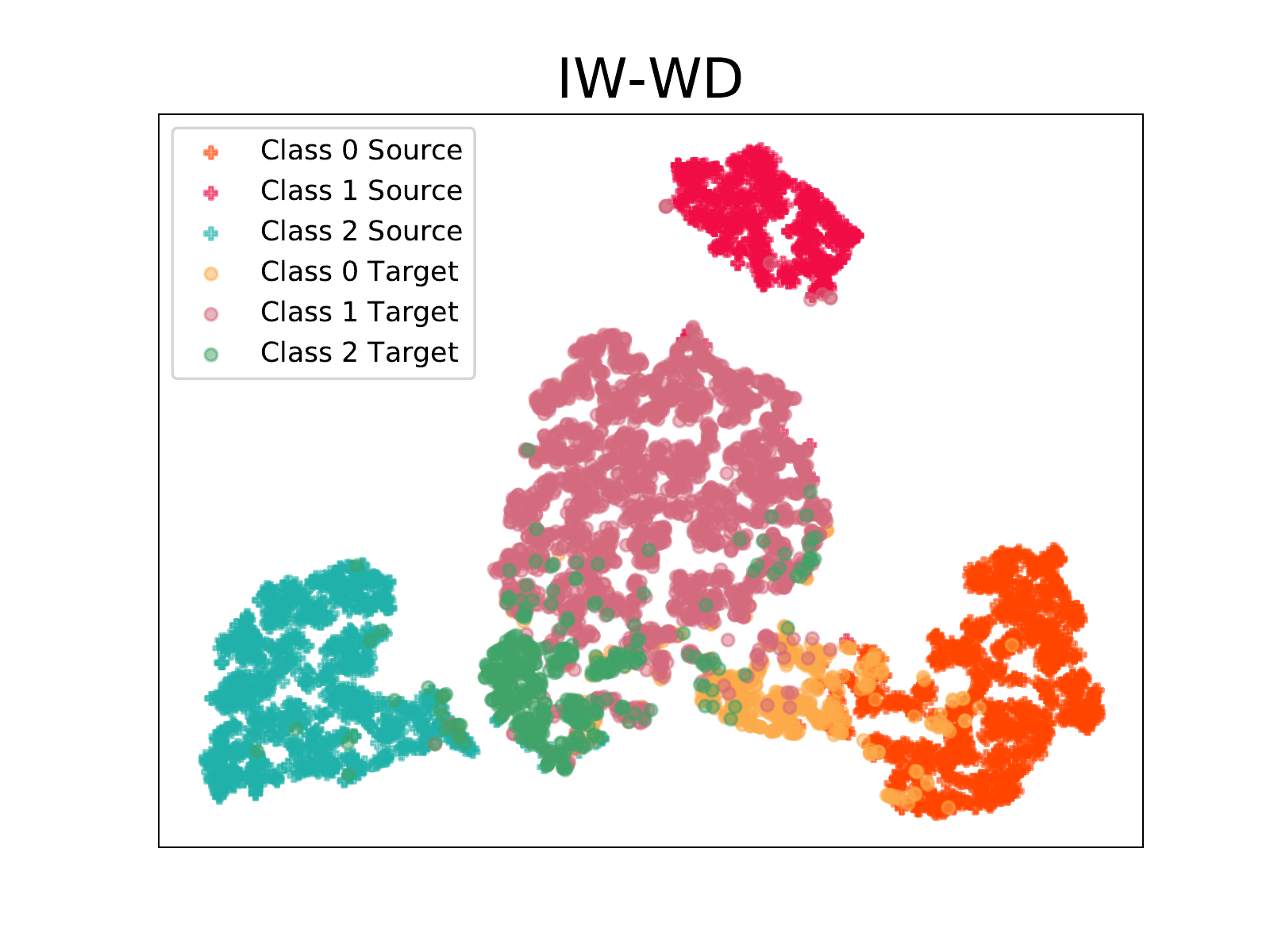}
	~\hfill~\includegraphics[width=5.cm]{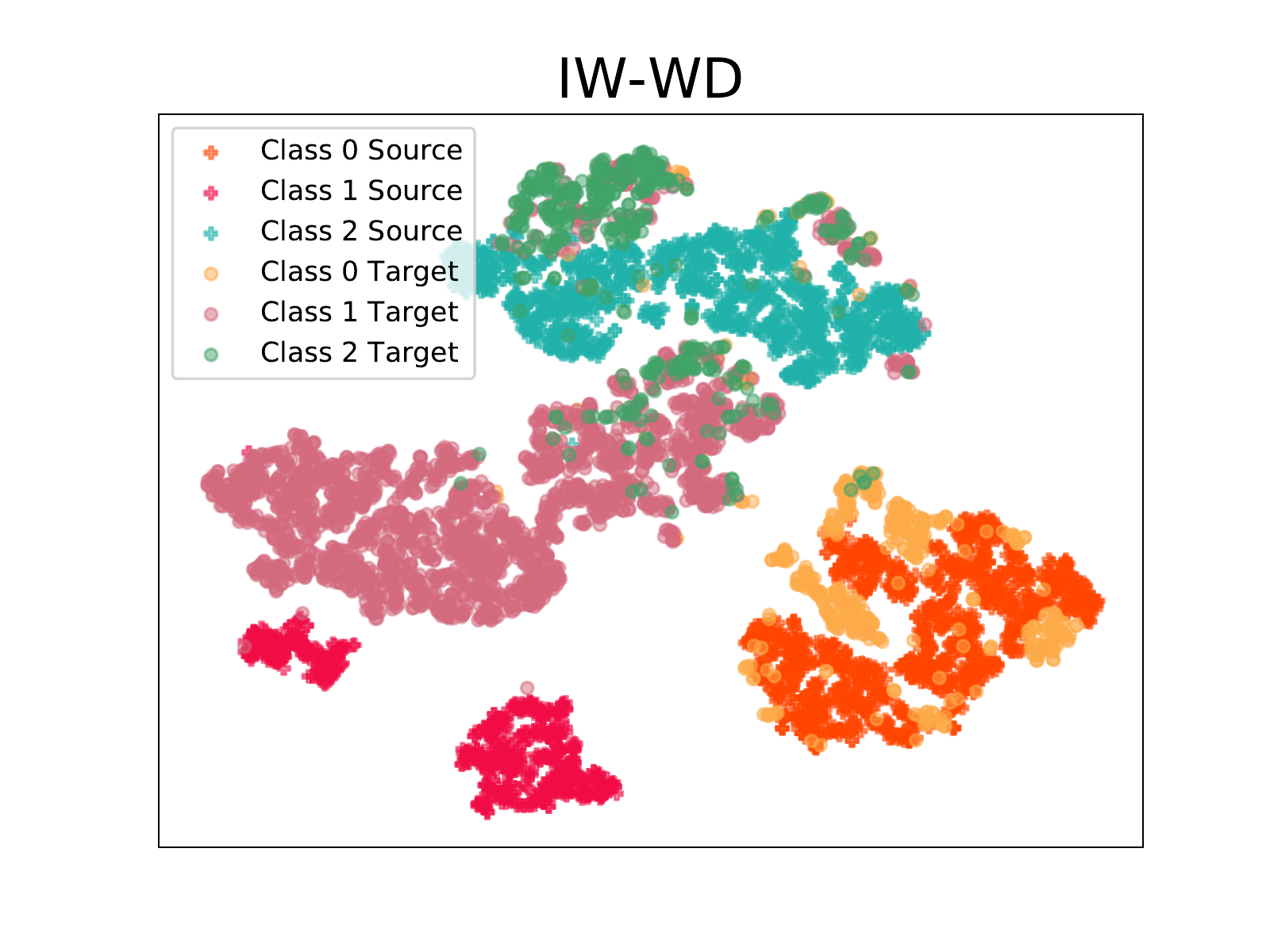}	\hfill~\\
		~\hfill
	\includegraphics[width=5.cm]{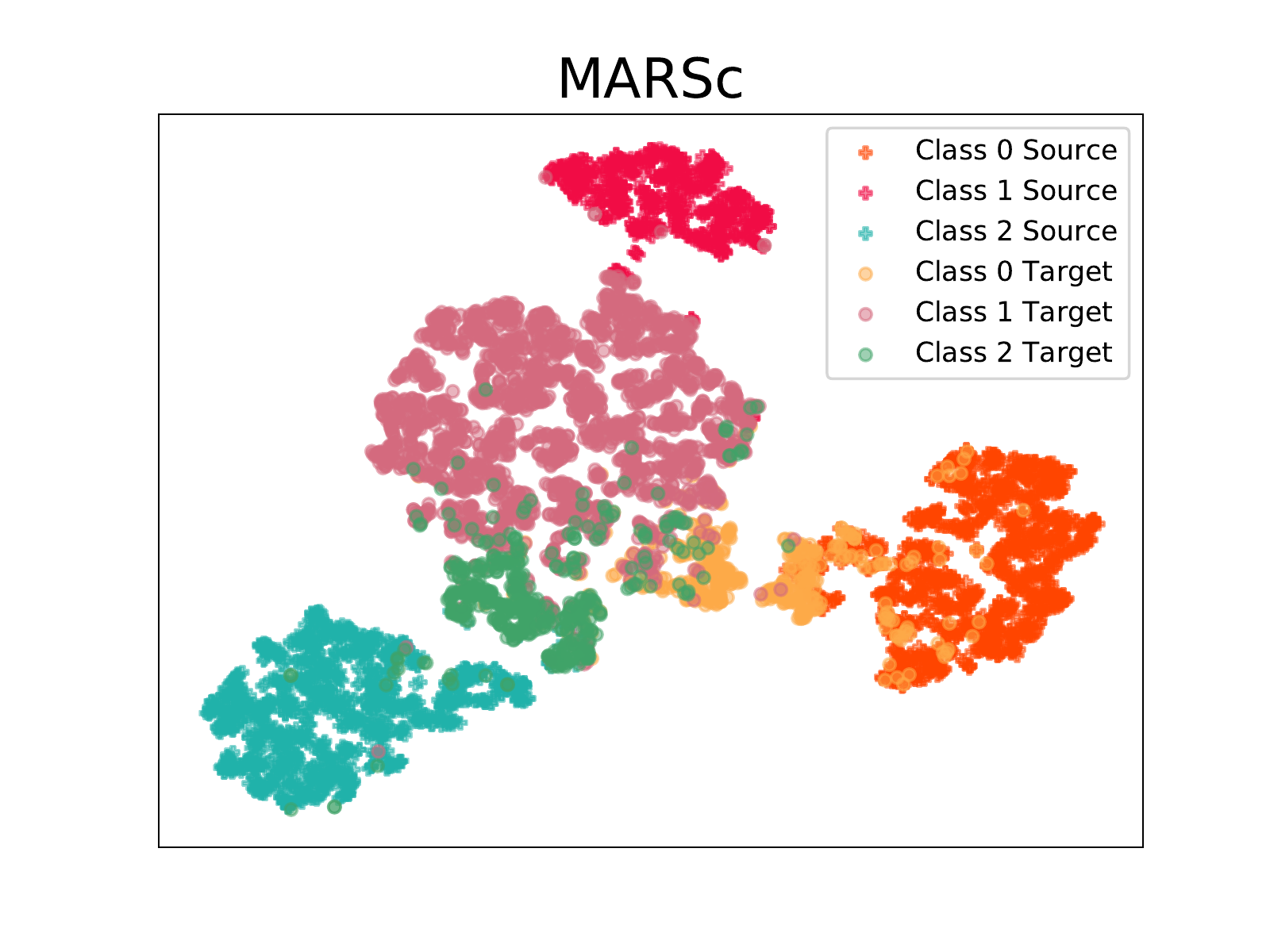}
	~\hfill~\includegraphics[width=5.cm]{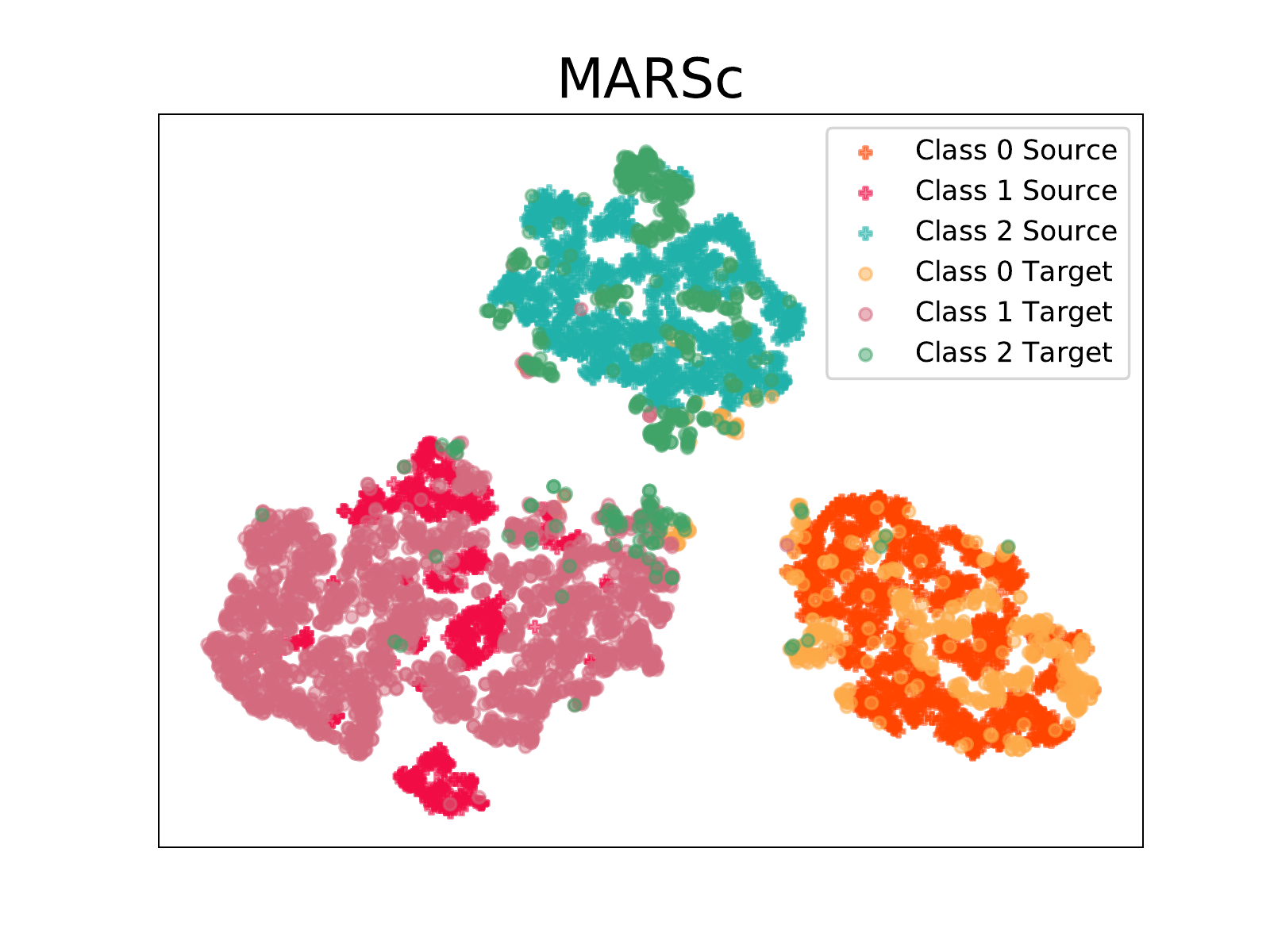}	\hfill~\\
	\caption{\emph{t-sne} embeddings of the target sample for the VisDA-3 problem and imbalance setting $2$ ($\p_S=[0.4,0.2,0.4]$ and $\p_T=[0.2,0.6,0.2]$). The columns depict the 
		embeddings obtained (left) after training on the source data without adaptation for about $10$ iterations, which is sufficient for $0$ training error. (right) after adaptation by minimizing the appropriate
		discrepancy loss between marginal distributions.
		From top to bottom, we have : (first-row) DANN, (second-row) WD$_{\beta=1}$, (third-row), IW-WD (last row) MARSc. From the
		right column, we note how DANN and WD$_{\beta=1}$ struggle in aligning the class conditionals, especially those of Class 1, which is the class that varies the most in term of label proportion. IW-WD manages to aligns the classes ``$0$" and ``$2$" but is not able
		to correctly match the class ``$1$". Instead, our MARSc approach achieves high performance and correctly aligns the class conditionals, although some few examples seem to be mis-classified. Importantly, we can remark from the left column that for this example, before alignment, the embeddings seem to satisfy our Proposition 1 hypothesis. At the contrary, the assumption needed for correctly estimating $\p_T$ for IW-WD is not satisfied, justifying
		thus the good and poor performance of those models.
		\label{fig:embeddings}}
	
\end{figure}

\subsection{Toy Dataset} The toy dataset is a 3-class problem in which
class-conditional probabilities are Gaussian distributions.
For the source distribution, we fix the mean and the covariance matrix of each
of the three Gaussians and for the target, we simply shift the means (by
a fixed translation). We have carried out two sets of experiments where we
have  fixed the shift and modified the label proportion imbalance and
another one with fixed imbalance and increasing shift. 
For space reasons, we have deported to the supplementary the results of the latter.
 Figure \ref{fig:toyimbalance} show how models perform for
varying imbalance and fixed shift. The plots nicely show what we expect. 
DANN performs worse as the imbalance increases. 
{$\text{WD}_{\beta}$ works well for all balancing but its parameter $\beta$ needs to increase with the imbalance level.}
Because of the shift in class-conditional probabilities, IW-WD is not able to properly estimate the importance weights and fails.
Our approaches are adaptive to the
imbalance and perform very well over a large range for both a low-noise and
mid-noise setting (examples of how the Gaussians are mixed are provided in the supplementary material). For the hardest problem (most-right panel), all models
have difficulties and achieve only a balanced accuracy of $0.67$ over some range of imbalance. 
Note that for this low-dimension toy problem, as expected, the approach GMM and OT-based matching achieves the best performance as reported in the supplementary material.

\subsection{Digits, VisDA  and Office}

We present some UDA experiments on computer vision datasets \citep{peng2017visda,venkateswara2017Deep}, with different imbalanced settings. Details of problem configurations as well as model architecture and training procedure can be found in the appendix.

Our first result provides an illustration in Figure 
\ref{fig:embeddings} of the latent representation we obtain for the VisDA problem after  training
on the source domain only and after convergence of the different DA algorithms. We first note that for this problem, the assumptions for correct matching seem to hold and this leads to very good visual matching of class-conditionals for MARS. 

Table \ref{tab:results} reports the averaged  balanced accuracy achieved by the different models
for only a fairly chosen subset of problems. The full table is in the supplementary. Results presented here are not comparable to results available in the literature as they mostly consider covariate shift DA (hence with balanced proportions). 
For these subsets of problems, our approaches yield the best average ranking. They perform better than competitors except on the 
MNIST-MNISTM problems where the change in distribution might violate our assumptions. Figure \ref{fig:proportion} presents some quantitative results label proportion estimation in the target domain between our method and 
IW-WD. We show that MARSc provides better estimation than this competitor 12 out of 16 experiments. As the key issue in  generalized target shift problem is the ability to estimate accurately
the importance weight or the target label proportion, we believe that the
learnt latent representation fairly satisfies our OT hypothesis leading
to good performance.

\begin{table*}
	\small
			\caption{Table of averaged \textbf{balanced accuracy} for the compared models and different domain adaptation problems and  label proportion imbalance settings.  Reported in bold are the best performances as well as other methods which achieve 
		performance that are statistically similar according to a Wilcoxon signrank test with $p=0.01$. Last lines present the summary of $34$ experiments. \#Win includes the statistical ties.\label{tab:results}}
	\resizebox{\linewidth}{!}{
		\begin{tabular}{lcccccccccc} 
			\\\hline
			Setting & Source    & DANN    & $\text{WD}_{\beta=0}$    &$\text{WD}_{\beta=1}$    & $\text{WD}_{\beta=2}$    &$\text{WD}_{\beta=3}$    & $\text{WD}_{\beta=4}$   & IW-WD & MARSg    & MARSc  \\\hline 
			\multicolumn{10}{c}{MNIST-USPS 10 modes} \\\hline
						Balanced & 76.9$\pm$3.7  & 79.7$\pm$3.5  & 93.7$\pm$0.7  & 74.3$\pm$4.3  & 51.3$\pm$4.0  & 76.6$\pm$3.3  & 71.9$\pm$5.7  & 95.3$\pm$0.4  & \textbf{95.6$\pm$0.7}  & \textbf{95.6$\pm$1.0}  \\
						Mid   & 80.4$\pm$3.1  & 78.7$\pm$3.0  & 94.3$\pm$0.7  & 75.4$\pm$3.4  & 55.6$\pm$4.3  & 79.0$\pm$3.1  & 72.3$\pm$4.2  & \textbf{95.6$\pm$0.5}  & 89.7$\pm$2.3  & 90.4$\pm$2.6 \\
						High & 78.1$\pm$4.9  & 81.8$\pm$4.0  & \textbf{93.9$\pm$1.1}  & 87.4$\pm$1.7  & 83.8$\pm$5.2  & 85.7$\pm$2.5  & 83.6$\pm$3.0  & \textbf{94.1$\pm$1.0}  & 88.3$\pm$1.5  & 89.7$\pm$2.3  \\
			
			\hline
			\multicolumn{10}{c}{USPS-MNIST 10 modes} \\\hline
						Balanced 	 & 77.0$\pm$2.6  & 80.5$\pm$2.2  & 73.4$\pm$2.8  & 66.7$\pm$2.9  & 49.9$\pm$2.8  & 55.8$\pm$2.9  & 52.1$\pm$3.5  & 80.5$\pm$2.2  & \textbf{84.6$\pm$1.7}  & \textbf{85.5$\pm$2.1} \\

						Mid  	 & \textbf{79.5$\pm$2.8}  & \textbf{78.9$\pm$1.8}  & 75.8$\pm$1.6  & 63.3$\pm$2.3  & 53.2$\pm$2.8  & 47.2$\pm$2.4  & 48.3$\pm$2.9  & \textbf{78.4$\pm$3.5}  &\textbf{ 79.7$\pm$3.6}  & \textbf{78.5$\pm$2.5} \\ 
			
						High  	 & \textbf{78.5$\pm$2.4}  & \textbf{77.8$\pm$2.0}  & \textbf{76.1$\pm$2.7}  & 63.0$\pm$3.3  & 57.6$\pm$4.8  & 51.2$\pm$4.4  & 49.3$\pm$3.3  & 71.5$\pm$4.7  & 75.6$\pm$1.8  & \textbf{77.1$\pm$2.4} \\\hline
			\multicolumn{10}{c}{MNIST-MNISTM 10 modes} \\\hline
			Setting 1      & 58.3$\pm$1.3  & \textbf{61.2$\pm$1.1}  & 57.4$\pm$1.7  & 50.2$\pm$4.4  & 47.0$\pm$2.0  & 57.9$\pm$1.1  & 60.0$\pm$1.3  & \textbf{63.1$\pm$3.1}  & 58.1$\pm$2.3  & 56.6$\pm$4.6 \\
			Setting 2   & 60.0$\pm$1.1  & 61.1$\pm$1.0  & 58.1$\pm$1.4  & 53.4$\pm$3.5  & 48.6$\pm$2.4  & 59.7$\pm$0.7  & 58.1$\pm$0.8  & 65.0$\pm$3.5  & 57.7$\pm$2.3  & 55.7$\pm$2.1 \\
			Setting 3       & 58.1$\pm$1.2  & \textbf{60.4$\pm$1.4}  & 57.7$\pm$1.2  & 47.7$\pm$4.9  & 42.2$\pm$7.3  & 57.1$\pm$1.0  & 53.5$\pm$1.1  & \rev{52.5$\pm$14.8}  & 53.7$\pm$7.2  & 53.7$\pm$3.3 \\\hline
												\multicolumn{10}{c}{VisdDA 3 modes} \\\hline
			
			setting 1        & 79.3$\pm$4.3  & 78.9$\pm$9.1  & {91.8$\pm$0.7}  & 73.8$\pm$2.0  & 61.7$\pm$2.2  & 65.6$\pm$2.7  & 58.6$\pm$2.6  & \textbf{94.1$\pm$0.6}  & {92.5$\pm$1.2}  & {92.1$\pm$1.8}  \\
			setting 4        & 80.2$\pm$5.3  & 75.5$\pm$9.3  & 72.8$\pm$1.2  & 86.9$\pm$7.5  & 86.8$\pm$1.2  & 80.2$\pm$6.9  & 75.7$\pm$2.0  & 85.9$\pm$5.7  & 87.7$\pm$3.0  & \textbf{91.3$\pm$4.8} \\
			setting 2        & 81.5$\pm$3.5  & \rev{68.5$\pm$14.7}  & 68.8$\pm$1.3  & 84.5$\pm$1.2  & \textbf{93.2$\pm$0.4}  & \rev{73.7$\pm$14.2}  & 60.7$\pm$0.9  & \rev{78.7$\pm$10.8}  & 84.0$\pm$4.3  & \textbf{91.8$\pm$3.4}  \\
			setting 3        & 78.4$\pm$3.2  & \rev{59.0$\pm$15.9}  & 64.1$\pm$1.9  & 79.2$\pm$0.8  & 77.1$\pm$10.3  & 90.0$\pm$0.5  & \textbf{94.4$\pm$0.3}  & 78.0$\pm$9.3  & 75.7$\pm$4.1  & \rev{73.9$\pm$13.2} \\
			setting 5        & 83.5$\pm$3.5  & \rev{80.9$\pm$14.5}  & 63.9$\pm$0.6  & 73.7$\pm$7.3  & 50.9$\pm$1.1  & 76.5$\pm$6.7  & 59.3$\pm$1.0  & \textbf{90.4$\pm$3.6}  & \textbf{89.0$\pm$0.9}  & \textbf{89.0$\pm$3.5} \\
			setting 6        & 80.9$\pm$4.2  & \rev{54.8$\pm$19.8}  & 45.3$\pm$2.4  & 63.7$\pm$5.1  & 67.1$\pm$6.1  & 42.9$\pm$11  & 62.2$\pm$1.4  & \textbf{94.4$\pm$1.0}  & \textbf{93.7$\pm$0.4}  & \textbf{93.9$\pm$1.0} \\
			setting 7        & 79.2$\pm$3.7  & 42.9$\pm$2.5  & 57.5$\pm$1.5  & 55.4$\pm$2.0  & 50.2$\pm$4.3  & 43.7$\pm$8.3  & 62.5$\pm$0.8  & \textbf{88.5$\pm$4.9}  & 78.6$\pm$3.2  & \textbf{82.3$\pm$7.5} \\\hline
			
			\multicolumn{10}{c}{VisdDA 12 modes} \\\hline
			setting 1       & 41.9$\pm$1.5  & 52.8$\pm$2.1  & 45.8$\pm$4.3  & 44.2$\pm$3.0  & 35.5$\pm$4.6  & 41.0$\pm$3.0  & 37.6$\pm$3.4  & 50.4$\pm$2.3  & 53.3$\pm$0.9  & \textbf{55.1$\pm$1.6}\\
			setting 2      & 41.8$\pm$1.5  & 50.8$\pm$1.6  & 45.7$\pm$8.9  & 40.5$\pm$4.8  & 36.2$\pm$5.0  & 36.1$\pm$4.6  & 31.9$\pm$5.7  & 48.6$\pm$1.8  & 53.1$\pm$1.6  & \textbf{55.3$\pm$1.6} \\
			setting 3     & 40.6$\pm$4.3  & 49.2$\pm$1.3  & 47.1$\pm$1.6  & 42.1$\pm$3.0  & 36.3$\pm$4.4  & 37.3$\pm$3.5  & 35.0$\pm$5.4  & 46.6$\pm$1.3  & 50.8$\pm$1.6  & \textbf{52.1$\pm$1.2}  \\\hline
						\multicolumn{10}{c}{Office 31} \\\hline
			A - D       & 73.7$\pm$1.4  & 74.3$\pm$1.8  & \textbf{77.2$\pm$0.7}  & 65.1$\pm$2.0  & 62.7$\pm$2.6  & 71.5$\pm$1.2  & 63.9$\pm$1.1  & 75.7$\pm$1.6  & 76.1$\pm$0.9  & \textbf{78.2$\pm$1.3}  \\
			D - W        & 83.7$\pm$1.1  & 81.9$\pm$1.5  & 82.6$\pm$0.6  & 83.5$\pm$0.8  & 82.8$\pm$0.7  & 80.1$\pm$0.5  & \textbf{87.1$\pm$0.9}  & 78.9$\pm$1.5  & \textbf{86.3$\pm$0.6}  & 86.2$\pm$0.8 \\
			W - A        & 54.1$\pm$0.9  & 52.2$\pm$1.0  & 48.9$\pm$0.4  & 56.8$\pm$0.4  & 53.0$\pm$0.5  & 58.8$\pm$0.4  & 54.9$\pm$0.5  & 52.2$\pm$0.7  & \textbf{60.7$\pm$0.8}  & 55.2$\pm$0.8 \\
			W - D       & 92.8$\pm$0.9  & 87.8$\pm$1.4  & 95.1$\pm$0.3  & 93.1$\pm$0.5  & 87.6$\pm$0.9  & 94.7$\pm$0.6  & 91.2$\pm$0.6  & \textbf{97.0$\pm$0.9}  & 95.1$\pm$0.8  & 93.8$\pm$0.6 \\
			D - A       & 52.5$\pm$0.9  & 48.1$\pm$1.2  & 49.8$\pm$0.4  & 48.8$\pm$0.5  & 50.1$\pm$0.4  & 50.3$\pm$0.7  & 50.8$\pm$0.5  & 41.4$\pm$1.8  & \textbf{54.7$\pm$0.9}  &\textbf{55.0$\pm$0.9} \\
			A - W      & 67.5$\pm$1.5  & 70.2$\pm$1.0  & 67.1$\pm$0.6  & 60.6$\pm$2.1  & 52.9$\pm$1.4  & 64.0$\pm$1.3  & 59.7$\pm$0.8  & 68.8$\pm$1.6  & \textbf{73.1$\pm$1.5}  & \textbf{71.9$\pm$1.2} \\\hline
			\multicolumn{10}{c}{Office Home} \\\hline
			Art - Clip      & 37.7$\pm$0.7  & 36.8$\pm$0.6  & 33.4$\pm$1.2  & 31.4$\pm$1.6  & 27.1$\pm$1.6  & 31.6$\pm$5.2  & 29.3$\pm$6.6  & 37.7$\pm$0.6  & 37.6$\pm$0.5  & \textbf{38.65$\pm$0.5} \\
			Art - Product       & 49.7$\pm$0.9  & 50.0$\pm$0.9  & 39.4$\pm$3.6  & 38.8$\pm$2.3  & 35.1$\pm$2.3  & 35.1$\pm$3.4  & 32.9$\pm$3.6  & 49.0$\pm$0.3  & \textbf{55.3$\pm$0.7}  & 52.2$\pm$0.4 \\
			Art - Real       & 58.2$\pm$1.0  & 53.7$\pm$0.5  & 51.1$\pm$2.3  & 50.4$\pm$1.8  & 46.4$\pm$2.4  & 51.5$\pm$4.5  & 45.3$\pm$11.0  & 57.7$\pm$0.7  &\textbf{ 63.88$\pm$0.5}  & 58.8$\pm$0.7 \\
			Clip - Art      & 35.3$\pm$1.4  & 35.7$\pm$1.5  & 28.9$\pm$2.9  & 23.1$\pm$2.0  & 18.4$\pm$1.5  & 22.0$\pm$3.1  & 20.4$\pm$2.3  & 28.7$\pm$1.2  & \textbf{41.2$\pm$0.6}  & \textbf{40.7$\pm$0.8} \\
			Clip - Product       &\textbf{51.9$\pm$1.3}  & \textbf{52.1$\pm$0.8}  & 39.2$\pm$7.9  & 39.3$\pm$2.6  & 34.7$\pm$1.9  & 39.6$\pm$2.8  & 39.5$\pm$2.9  & 34.5$\pm$2.1  & 51.7$\pm$0.5  & \textbf{52.1$\pm$0.5 } \\
			Clip - Real       & 50.7$\pm$1.2  & 51.4$\pm$1.0  & 43.2$\pm$2.2  & 40.1$\pm$2.1  & 32.7$\pm$1.4  & 39.2$\pm$2.4  & 35.8$\pm$2.8  & 35.7$\pm$1.1  & 54.0$\pm$0.3  & \textbf{56.6$\pm$0.5}  \\
			Product - Art       & \textbf{39.6$\pm$1.6}  & 39.5$\pm$1.5  & 39.2$\pm$1.0  & 36.1$\pm$1.0  & 38.8$\pm$1.1  & 39.5$\pm$0.6  & 38.2$\pm$0.6  & 34.0$\pm$1.4  & 37.8$\pm$1.1  & \textbf{39.3$\pm$1.3}  \\
			Product - Clip       & 32.7$\pm$0.9  & \textbf{37.2$\pm$1.0}  & 33.8$\pm$0.5  & 28.4$\pm$0.7  & 28.4$\pm$0.6  & 29.7$\pm$0.5  & 31.8$\pm$0.8  & 24.9$\pm$1.0  & 30.9$\pm$0.8  & 29.3$\pm$0.9 \\
			Product - Real       & \textbf{62.1$\pm$1.3}  & \textbf{62.5$\pm$1.2}  & \textbf{62.6$\pm$0.7}  & 58.1$\pm$0.5  & 57.6$\pm$0.6  & 59.3$\pm$0.6  & 57.1$\pm$0.8  & 59.2$\pm$0.9  & 60.5$\pm$0.6  & \textbf{62.2$\pm$0.7}  \\
			Real - Product        & 68.3$\pm$1.0  & \textbf{70.4$\pm$0.8}  & \textbf{70.2$\pm$0.5}  & 61.7$\pm$0.8  & 63.4$\pm$0.9  & 61.5$\pm$1.0  & 65.5$\pm$0.6  & 64.5$\pm$1.5  & 64.8$\pm$3.6  & 66.5$\pm$1.1 \\
			Real - Art       & \textbf{40.3$\pm$0.9}  & \textbf{41.3$\pm$1.0}  & 39.2$\pm$0.7  & 33.5$\pm$1.3  & 31.6$\pm$1.5  & 36.9$\pm$0.9  & 36.1$\pm$0.9  & 36.9$\pm$1.9  & 39.9$\pm$1.4  & 39.2$\pm$1.6  \\
			Real - Clip      & \textbf{42.7$\pm$1.1}  & 40.9$\pm$1.0  & 40.4$\pm$0.5  & 35.6$\pm$0.8  & 34.9$\pm$0.9  & 40.4$\pm$0.5  & 35.6$\pm$0.8  & 35.6$\pm$2.0  & 38.7$\pm$2.1  & 38.8$\pm$2.5  \\\hline \hline
			\#Wins (/34)  &   7 & 9 & 5 & 0 & 1 & 0 & 2 & 9 & 12 & 21 \\
			Aver. Rank  & 4.16 & 4.73 & 5.32 & 6.97 & 8.38 & 6.59 & 7.57 & 4.95 & 3.38 & 2.95 \\\hline
	\end{tabular}}
\end{table*}

\section{Conclusion}
\label{sec:conclusion}

The paper proposed a strategy for handling generalized target shift in domain adaptation. It builds upon the simple idea that if the target label proportion where known, then reweighting class-conditional probabilities in the source domain
is sufficient for designing a distribution discrepancy that takes into account 
those shifts. In practice, our algorithm  
estimates the label proportion using Gaussian Mixture models or agglomerative clustering and then matches source and target class-conditional components for allocating the label proportion estimations. Resulting label proportion is then plugged into an  weighted Wasserstein distance. When used for adversarial domain adaptation, we show that our approach outperforms competitors and is able to  adapt to imbalance in target domains.

\rev{
Several points are worth to be extended in future works. 
Our main assumption, for achieving estimations of class-conditionals, is the cyclical monotonicity of the class-conditional distributions in the latent space. However, unfortunately, we do not have any method for checking whether
this assumption holds after training the representation on the source domain, especially as it supposed the knowledge of the class in the target domain.
Hence, it would  be interesting to enforce  this assumption to hold, for instance by defining a regularization term based on the notion of cyclical monotonicity. \\
Furthermore, at the present time,
we have considered simple mean-based approach for matching distributions, it is worth investigating whether higher-order moments are useful for improving the 
matching. Our algorithm relies mostly on our ability to estimate label proportion,
we would be interested on in-depth theoretical analysis label proportion estimation and their convergence and convergence rate guarantees.
}

{
\section*{Acknowledgments}
This work benefited from the support of the project OATMIL ANR-17-CE23-0012 of the French, LEAUDS ANR-18-CE23, was performed using computing resources of CRIANN (Normandy, France), Chaire AI RAIMO and OTTOPIA, 3IA Côte d'Azur Investments ANR-19-P3IA-0002 of the French National Research Agency (ANR). This research was produced within the framework of Energy4Climate Interdisciplinary Center (E4C) of IP Paris and Ecole des Ponts ParisTech. This research was supported by 3rd Programme d’Investissements d’Avenir ANR-18-EUR-0006-02. This action benefited from the support of the Chair "Challenging Technology for Responsible Energy" led by l’X – Ecole polytechnique and the Fondation de l’Ecole polytechnique, sponsored by TOTAL.
}
{}

\bibliographystyle{plain}
\clearpage
\onecolumn

\setcounter{proposition}{0}
\setcounter{lemma}{0}
\setcounter{theorem}{0}
\setcounter{algorithm}{0}
\section*{\centering Supplementary material for \\Match and Reweight for Generalized Target Shift}

This supplementary material presents some details of the theoretical and algorithmic aspects of the work as well as as some additional results. They are listed as below.

\begin{enumerate}
	
	\item Theoretical details and proofs 
		\item Dataset details and architecture details are given in Section \ref{sec:data} and \ref{sec:architect}
	
	\item Figure \ref{fig:exampletoydata} presents some samples of the 3-class toy data set for different configurations of covariance matrices
	making the problem easy, of mid-difficulty or difficult.
		\item Figure \ref{fig:toyshift} exhibits the performances of the compared algorithms depending on the shift of the class-conditional distributions.
	\item Figure \ref{fig:toyimbalancegmm} shows  for the imbalanced toy problem, the results obtained by all competitors including a GMM.
	\item Table \ref{tab:resultsgmm} shows the performance of Source only and a simple GMM+OT on a Visda 3-class problem.
	\item Table \ref{tab:dataconfig} depicts the different configurations of the dataset we used in our experiments
				\end{enumerate}

\section{Theoretical and algorithmic details}
\label{sec:theoretical}

\subsection{Lemma 1 and its proof}

\begin{lemma} 
	For all $p_T^y$, $p_S^y$ and for any continuous class-conditional density distribution $p_S^k$ and $p_T^k$ such that for all $z$ and $k$, we have $p_S(z|y=k)>0$ and $p_S(y=k)>0$.
	the following inequality holds.
	$$
	\sup_{k,z} [w(z) S_k(z)] \geq 1
	$$
	with $S_k(z) = \frac{p_T^g(z|y=k)}{p_S^g(z|y=k)}$ and $w(z) = \frac{p_T^{y=k}}{p_S^{y=k}}$, if $z$ is of class $k$.
\end{lemma}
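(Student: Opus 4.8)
The plan is to exploit the product structure of $w(z)S_k(z)$: the factor $w$ depends only on the class label $k$ (it equals $p_T^{y=k}/p_S^{y=k}$ whenever $z$ is of class $k$), while $S_k$ is a genuine density ratio. Accordingly, I would bound each factor separately by a normalization argument and then recombine.

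First I would show that for each fixed class $k$ one has $\sup_z S_k(z) \geq 1$. The argument is by contradiction: suppose $p_T(z|y=k) < p_S(z|y=k)$ for every $z$. Since both are continuous probability densities (and $p_S(z|y=k)>0$ by assumption), integrating this strict inequality over the whole latent space yields $1 = \int p_T(z|y=k)\,dz < \int p_S(z|y=k)\,dz = 1$, a contradiction. Hence there must exist some $z$ with $S_k(z)\geq 1$, i.e. $\sup_z S_k(z) \geq 1$. Next, by a completely analogous summation argument on the label proportions, I would show $\max_k \frac{p_T^{y=k}}{p_S^{y=k}} \geq 1$: if $p_T^{y=k} < p_S^{y=k}$ held for every $k$, summing over $k$ gives $1 = \sum_k p_T^{y=k} < \sum_k p_S^{y=k} = 1$, again a contradiction, so at least one class has $w\geq 1$.

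Finally I would combine the two bounds. Because $w(z)$ is constant in $z$ on each class (equal to $p_T^{y=k}/p_S^{y=k}$ there), the double supremum factorizes as $\sup_{k,z}[w(z)S_k(z)] = \sup_k\big[\frac{p_T^{y=k}}{p_S^{y=k}}\,\sup_z S_k(z)\big]$. Using the first step $\sup_z S_k(z)\geq 1$, this is at least $\sup_k \frac{p_T^{y=k}}{p_S^{y=k}}$, which by the second step is at least $1$, giving the claim.

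The main obstacle here is not computational but conceptual and notational: one must treat $w(z)$ correctly as a piecewise-constant-in-$z$ function indexed by the class, so that the supremum over $(k,z)$ legitimately factors into a supremum over $k$ of the product of $w$ with $\sup_z S_k(z)$. The continuity hypothesis is precisely what makes the integral contradiction rigorous, as it rules out the inequality being violated only on a measure-zero set; everything else is a direct consequence of the two densities (respectively, the two label distributions) each integrating (summing) to one.
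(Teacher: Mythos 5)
Your proof is correct and follows essentially the same route as the paper's: the same two contradiction arguments (integrating the class-conditional densities and summing the label proportions, each normalized to one) followed by the same factorization of the supremum over $(k,z)$. Your explicit treatment of $w(z)$ as piecewise constant in $z$ is in fact a cleaner rendering of the paper's somewhat loose final display $\sup_{k}\bigl[w(z)\sup_z S_k(z)\bigr]$, so there is nothing to add.
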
 
\begin{proof}
	Let first show that for any $k$ the ratio $\sup_z \frac{p_T^k}{p_S^k} \geq 1$. 
	Suppose that there does  not exist a $z$  such that $\frac{p_T^k}{p_S^k} \geq 1$. This means that : $\forall z\,\, p_T^k<p_S^k$. By integrating those positive and continuous functions on their domains lead to the contradiction that the integral of one of them is not equal to 1. Hence, there must exists a $z$ such that $\frac{p_T^k}{p_S^k} \geq 1$. Hence, we indeed have ratio $\sup_z \frac{p_T^k}{p_S^k} \geq 1$.
	
	Using a similar reasoning, we can show that $\sup_k \frac{p_T^{y=k}}{p_S^{y=k}} \geq 1 $. For a sake of completeness,
	we provide it here. Assume that $\forall k,\,\,p_T^{y=k}<p_S^{y=k}$.
	We thus have $\sum_k p_T^{y=k}< \sum_k p_S^{y=k}$. Since noth sums should be equal to $1$ leads to a contradiction.
	
	By exploiting these two inequalities, we have :
	$$\sup_{k,z} [w(z) S_k(z)] = \sup_{k} \left [w(z) \sup_z S_k(z)\right]\geq \sup_{k} w(z)  \geq 1$$
	
\end{proof}

\subsection{Theorem 1 and its proof}

\begin{theorem}
	Under the assumption of Lemma \ref{lemme:bound}, and assuming that any function $h \in \mathcal{H}$ is $K$-Lipschitz and $g$ is a continuous function then
	for every function  $h$ and $g$, we have
	$$\varepsilon_{T}(h \circ g,f) \leq \varepsilon_{S}(h \circ g,f)  + 2K \cdot WD_1(p_S^g,p_T^g)
	+ \left[1 + \sup_{k,z} w(z)S_k(z))\right] \varepsilon_{S}(h^\star \circ g,f) + \varepsilon_{T}^z(f_S^g,f_T^g)$$
	where $S_k(z)$ and $w(z)$ are as defined in Lemma \ref{lemme:bound}, $h^\star = \argmin_{h \in \mathcal{H}} \varepsilon_{S}(h \circ g;f)$ 
	and $\varepsilon_{T}^z(f_S^g,f_T^g) = \mathbb{E}_{z \sim p_T^z}[|f_T^g(z) - f_S^g(z)|] $
\end{theorem}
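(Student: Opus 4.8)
The plan is to follow the classical Ben-David style decomposition, using the optimal source hypothesis $h^\star$ as a pivot and converting the one cross-domain comparison that arises into a Wasserstein term via the result of \citet{shen2018wasserstein}. Concretely, I would first apply the triangle inequality for the absolute-value loss to write $\varepsilon_T(h\circ g,f) \leq \varepsilon_T(h\circ g,h^\star) + \varepsilon_T(h^\star\circ g,f)$, then add and subtract $\varepsilon_S(h\circ g,h^\star)$ so that the quantity $\varepsilon_T(h\circ g,h^\star) - \varepsilon_S(h\circ g,h^\star)$ is isolated. Rewriting everything in the latent space through the identity $\varepsilon_U(h\circ g,f)=\varepsilon_U^z(h,f_U^g)$, the inequality $\varepsilon_S^z(h,h') - \varepsilon_T^z(h,h') \leq 2K\, W_1(p_S^g,p_T^g)$ of \citet{shen2018wasserstein} (applied with $h'=h^\star$) bounds this difference by $2K\, WD_1(p_S^g,p_T^g)$, since $WD_1$ with $w(\cdot)=1$ is exactly $W_1$; this produces the Wasserstein term of the statement.

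Next I would peel off the remaining pieces by two further triangle inequalities in the latent space: one on $\varepsilon_S^z(h,h^\star)$ to insert the source labelling function $f_S^g$, yielding $\varepsilon_S^z(h,f_S^g)+\varepsilon_S^z(h^\star,f_S^g)$; and one on $\varepsilon_T^z(h^\star,f_T^g)$ to insert $f_S^g$ as pivot, yielding $\varepsilon_T^z(h^\star,f_S^g)+\varepsilon_T^z(f_S^g,f_T^g)$. Here $\varepsilon_S^z(h,f_S^g)$ is exactly $\varepsilon_S(h\circ g,f)$, the first term of the bound, and $\varepsilon_T^z(f_S^g,f_T^g)$ is precisely the uncontrollable labelling-discrepancy term appearing verbatim in the statement.

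The crux is to control the leftover pair $\varepsilon_S^z(h^\star,f_S^g)+\varepsilon_T^z(h^\star,f_S^g)$ by a single source error scaled by $1+\sup_{k,z}w(z)S_k(z)$. Writing $r_S(z)=|h^\star(z)-f_S^g(z)|$, this pair equals $\int r_S(z)\,[p_S^g(z)+p_T^g(z)]\,dz$, and the key manipulation is to expand both marginals over classes via $p_U^g=\sum_k p_U^{y=k}p_U^k$ and factor out the source class-conditional, so that the target contribution surfaces as $\tfrac{p_T^{y=k}}{p_S^{y=k}}S_k(z)=w(z)S_k(z)$; this is the step that uses $p_T(z|y=k)=S_k(z)\,p_S(z|y=k)$. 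Pulling $\sup_{k,z}w(z)S_k(z)$ out of the integral collapses the factor $\bigl[1+w(z)S_k(z)\bigr]$ to $1+\sup_{k,z}w(z)S_k(z)$ and leaves exactly $\varepsilon_S^z(h^\star,f_S^g)=\varepsilon_S(h^\star\circ g,f)$. I expect this expansion to be the main obstacle, since it is where the label-proportion ratio $w$ and the conditional ratio $S_k$ must be cleanly separated and the finiteness guaranteed by Lemma \ref{lemme:bound} is implicitly invoked; the rest is bookkeeping with triangle inequalities. Collecting the four resulting terms then yields the stated bound.
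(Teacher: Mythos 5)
Your proposal is correct and follows essentially the same route as the paper's own proof: the same pivot decomposition via $h^\star$ with the Shen et al.\ inequality converting the cross-domain gap into $2K\cdot WD_1(p_S^g,p_T^g)$, the same two triangle inequalities inserting $f_S^g$, and the identical key step of expanding $\int r_S(z)\,[p_S^g(z)+p_T^g(z)]\,dz$ over classes to factor out $w(z)S_k(z)$ and pull the supremum out of the integral. No gaps to report.
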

\begin{proof}
	At first, let us remind the following result due to \citet{shen2018wasserstein}.
	Given two probability distributions $p_S^g$ and $p_T^g$, we have
	$$
	\varepsilon_S^z(h,h^\prime) - \varepsilon_T^z(h,h^\prime) \leq 2K \cdot WD_1(p_S^g,p_T^g)
	$$
	for every hypothesis $h$,\,$h^\prime$ in $\mathcal{H}$. Then, we have the following bound for the target error
	\begin{align}
	\varepsilon_{T}(h \circ g,f) &\leq \varepsilon_{T}(h \circ g,h^\star \circ g) + 
	\varepsilon_{T}(h^\star \circ g,f) \label{eq:1}\\
	&\leq \varepsilon_{T}(h \circ g,h^\star \circ g) + \varepsilon_{S}(h \circ g,h^\star \circ g) - \varepsilon_{S}(h \circ g,h^\star \circ g) +
	\varepsilon_{T}(h^\star \circ g,f) \label{eq:2}\\
	& \leq \varepsilon_{S}(h \circ g,h^\star \circ g) + 	\varepsilon_{T}(h^\star \circ g,f)  + 2K\cdot WD_1(p_S^g,p_T^g) \label{eq:3}\\
	& = \varepsilon_{S}^z(h,h^\star) + 	\varepsilon_{T}^z(h^\star,f_T^g)  + 2K\cdot WD_1(p_S^g,p_T^g) \label{eq:4} \\
	&  \leq \varepsilon_{S}^z(h,f_S^g) + \varepsilon_{S}^z(h^\star,f_S^g) +	\varepsilon_{T}^z(h^\star,f_T^g)  + 2K\cdot WD_1(p_S^g,p_T^g) \label{eq:5} \\
	&  \leq \varepsilon_{S}^z(h,f_S^g) + \varepsilon_{S}^z(h^\star,f_S^g) +	\varepsilon_{T}^z(h^\star,f_S^g)+ \varepsilon_{T}^z(f_S^g,f_T^g) + 2K\cdot WD_1(p_S^g,p_T^g)\label{eq:6}
	\end{align}
	where the lines \eqref{eq:1}, \eqref{eq:5}, \eqref{eq:6} have been obtained using triangle inequality, Line \eqref{eq:3} by using $\varepsilon_{U}(h \circ g,h^\star \circ g) =  \varepsilon_{U}^z(h,h^\star)$ and by applying Shen's et al.
	above inequality, Line \eqref{eq:4} by using $\varepsilon_{U}(h \circ g,f) =  \varepsilon_{U}^z(h,f_U^g)$.
	Now, let us analyze the term $\varepsilon_{S}^z(h^\star,f_S^g) +	\varepsilon_{T}^z(h^\star,f_S^g)$. Denote as $r_S(z) = |h^\star(z)-f_S^g(z)|$. Hence, we have
	\begin{align}
	\varepsilon_{S}^z(h^\star,f_S^g) +	\varepsilon_{T}^z(h^\star,f_S^g) &= \int r_S(z) [p_S^g(z) + p_T^g(z)] dz \\
	&= \sum_k p_S(y=k) \int r_S(z)p_S^g(z|y=k)\big[1 + \frac{p_T(y=k)}{p_S(y=k)}S_k(z)\big] dz \label{eq:10}\\
	& \leq \left(1 + \sup_{k,z} [w(z) S_k(z)] \right) 	\varepsilon_{S}^z(h^\star,f_S^g)
	\end{align}
	where Line \eqref{eq:10} has been obtained by expanding marginal distributions. Merging the last inequality into Equation \eqref{eq:6} concludes the proof.
\end{proof}

\subsection{Proposition 1 and its proof }

\begin{proposition}
	Denote as $ \nu = \frac{1}{C}\sum_{j=1}^{C}\delta_{p_S^j}$ and
	$ \mu = \frac{1}{C}\sum_{j=1}^{C}\delta_{p_T^j}$, representing
	respectively the class-conditional probabilities in source and target domain. 
	Given $\mathcal{D}$ a  distance over probability distributions, 
	Assume that for any permutation $\sigma$ of $C$ elements, the following assumption,
	known as the $\mathcal{D}$-cyclical monotonicity relation,
	holds	$$\sum_j \mathcal{D}(p_S^j,p_T^j)\leq \sum_j \mathcal{D}(p_S^j,p_T^{\sigma(j)})$$
		then solving the optimal transport problem between
	$\nu$ and $\mu$ as defined in equation \eqref{eq:wd}  using  $\mathcal{D}$ as the ground cost matches correctly class-conditional probabilities.
\end{proposition}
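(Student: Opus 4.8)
The plan is to reduce the optimal transport problem to a finite linear assignment problem and then read off the correct matching directly from the cyclical monotonicity hypothesis. First I would note that, with uniform marginals $\a = \b = \frac{1}{C}\1$, the transport polytope $\Pi(\frac{1}{C}\1, \frac{1}{C}\1)$ is a rescaling of the Birkhoff polytope of doubly stochastic matrices, and the objective $\langle \D, \P\rangle$ with $\D_{i,j} = \mathcal{D}(p_S^i, p_T^j)$ is linear in $\P$.

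Since a linear objective over a compact polytope attains its minimum at an extremal point, and since by Birkhoff's theorem the extremal points of the Birkhoff polytope are exactly the permutation matrices, there exists an optimal plan of the form $\P^\star = \frac{1}{C}\P_{\sigma^\star}$ for some permutation $\sigma^\star$ of $\{1,\dots,C\}$. Consequently the optimal transport cost equals $\frac{1}{C}\min_\sigma \sum_j \mathcal{D}(p_S^j, p_T^{\sigma(j)})$, i.e. the OT problem is equivalent to the discrete assignment problem over permutations.

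Then I would invoke the standing hypothesis, which states precisely that the identity permutation minimizes $\sum_j \mathcal{D}(p_S^j, p_T^{\sigma(j)})$ over all $\sigma$. Hence the diagonal plan $\frac{1}{C}\I$ attains the optimal transport cost and is therefore optimal; it assigns $p_S^j$ to $p_T^j$ for every $j$, which is the correct matching. Equivalently, one can phrase this through the $\mathcal{D}$-cyclical monotonicity of the support of the optimal plan (Santambrogio, Theorem 1.38): the support $\{(p_S^j, p_T^{\sigma^\star(j)})\}$ being cyclically monotone forces $\sigma^\star$ to minimize the assignment cost, and the hypothesis identifies this minimizer with the identity.

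The step that requires the most care is the possibility of ties: if the inequality in the hypothesis is not strict, the assignment problem may admit several optimal permutations in addition to the identity, so that ``solving the OT problem'' only guarantees the correct matching is \emph{among} the minimizers rather than the unique one. I would resolve this by observing that any tying permutation achieves the same total cost as the identity, so it can only interchange class-conditionals that are exactly equidistant under $\mathcal{D}$; when the hypothesis holds strictly for all $\sigma \neq \mathrm{id}$, uniqueness of the identity as minimizer follows and the matching is unambiguous.
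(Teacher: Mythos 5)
Your proof is correct and follows essentially the same route as the paper's: reduce the OT problem with uniform marginals to a linear assignment problem via Birkhoff's theorem on the extremal points of the transport polytope, then use the $\mathcal{D}$-cyclical monotonicity hypothesis (equivalently, Santambrogio's Theorem 1.38 on the support of the optimal plan) to identify the optimal permutation with the identity. Your closing observation about ties is a refinement the paper's proof glosses over --- under the stated non-strict inequality the identity is only guaranteed to be \emph{among} the optimal permutations, so strictness for all $\sigma \neq \mathrm{id}$ is indeed what makes the recovered matching unambiguous.
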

\begin{proof}
	The solution $\P^*$ of the OT problem lies on an extremal point of $\Pi_C$. Birkhoff's theorem
	\cite{birkhoff:1946} states that the set of extremal points of $\Pi_C$ is the set of permutation 
	matrices so that there exists an optimal solution of the form $\sigma^* :  [1,\cdots,C] \rightarrow [1,\cdots,C]$. 
	The support of $\P^*$ is $\mathcal{D}$-cyclically monotone~\citep{ambrosio2013user,santambrogio2015optimal} (Theorem 1.38), meaning that 
	$ \sum_j^C \mathcal{D} (p_S^j,p_T^{\sigma^*(j)}) \leq  \sum_j^C \mathcal{D} (p_S^j,p_T^{\sigma(j)}), \forall \sigma \neq \sigma^*.$
						Then, by hypothesis, $\sigma^*$ can be identified to the identity permutation, and solving the optimal assignment problem 
	matches correctly class-conditional probabilities.
	\end{proof}

\subsection{Proposition 2 and its proof}

\begin{proposition}
	Denote as $\gamma$ 	the optimal coupling plan for distributions $\nu$ and $\mu$  with balanced class-conditionals such that  $ \nu = \frac{1}{C}\sum_{j=1}^{C}\delta_{p_S^j}$ and
	$ \mu = \frac{1}{C}\sum_{j=1}^{C}\delta_{p_T^j}$
	under assumptions given in  Proposition \ref{prop:dist}. 	Assume that the classes are ordered so that we have $\gamma= \frac{1}{C}
	\text{diag}(\1)$ then  $\gamma'=\text{diag}(\a)$ is also optimal for the transportation problem
	with  marginals $ \nu^\prime = \sum_{j=1}^{C} a_j \delta_{p_S^j}$ and
	$ \mu^\prime = \sum_{j=1}^{C} a_j \delta_{p_T^j}$, with $a_j > 0, \forall j$. In addition,
	if the Wasserstein distance between $\nu^\prime$ and $\mu^\prime$ is $0$, it implies
	that the distance between class-conditionals are all $0$.
\end{proposition}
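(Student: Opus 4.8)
The plan is to split the argument into two parts: first establish that the diagonal coupling $\gamma' = \text{diag}(\a)$ solves the reweighted transport problem between $\nu'$ and $\mu'$, and then read off the corollary about matching class-conditionals. Throughout I write $\mathcal{D}_{i,j} = \mathcal{D}(p_S^i,p_T^j)$ for the ground cost, so that the hypothesis inherited from Proposition~\ref{prop:dist} (that the classes are ordered so that $\gamma = \frac{1}{C}\text{diag}(\1)$ is optimal), together with cyclical monotonicity, says precisely that the identity permutation solves the assignment problem, i.e. $\sum_i \mathcal{D}_{i,i} \leq \sum_i \mathcal{D}_{i,\sigma(i)}$ for every permutation $\sigma$.

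First I would observe that $\gamma' = \text{diag}(\a)$ is feasible, since its row and column sums both equal $\a$, matching the marginals of $\nu'$ and $\mu'$. To prove optimality, the cleanest route is linear-programming duality for the assignment problem: because the identity is an optimal assignment for the cost $\mathcal{D}$ (with uniform weights), there exist dual potentials $u_i,v_j$ with $u_i + v_j \leq \mathcal{D}_{i,j}$ for all $i,j$ and $u_i + v_i = \mathcal{D}_{i,i}$. Then for any feasible $\Gamma \in \Pi(\a,\a)$,
\[
\langle \mathcal{D}, \Gamma\rangle \geq \sum_{i,j}(u_i+v_j)\Gamma_{i,j} = \sum_i u_i a_i + \sum_j v_j a_j = \sum_i (u_i+v_i) a_i = \sum_i \mathcal{D}_{i,i} a_i = \langle \mathcal{D}, \gamma'\rangle,
\]
so $\gamma'$ is optimal. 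Alternatively, and closer to Birkhoff's theorem, I could argue with feasible directions: from $\gamma'$ each displacement $\Delta_\sigma = \P_\sigma - \I$ preserves the marginals and stays non-negative for small steps, and cyclical monotonicity gives $\langle \mathcal{D}, \Delta_\sigma\rangle \geq 0$; since any $\Gamma \in \Pi(\a,\a)$ is reachable from the diagonal vertex through a non-negative combination of such displacements, no move decreases the cost.

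The second part is then immediate. As $\gamma'$ is optimal, the transport cost equals $W(\nu',\mu') = \langle \mathcal{D}, \gamma'\rangle = \sum_i a_i\, \mathcal{D}_{i,i}$. If this vanishes, then, since every $a_i > 0$ and $\mathcal{D}_{i,i} = \mathcal{D}(p_S^i,p_T^i) \geq 0$, each summand must be zero, so $\mathcal{D}(p_S^i,p_T^i) = 0$ for all $i$; because $\mathcal{D}$ is a genuine distance on probability measures, this forces $p_S^i = p_T^i$, i.e. the class-conditionals coincide.

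The main obstacle is the optimality claim of the first part: one must transfer the optimality of the uniform-weight assignment to the non-uniform diagonal plan $\text{diag}(\a)$. The delicate point in the feasible-direction version is justifying that the whole polytope $\Pi(\a,\a)$ is generated from the diagonal vertex by the permutation displacements $\Delta_\sigma$; phrasing it instead through dual potentials (which exist exactly because the identity assignment is optimal) reduces the inequality to a one-line application of weak duality. Care is also needed to confirm that the cyclical-monotonicity hypothesis of Proposition~\ref{prop:dist} is precisely what certifies the identity assignment as optimal, and hence supplies the required potentials.
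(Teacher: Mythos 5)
Your proof is correct, and your primary route is genuinely different from the paper's. The paper argues via feasible directions: starting from $\gamma' = \text{diag}(\a)$, each displacement $\Delta_\sigma = -\I + \P_\sigma$ preserves the marginals and, by cyclical monotonicity, cannot decrease the cost; it then asserts that any non-diagonal plan in $\Pi(\a,\a)$ is reachable as a sum of such displacements, attributing this to a ``property of the Birkhoff polytope.'' That last step is precisely the delicate point you flagged: $\Pi(\a,\a)$ with non-uniform $\a$ is \emph{not} the Birkhoff polytope (its vertices are generally not scaled permutation matrices), so the reachability claim really requires an extra argument, e.g.\ decomposing $\Gamma - \text{diag}(\a)$ into nonnegative combinations of cycle matrices and invoking the cyclic form of $\mathcal{D}$-monotonicity on each cycle. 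Your weak-duality argument sidesteps this entirely: the dual potentials $u_i, v_j$ certifying optimality of the identity assignment in the uniform problem remain dual-feasible regardless of the marginals (feasibility $u_i + v_j \leq \mathcal{D}_{i,j}$ does not involve $\a$), and complementary slackness on the diagonal support gives $u_i + v_i = \mathcal{D}_{i,i}$, so $\langle \mathcal{D}, \Gamma\rangle \geq \sum_i a_i \mathcal{D}_{i,i} = \langle \mathcal{D}, \gamma'\rangle$ for every $\Gamma \in \Pi(\a,\a)$ in one line. This buys rigor and brevity where the paper's geometric argument is more intuitive but has a gap as written; your concluding corollary ($\sum_i a_i \mathcal{D}_{i,i} = 0$ with $a_i > 0$ forces $\mathcal{D}_{i,i} = 0$) coincides with the paper's.
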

\begin{proof}
		By assumption and without loss of generality, the
	class-conditionals are arranged so that  $\gamma = \frac{1}{C}
	\text{diag}(\1)$.
	Because the weights in the marginals are
	not uniform anymore, $\gamma$  is not a feasible solution for the OT problem with 
	$\nu^\prime$ and  $\mu^\prime$
	but $\gamma^\prime = \text{diag}(\a)$ is. 
	Let us now show that any feasible non-diagonal plan $\Gamma$  has higher cost than
	$\gamma^\prime$ and thus is not optimal. At first, consider any permutation $\sigma$ of $C$ elements and its corresponding permutation matrix $\P_\sigma$, because $\gamma = \frac{1}{C}
	\text{diag}(\1)$ is optimal, the cyclical monotonicity relation $\sum_i \mathcal{D}_{i,i}\leq \sum_i \mathcal{D}_{i,\sigma(i)}$  holds true $\forall \sigma$. 
	Starting from $\gamma^\prime = \text{diag}(\a)$, any direction $\Delta_\sigma= -\I+\P_\sigma$ is a feasible direction (it does not violate the marginal constraints) and due to the  cyclical monotonicity, any move in this direction will increase the cost. Since any non-diagonal $\gamma_ z\in\Pi(\a,\a)$ can be reached with a sum of displacements $\Delta_\sigma$ (property of the Birkhoff polytope) it means that the transport cost induced by  $\gamma_z$ will always be greater or equal to the cost for the diagonal $\gamma^\prime$ implying that $\gamma^\prime$ is the solution of the OT problem with marginals $\a$.\\
	As a corollary, it is straightforward to show that 
	$
	W(\nu^\prime, \mu^\prime) = \sum_{i=1}^C \mathcal{D}_{i,i} a_i = 0 \implies \mathcal{D}_{i,i} = 0
	$
	as $a_i >0$ by hypothesis.
\end{proof}

\section{Experimental Results}
\label{sec:experimental}
\subsection{Dataset details}
\label{sec:data}
We have considered $4$ family of domain adaptation problems based on 
the digits, Visda, Office-31 and Office-Home dataset. For all these datasets,
we have not considered the natural train/test number of examples, in order to 
be able to build different label distributions at constant number of examples (suppose one class has at most 800 examples, if we want that class to represent $80\%$ of the samples, then we are limited to
 $1000$ samples).

For the digits problem, We have used the MNIST, USPS and the MNITSM datasets.
we have learned the feature extractor from scratch and considered the following
train-test number of examples setting. For MNIST-USPS, USPS-MNIST and MNIST-MNISTM, we have respectively used 60000-3000, 7291-10000, 10000-10000.

The VisDA 2017 problem is a $12$-class classification problem with source and target domain being simulated and real images. We have considerd two sets of problem, a 3-class one (based on the classes \emph{aeroplane, horse} and \emph{truck}) and the full 12-class problem.

The Office-31 is an object categorization problem involving $31$ classes with a total of 4652 samples. There exists $3$ domains in the problem based on
the source of the images : Amazon (A), DSLR (D) and WebCam (W). We have considered all possible pairwise source-target domains. 

The Office-Home is another object categorization problem involving $65$ classes with a total of 15500 samples. There exists $4$ domains in the problem based on the source of the images : Art, Product, Clipart (Clip), Realworld (Real).

For the Visda and Office datasets, we have considered Imagenet pre-trained ResNet-50 features and our feature extractor (which is a fully-connected feedforword networks) aims at adapting those features. We have used pre-trained features
freely available at \url{https://github.com/jindongwang/transferlearning/blob/master/data/dataset.md}.

\subsection{Architecture details}
\label{sec:architect}
\paragraph{Toy} The feature extractor is a 2 layer fully connected network with $200$ units and ReLU activation function. The classifier is also a $2$ layer fully connected network with same number of units and activation function. Discriminators have 3 layers with same number of units. 

\paragraph{Digits} For the MNIST-USPS problem, the architecture of our feature extractor is composed of the two CNN layers with 32 and 20 filters of size $5\times5$ and 2-layer fully connected networks as discriminators with $100$ and $10$ units. The feature extractor uses a ReLU activation function
and a max pooling. For he MNIST-MNISTM adaptation problem we have used the same feature extractor network and discriminators as in \cite{pmlr-v37-ganin15}.

 \paragraph{VisDA} For the VisDA dataset, we have considered pre-trained 2048 features obtained from a ResNet-50 followed by $2$ fully connected networks with $100$ units and ReLU activations. The latent space is thus of dimension $100$. Discriminators and classifiers are also a $2$ layer Fully connected networks with $100$ and respectively 1 and "number of class" units.
 
  \paragraph{Office} For the office datasets, we have considered pre-trained 2048 features obtained from a ResNet-50 followed by two fully connected networks with output of $100$ and $50$ units and ReLU activations. The latent space is thus of dimension $50$. Discriminators and classifiers are also a $2$ layer fully connected networks with $50$ and respectively 1 and "number of class" units.
 
  For Digits and VisDA and Office applications, all models have been trained using ADAM for $100$ iterations with validated learning rate, while for the toy problem, we have used a SGD.

\subsection{Other things we have tried}
\rev{
\begin{itemize}
	\item For estimating the mixture proportion of each component $\{p_T^i\}$, we have proposed
	two clustering algorithms, one based on Gaussian mixture model and another based on agglomerative clustering. We have also tried K-means algorithm but finally opted for the agglomerative clustering as it does not
	need specific initializations (and thus is robust to it). Our early experiments also showed that it provided slightly better performances than K-means. \\
\end{itemize}
}
\begin{figure*}[t]
	\begin{center}
			~\hfill
		\includegraphics[width=4.cm]{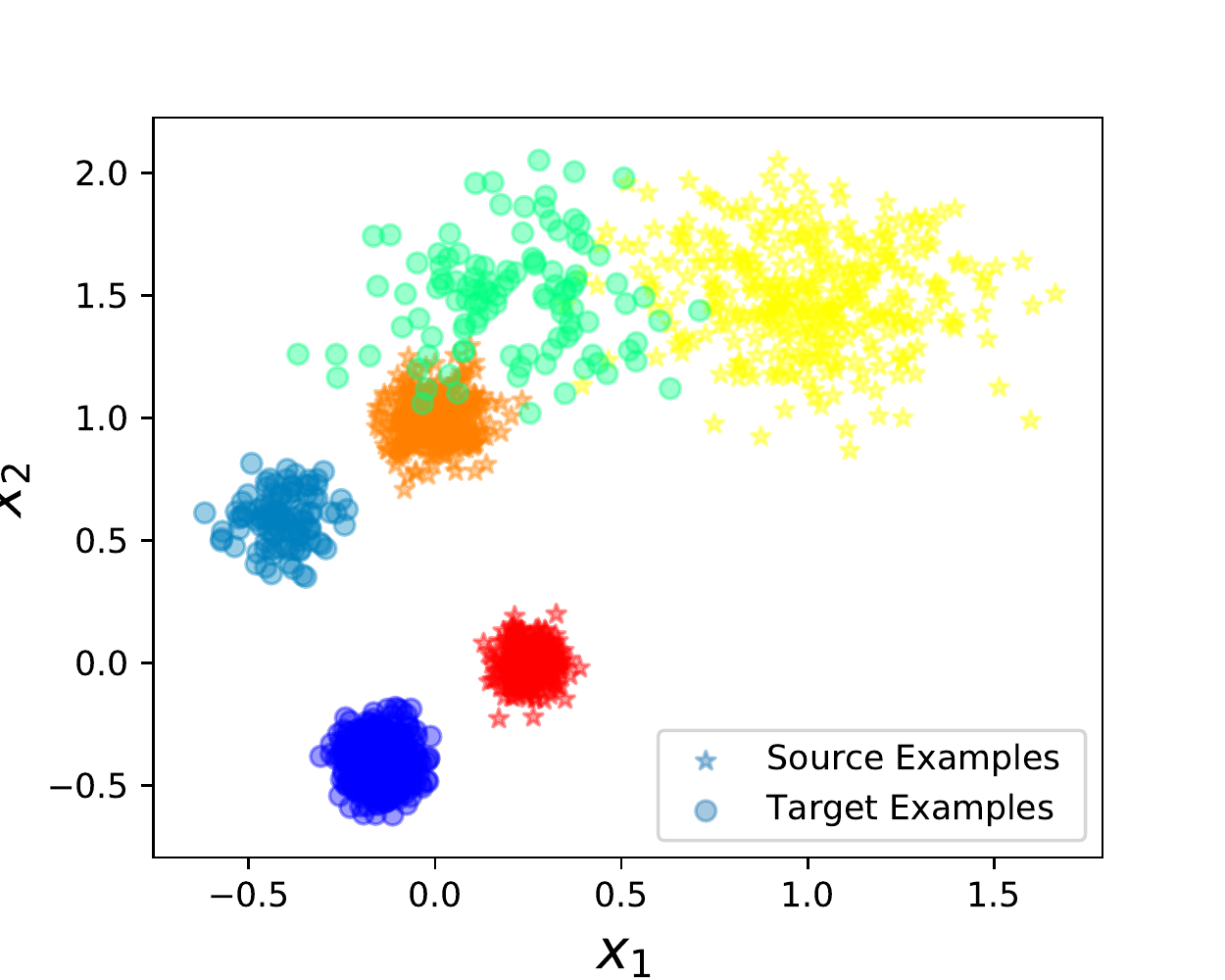}~\hfill~
		\includegraphics[width=4.cm]{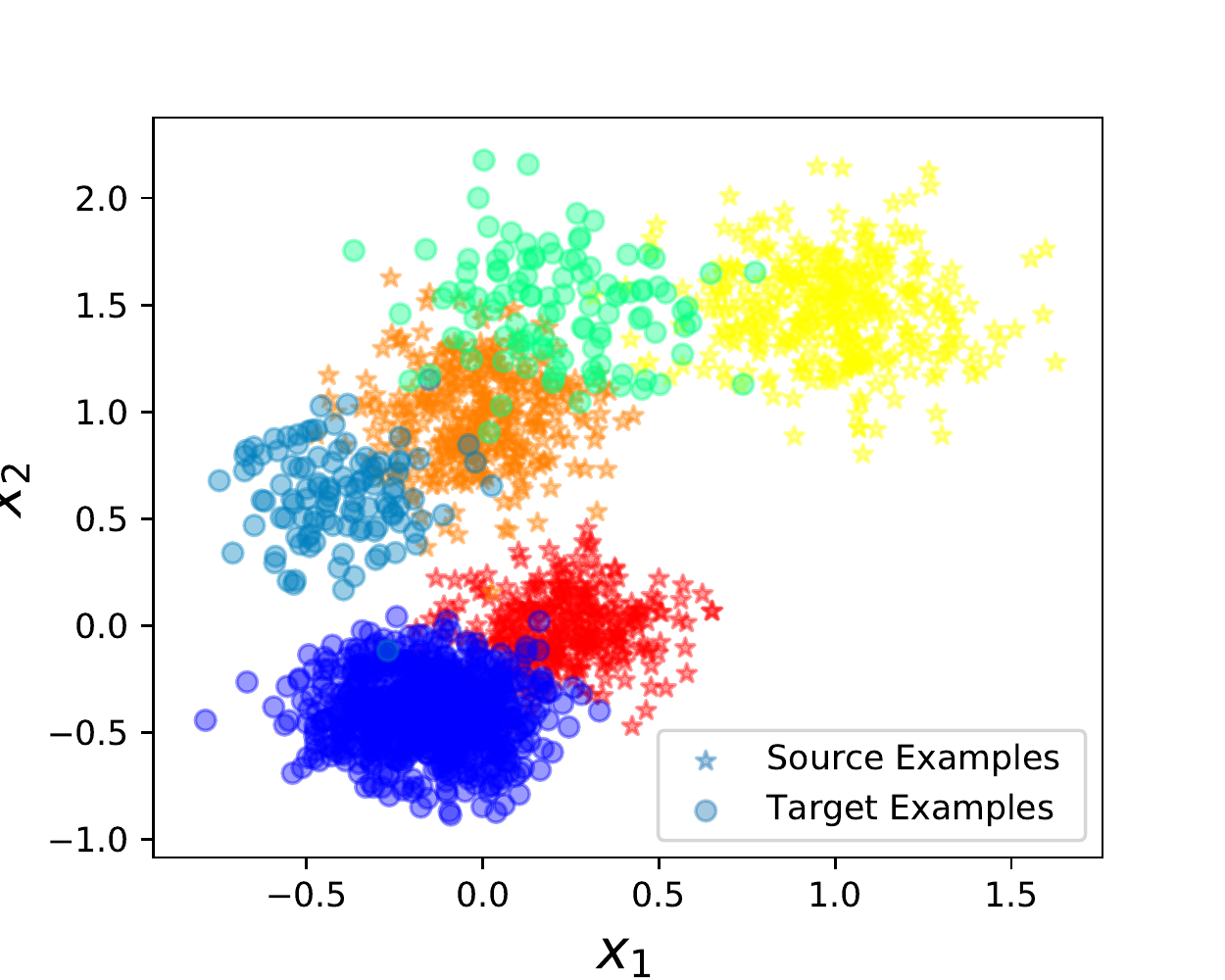}~\hfill~
		~\hfill~\includegraphics[width=4.cm]{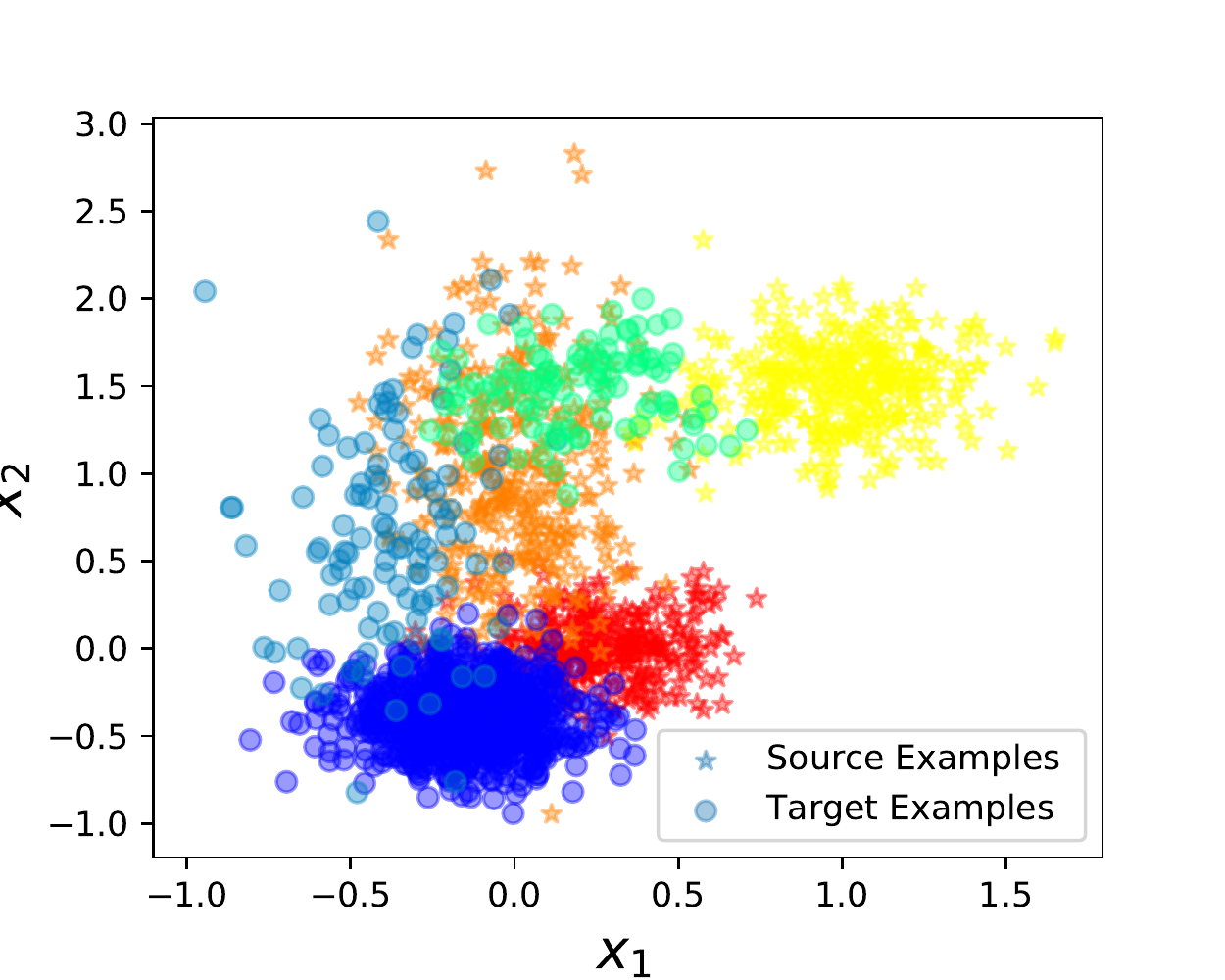}~\hfill~
			\caption{Examples of source and target domain examples. For each domain, data are composed of three Gaussians defining each class. In the source domain, classes are balanced whereas in the target domain, we have a ratio of $0.8,0.1, 0.1$. The three
			configurations presented here vary in their covariance matrices. From left to right,
			we have Gaussians that are larger and larger making them difficult to classify.
		In the most right examples, the second class of the source domain and the third one of the target domain are mixed. This region becomes indecidable for our model as the 
		source loss want to classify it as "Class 2" while the Wasserstein distance want to match it with "Class 3" of the source domain.\label{fig:exampletoydata}}
	\end{center}
\end{figure*}

\begin{figure*}[t]
	\begin{center}
		~\hfill
		\includegraphics[width=4.5cm]{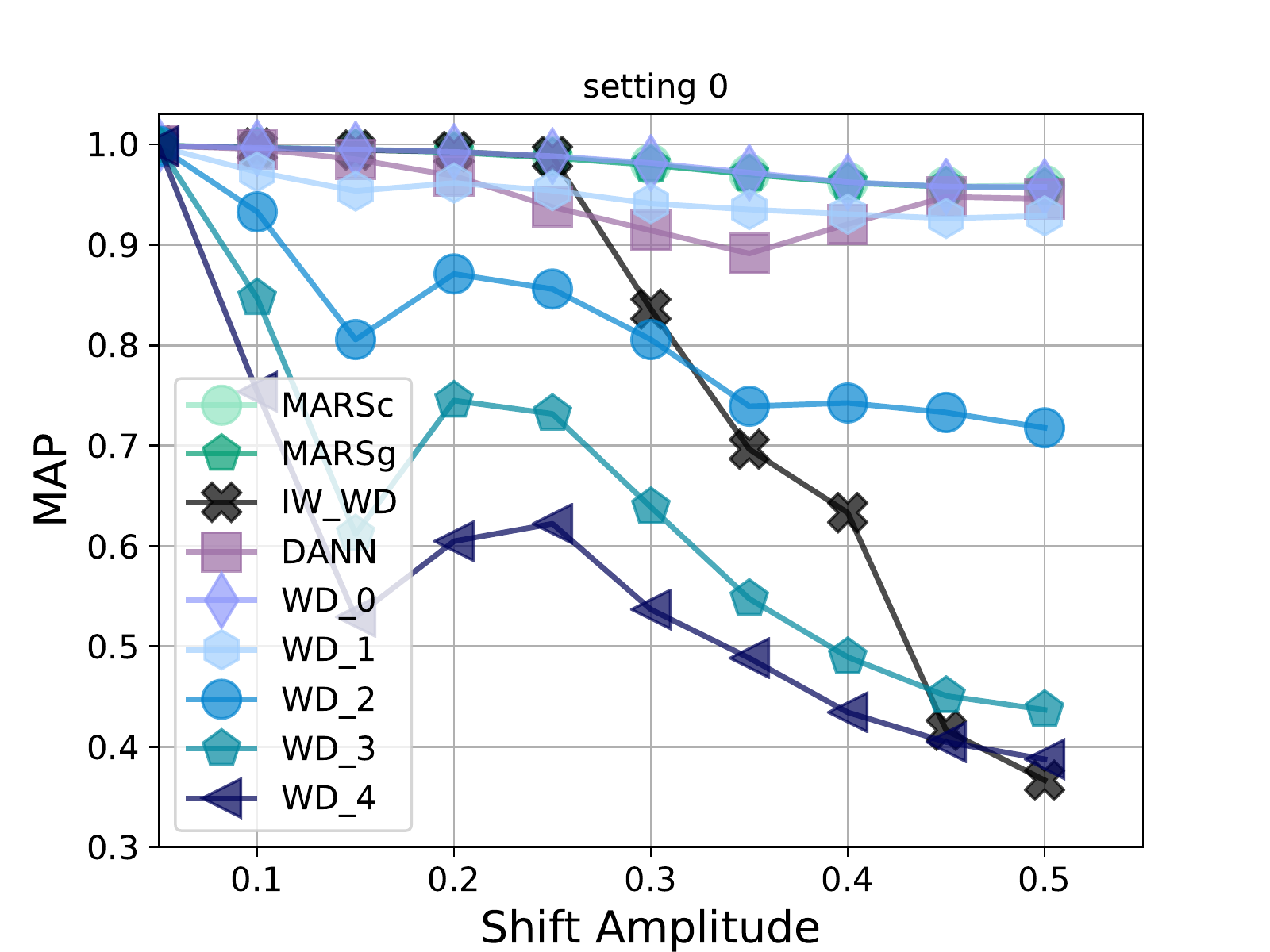}~\hfill~
		\includegraphics[width=4.5cm]{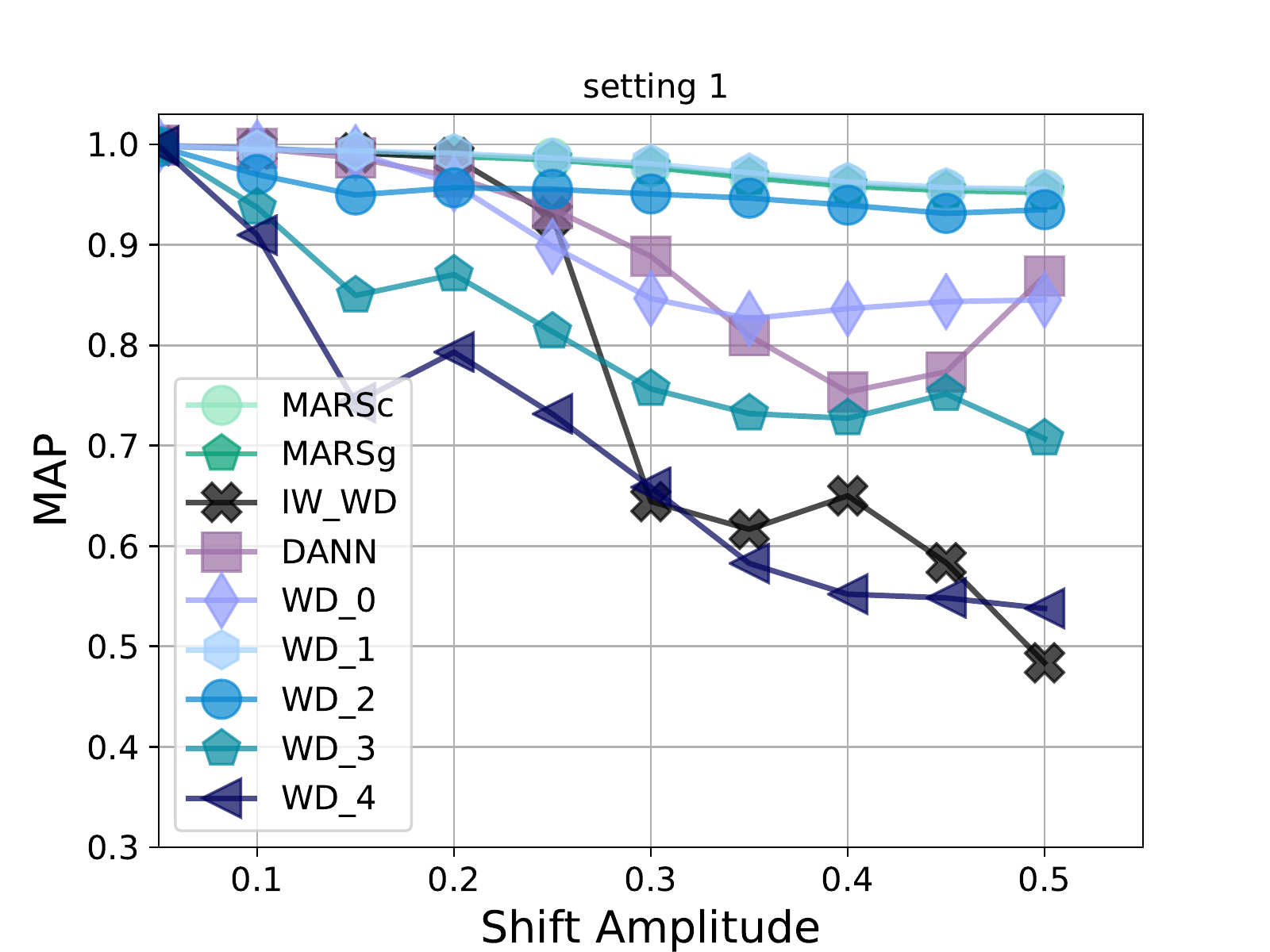}~
		~\hfill~\includegraphics[width=4.5cm]{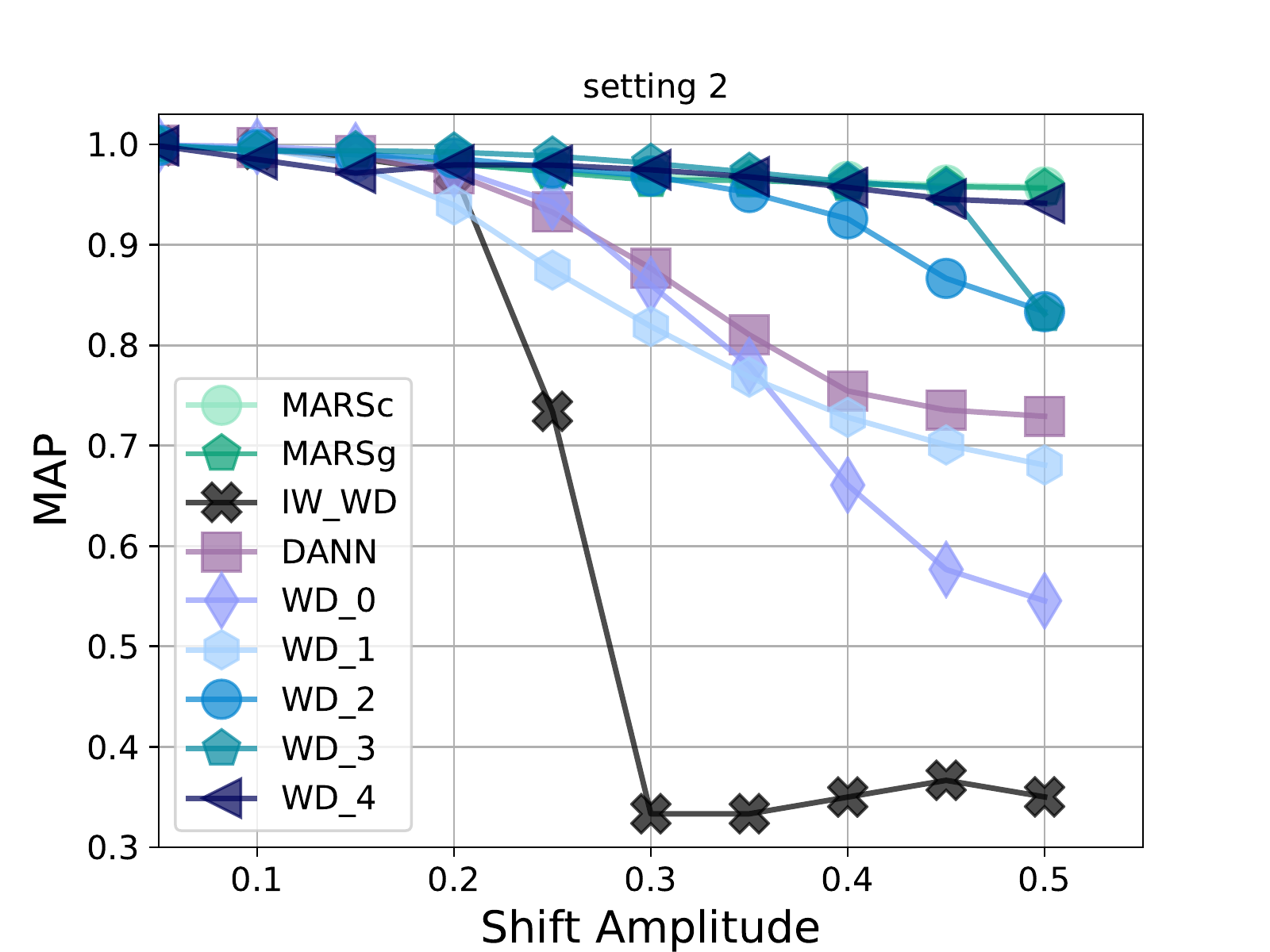}~\hfill~
		
		\caption{Performance of the compared algorithms in different label shift  setting and for increasing shift between means of class-conditionals.  In source domain, label distributions are uniform and shift occurs due to change only in the target domain.  (left) $p_T(y=1) = 0.33$, $p_T(y=2)=0.33$, $p_T(y=3)=0.34$.
			(middle) $p_T(y=1) = 0.5$, $p_T(y=2)=0.2$, $p_T(y=3)=0.2$, (right) $p_T(y=1) = 0.8$, $p_T(y=2)=0.1$, $p_T(y=3)=0.1$.
			 For balanced problems, we note
			that best methods are $\text{WD}_{\beta=\{0,1\}}$, DANN and our approaches 			either using GMM or clustering for estimating label proportion. As expected,
			a too heavy reweighting yields to poor performance for $\text{WD}_{\beta=\{2,3,4\}}$. 			Then for a mild imbalance, $\text{WD}_{\beta=\{1,2\}}$ performs better than the other competitors
			while for higher imbalance,  $\text{WD}_{\beta=\{3,4\}}$ works better. 
			For all settings, our methods are competitive as they are adaptive to the imbalance through the estimation fo $p_T(y)$. The IW-WD of \citet{combes2020domain} performs well until the distance between class-conditionals is too large. This is justified by theory as their estimator of the ratio $p_T(y)/p_S(y)$ is tailored for situations where class-conditionals are equal.
			 \label{fig:toyshift}}
	\end{center}
\end{figure*}
\begin{figure*}[t]
	\begin{center}
		~\hfill
		\includegraphics[width=4.5cm]{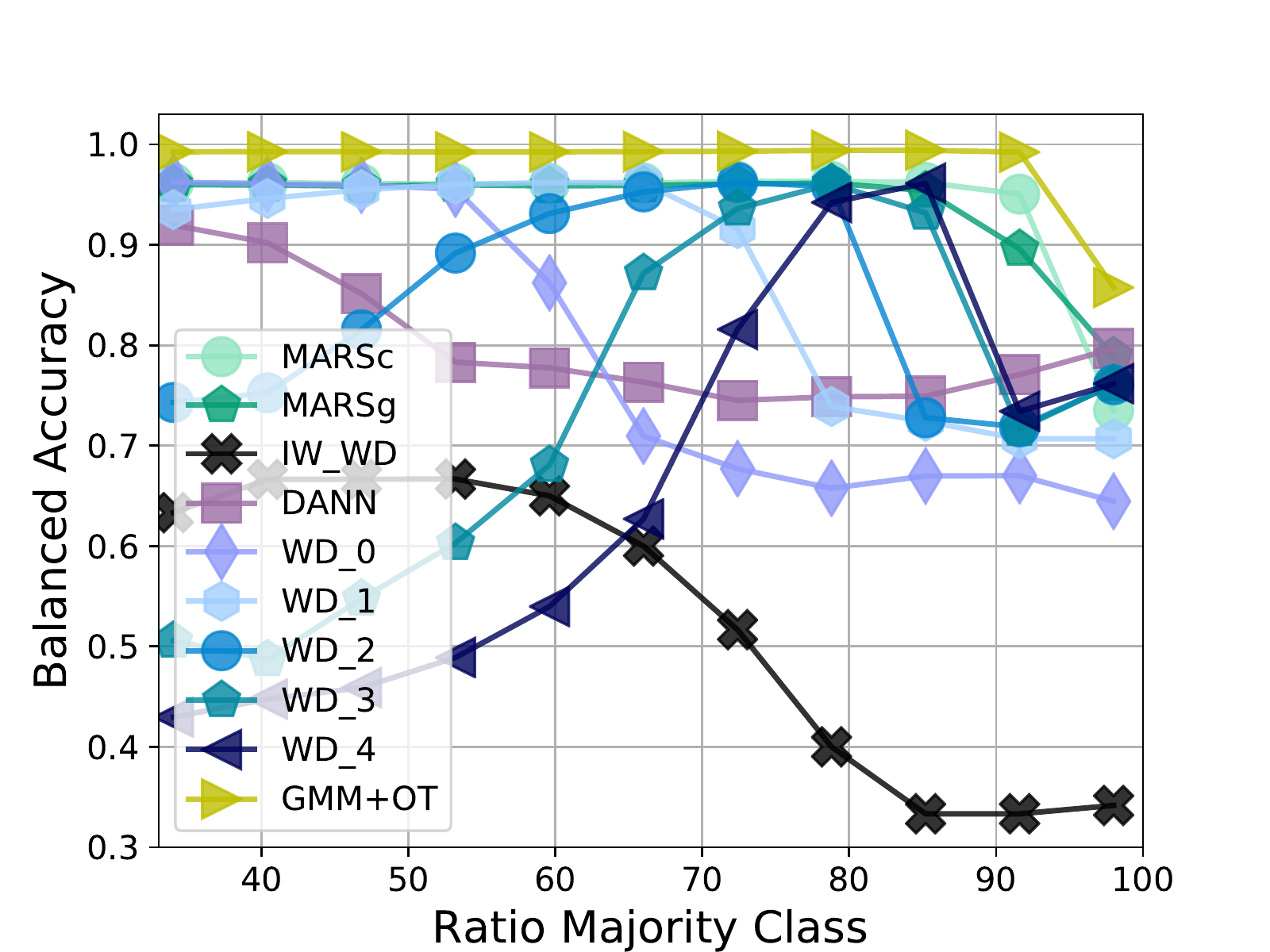}~\hfill~
		\includegraphics[width=4.5cm]{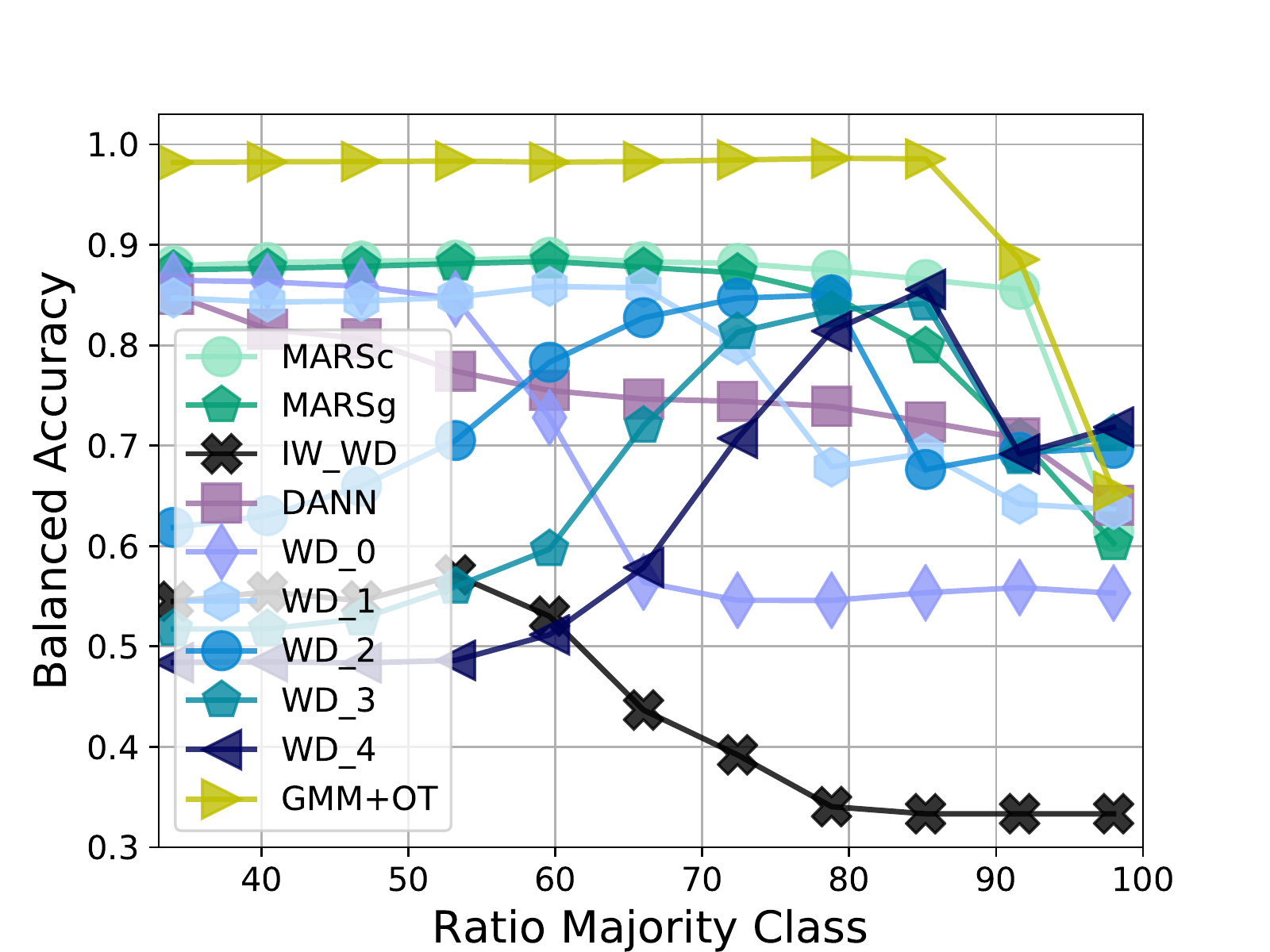}~\hfill~ 
		~\hfill~\includegraphics[width=4.5cm]{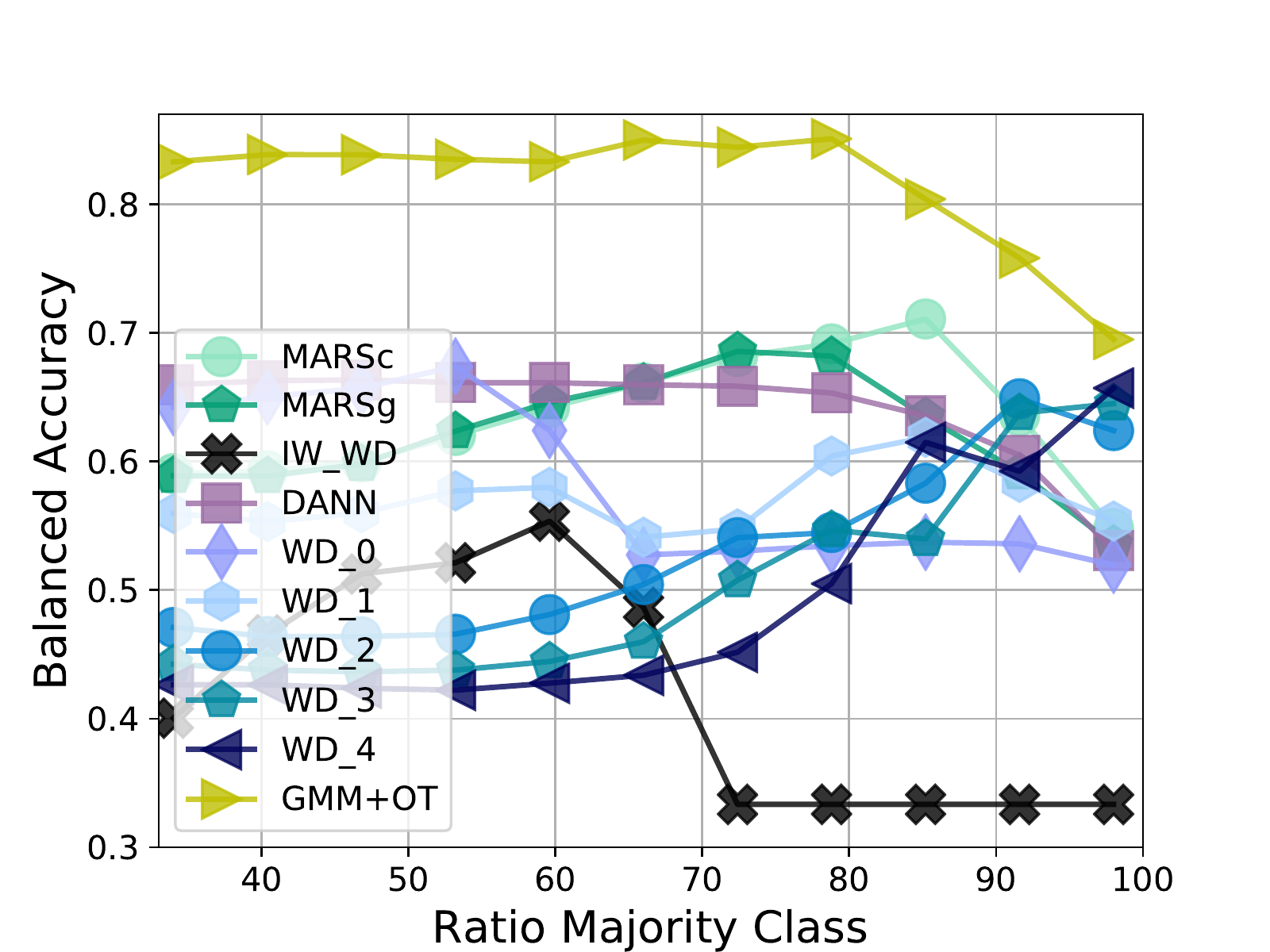}
		\hfill~
		\caption{Performance of the compared algorithms, including \textbf{GMM+OT} for three different
			covariance matrices of the Gaussians composing the toy dataset with respect to the imbalance. The shift between the class-conditionals has been fixed and yields to samples  similar to those presented in Figure \ref{fig:exampletoydata}. Our method is referred as \textbf{MARS}. The x-axis is given with respect to the ratio of majority class which is
			the class $1$.	(left) Low-error setting. (middle) mid-error setting.
			(right) high-error setting. material.
			We note that this toy problem can be easily solved using a GMM and a optimal transport-based label assignment. We can also remark that again as soon as the class-conditionals do not match anymore, the IW-WD of \citet{combes2020domain} fails due to its inability to estimate correctly the importance weight $w$.
			\label{fig:toyimbalancegmm}.}
	\end{center}
\end{figure*}

\begin{table}
	\caption{Comparing Source-Only model and GMM+OT approach on the
		VisDA-3-mode problems. We can note that for these problems where the latent space is of dimension $100$, the GMM+OT compares poorly
		to Source-Only. In addition, we can note that there is very high variability in the performance. \label{tab:resultsgmm}}
	\begin{center}
		\begin{tabular}{lcc} 
			\hline
			Configuration  & Source   & GMM+OT  \\\hline 
			Setting 1 &	 79.3$\pm$4.3		& 81.2$\pm$4.7 \\
			Setting 4 	 &	80.2$\pm$5.3		& 76.3$\pm$9.8 \\
			Setting 2	 &	81.5$\pm$3.5		& 74.8$\pm$10.4 \\
			Setting	3 &	78.4$\pm$3.2		& 70.0$\pm$10.8 \\
			Setting  5 &	83.5$\pm$ 3.5		& 77.0$\pm$10.4  \\
			Setting  6 &	80.8$\pm$4.2		& 72.9$\pm$10.2 \\
			Setting  7 &	79.2$\pm$3.7		& 69.5$\pm$9.8 \\
			
			\hline
		\end{tabular}
	\end{center}
\end{table}

\begin{table*}
		\small
	\caption{Table of the dataset experimental settings. We have considered different domain adaptation problems and different configurations of the label shift in the source and target domain. For the digits and VisDA problem, we provide the ratio of samples of classes for each problem (\eg for the third setting of VisDA-3 problem, the second class accounts for the 70\% of the samples in target domain). For Office datasets, because of large amount of classes, we have changed percent of samples of that class in the source or target (\eg the 10-class in Office Home uses respectively 20\% and 100\% of its sample  for the source and target domain).   		
		 \label{tab:dataconfig}}
	\vspace{0.5cm}

	\begin{tabular}{lcc} 
		\hline
		Configuration     & Proportion Source & Proportion Target \\\hline 
		MNIST-USPS balanced    & \{$\frac{1}{10},\cdots, \frac{1}{10}$\} & \{$\frac{1}{10},\cdots, \frac{1}{10}$\}  \\
		MNIST-USPS mid   & \{$\frac{1}{10},\cdots, \frac{1}{10}$\} & $\{0,\cdots,3,6\}=0.02, \{4,5\}=0.02, \{7,8,9\}=0.1$ \\
		MNIST-USPS high    & \{$\frac{1}{10},\cdots, \frac{1}{10}$\} & $\{0\}=0.3665,\{1\}=0.3651 ,\{2,\cdots\}=0.0335$  \\
		\hline
		USPS-MNIST balanced    & \{$\frac{1}{10},\cdots, \frac{1}{10}$\} & \{$\frac{1}{10},\cdots, \frac{1}{10}$\}  \\
		USPS-MNIST mid    & \{$\frac{1}{10},\cdots, \frac{1}{10}$\} & $\{0,\cdots,3,6\}=0.02, \{4,5\}=0.02, \{7,8,9\}=0.1$ \\
		USPS-MNIST high   & \{$\frac{1}{10},\cdots, \frac{1}{10}$\} & $\{0\}=0.3665,\{1\}=0.3651,\{2,\cdots\}=0.0335$  \\
		\hline
		
		MNIST-MNISTM (1)   & $\{0-4\}=0.05, \{5-9\}=0.15$& $\{0,\cdots,3,6\}=0.02, \{4,5\}=0.02, \{7,8,9\}=0.1$ \\
		MNIST-MNISTM (2) 	  & $\{0-2\}=0.26, \{3-9\}=0.03$& $\{0-6\}=0.03, \{7-9\}=0.26$  \\
		MNIST-MNISTM (3)   & $\{0-5\}=0.05, \{6-9\}=0.175$& $\{0-3\}=0.175, \{4-9\}=0.05$  \\\hline
		VisDA-3 (1)  & \{0.33,0.33,0.34\} & \{0.33,0.33,0.34\} \\
		VisDA-3 (2) 		 & \{0.4,0.2,0.4\} & \{0.2,0.6,0.2\} \\
		VisDA-3 (3) 		 & \{0.4,0.2,0.4\} & \{0.15,0.7,0.15\} \\
		VisDA-3 (4)		 & \{0.4,0.2,0.4\} & \{0.1,0.8,0.1\} \\
		VisDA-3 (5)		 & \{0.6,0.2,0.2\} & \{0.2,0.2,0.6\}\\ 
		VisDA-3  (6)  & \{0.6,0.2,0.2\} & \{0.15,0.2,0.65\} \\
		VisDA-3 (7)	 & \{0.6,0.2,0.2\} & \{0.2,0.65,0.15\} 
		
		\\\hline
		VisDA-12 (1)   & \{$\frac{1}{12},\cdots, \frac{1}{12}$\} & \{$\frac{1}{12},\cdots, \frac{1}{12}$\}  \\
		VisDA-12 (2) 	 & \{$\frac{1}{12},\cdots, \frac{1}{12}$\} & $\{0-3\}=0.15, \{4-11\}=0.05$  \\
		VisDA-12 (3) 	 & \{$\frac{1}{12},\cdots, \frac{1}{12}$\} & $\{0-1\}=0.2,\{2-5\}=0.1, \{6-11\}=0.03$  \\\hline
		Office-31  & $\{0-15\}: 30\%$ $\{15-31\}: 80\%$ &  $\{0-15\}: 80\%$ $\{15-31\}: 30\%$ \\
		Office-Home  & $\{0-32\}: 20\%$ $\{33-65\}: 100\%$ &  $\{0-32\}: 100\%$ $\{33-65\}: 20\%$ \\\hline				
	\end{tabular}
\end{table*}

\end{document}